\definecolor{plum}{rgb}{.4,0,.4}
\definecolor{BrickRed}{rgb}{0.6,0,0}
\numberwithin{equation}{section}
\newtheorem{theorem}{Theorem}
\newtheorem{corollary}[theorem]{Corollary}
\newtheorem{lemma}[theorem]{Lemma}
\newtheorem{assumption}{Assumption}
\newtheorem{proposition}[theorem]{Proposition}
\newtheorem{definition}[theorem]{Definition}
\newtheorem{remark}[theorem]{Remark}
\numberwithin{theorem}{section}
\def\ddefloop#1{\ifx\ddefloop#1\else\ddef{#1}\expandafter\ddefloop\fi}
\def\ddef#1{\expandafter\def\csname c#1\endcsname{\ensuremath{\mathcal{#1}}}}
\def\ddef#1{\expandafter\def\csname s#1\endcsname{\ensuremath{\mathsf{#1}}}}
\def\PP{\mathbf{P}}
\def\K{\mathbf{K}}
\def\g{\mathbf{g}}
\def\t{\pmb{\theta}}
\newcommand{\Var}{\mathrm{\bf Var}}
\newcommand{\norm}[1]{\left\| #1 \right\|}
\newcommand{\vol}{\mathrm{vol}}
\newcommand{\EE}{\mathbb E}
\def\I{\mathbf{I}}
\begin{document}

\title{Mehler's Formula, Branching Process, and Compositional Kernels of Deep Neural Networks}

\author{Tengyuan Liang\thanks{{\tt tengyuan.liang@chicagobooth.edu}}~}
\author{Hai Tran-Bach\thanks{{\tt tranbach@uchicago.edu}}~}
\affil{University of Chicago}

\maketitle
\thispagestyle{empty}
\maketitle

\begin{abstract}
	We utilize a connection between compositional kernels and branching processes via Mehler's formula to study deep neural networks. This new probabilistic insight provides us a novel perspective on the mathematical role of activation functions in compositional neural networks. We study the unscaled and rescaled limits of the compositional kernels and explore the different phases of the limiting behavior, as the compositional depth increases. We investigate the memorization capacity of the compositional kernels and neural networks by characterizing the interplay among compositional depth, sample size, dimensionality, and non-linearity of the activation. Explicit formulas on the eigenvalues of the compositional kernel are provided, which quantify the complexity of the corresponding reproducing kernel Hilbert space. On the methodological front, we propose a new random features algorithm, which compresses the compositional layers by devising a new activation function.

\end{abstract}

\section{Introduction}

Kernel methods and deep neural networks are arguably two representative methods that achieved the state-of-the-art results in regression and classification tasks \citep{shankar2020neural}. However, unlike the kernel methods where both the statistical and computational aspects of learning have been understood reasonably well, there are still many theoretical puzzles around the generalization, computation and representation aspects of deep neural networks \citep{ZBH+17}. One hopeful direction to resolve some of the puzzles in neural networks is through the lens of kernels \citep{RR08, RR09,CS09,BMM18}. Such a connection can be readily observed in a two-layer infinite-width network with random weights, see the pioneering work by \cite{Nea96} and \cite{RR08, RR09}. For deep networks with hierarchical structures and randomly initialized weights, compositional kernels \citep{DFS17,DFS17,NIPS2016_6322} are proposed to rigorously characterize such a connection, with promising empirical performances \citep{CS09}. A list of simple algebraic operations on kernels \citep{stitson1997anova,shankar2020neural} are introduced to incorporate specific data structures that contain bag-of-features, such as images and time series.

In this paper, we continue to study deep neural networks and their dual compositional kernels, furthering the aforementioned mathematical connection, based on the foundational work of \citep{RR08, RR09} and \citep{DFS17,DFGS17}. We focus on a standard multilayer perceptron architecture with Gaussian weights and study the role of the activation function and its effect on composition, data memorization, spectral properties, algorithms, among others. Our main results are based on a simple yet elegant connection between \textit{compositional kernels} and \textit{branching processes} via Mehler's formula (Lemma \ref{lem:key-duality}). This new connection, in turn, opens up the possibility of studying the mathematical role of activation functions in compositional deep neural networks, utilizing the probabilistic tools in branching processes (Theorem \ref{thm:rescaled-limit}). Specifically, the new probabilistic insight allows us to answer the following questions:

\noindent \textbf{Limits and phase transitions.} Given an activation function, one can define the corresponding compositional kernel \citep{DFS17, DFGS17}. How to classify the activation functions according to the limits of their dual compositional kernels, as the compositional depth increases? What properties of the activation functions govern the different phases of such limits? How do we properly rescale the compositional kernel such that there is a limit unique to the activation function? The above questions will be explored in Section~\ref{sec:compo-kernel-branching-proc}.

\noindent \textbf{Memorization capacity of compositions: tradeoffs.} Deep neural networks and kernel machines can have a good out-of-sample performance even in the interpolation regime \citep{ZBH+17, BMM18}, with perfect memorization of the training dataset. What is the memorization capacity of the compositional kernels? What are the tradeoffs among compositional depth, number of samples in the dataset, input dimensionality, and properties of the non-linear activation functions? Section~\ref{sec:nonasymptotic-memorization} studies such interplay explicitly.

\noindent \textbf{Spectral properties of compositional kernels.} Spectral properties of the kernel (and the corresponding integral operator) affect the statistical rate of convergence, for kernel regressions \citep{CV06}. What is the spectral decomposition of the compositional kernels? How do the eigenvalues of the compositional kernel depend on the activation function? Section~\ref{sec:spherical-harmonics} is devoted to answering the above questions.

\noindent \textbf{New randomized algorithms.} Given a compositional kernel with a finite depth associate with an activation, can we devise a new "compressed" activation and new randomized algorithms, such that the deep neural network (with random weights) with the original activation is equivalent to a shallow neural network with the "compressed" activation? Such algorithmic questions are closely related to the seminal Random Fourier Features (RFF) algorithm in \cite{RR08, RR09}, yet different.
 Section~\ref{sec:new-random-feature-alg} investigates such algorithmic questions by considering compositional kernels and, more broadly, the inner-product kernels. Differences to the RFF are also discussed in detail therein.

Borrowing the insight from branching process, we start with studying the role of activation function in the compositional kernel, memorization capacity, and spectral properties, and conclude with the converse question of designing new activations and random nonlinear features algorithm based on kernels, thus contributing to a strengthened mathematical understanding of activation functions, compositional kernel classes, and deep neural networks.

\subsection{Related Work}

The connections between neural networks (with random weights) and kernel methods have been formalized by researchers using different mathematical languages. Instead of aiming to provide a complete list, here we only highlight a few that directly motivate our work. \cite{Nea96a, Nea96} advocated using Gaussian processes to characterize the neural networks with random weights from a Bayesian viewpoint. For two-layer neural networks, such correspondence has been strengthened mathematically by the work of \cite{RR08, RR09}. By Bochner's Theorem, \cite{RR08} showed that any positive definite translation-invariant kernel could be realized by a two-layer neural network with a specific distribution on the weights, via trigonometric activations. Such insights also motivated the well-known random features algorithm, random kitchen sinks \citep{RR09}. One highlight of such an algorithm is that in the first layer of weights, sampling is employed to replace the optimization. Later, several works extended along the line, see, for instance, \cite{KK12} on the rotation-invariant kernels, \cite{PYK15} on the polynomial kernels, and \cite{Bac16} on kernels associated to ReLU-like activations (using spherical harmonics). Recently, \cite{MM19} investigated the precise asymptotics of the random features model using random matrix theory. For deep neural networks, compositional kernels are proposed to carry such connections further. \cite{CS09} introduced the compositional kernel as the inner-product of compositional features. \cite{DFS17, DFGS17} described the compositional kernel through the language of the computational skeleton, and introduced the duality between the activation function and compositional kernel. We refer the readers to \cite{NIPS2016_6322, yang2019scaling,shankar2020neural} for more information on the connection between kernels and neural networks.

One might argue that neural networks with static random weights may not fully explain the success of neural networks, noticing that the evolution of the weights during training is yet another critical component.
On this front, \cite{CB18a, MMN18, SS18, RV18} employed the mean-field characterization to describe the distribution dynamics of the weights, for two-layer networks. \cite{RV18, DL19} studied the favorable properties of the dynamic kernel due to the evolution of the weight distribution. \cite{nguyen2020rigorous} carried the mean-field analysis to multi-layer networks rigorously. On a different tread \citep{JGH19, DZPS18, CB18, WGS+19}, researchers showed that under specific scaling, training over-parametrized networks could be viewed as a kernel regression with perfect memorization of the training data, using a tangent kernel \citep{JGH19} built from a linearization around its initialization. For a more recent resemblance between the kernel learning and the deep learning on the empirical side, we refer the readers to \cite{BMM18}.

\section{Preliminary}

\paragraph{Mehler's formula.}
We will start with reviewing some essential background on the Hermite polynomials that is of direct relevance to our paper.
\begin{definition}[Hermite polynomials]
  \label{def:hermite}
  The probabilists' Hermite polynomials $He_k(x)$ for non-negative integers $k \in  \mathbb{Z}^{\geq 0}$ follows the recursive definition with $He_0(x) = 1$ and
  \begin{align}
    He_{k+1} (x) = x He_{k}(x) - He_{k}'(x) \enspace.
  \end{align}
  We define the normalized Hermite polynomials as
  \begin{align}
    h_k(x) := \frac{1}{\sqrt{k!}} He_k(x), ~~\text{with}~\EE_{\g \sim \cN(0, 1)}[ h_k^2(\g)] = 1 \enspace.
  \end{align}
  The set $\{ h_k\colon k\in\mathbb{Z}^{\geq 0}\}$ forms an orthogonal basis of $L^2_{\phi}$ under the Gaussian measure $\phi \sim \cN(0, 1)$ as 
  \begin{align}
    \EE_{\g \sim \cN(0, 1)}[ h_k(\g) h_{k'}(\g) ] = \mathbbm{1}_{k = k'} \enspace.
  \end{align}
\end{definition}

\begin{proposition}[Mehler's formula]
  \label{prop:mehler}
  Mehler's formula establishes the following equality on Hermite polynomials: for any $\rho \in (-1,1)$ and $x, y \in \mathbb{R}$
  \begin{align}
    \frac{1}{\sqrt{1-\rho^2}} \exp\left(-\frac{\rho^2(x^2+y^2) - 2\rho xy}{2(1-\rho^2)} \right) = \sum_{k=0}^{\infty} \rho^k h_{k}(x) h_{k}(y) \enspace.
  \end{align}
\end{proposition}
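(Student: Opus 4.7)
The plan is to recognize the left-hand side as the ratio $p_\rho(x,y)/[\phi(x)\phi(y)]$, where $p_\rho$ is the joint density of a centered bivariate Gaussian $(X,Y)$ with unit variances and correlation $\rho$, and $\phi$ is the standard Gaussian density, and then to obtain the right-hand side as the spectral expansion of this ratio, viewed as a kernel on $L^2(\phi\otimes\phi)$, in the Hermite basis $\{h_k \otimes h_l\}$.

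First I would verify the identification of the LHS. Writing down the textbook bivariate Gaussian density and dividing by $\phi(x)\phi(y)$, the quadratic contribution $(x^2+y^2)/(2(1-\rho^2)) - (x^2+y^2)/2$ simplifies to exactly $[\rho^2(x^2+y^2) - 2\rho xy]/[2(1-\rho^2)]$ inside the exponent, giving the LHS up to the $(1-\rho^2)^{-1/2}$ prefactor. Next, I would compute the bilinear Hermite moments $\EE[h_j(X) h_k(Y)]$ by combining the generating function
\begin{align*}
  \sum_{k\geq 0} \frac{t^k}{\sqrt{k!}}\, h_k(x) = e^{tx - t^2/2}
\end{align*}
with the bivariate Gaussian moment generating function $\EE[e^{tX + sY}] = \exp(\tfrac12(t^2 + 2\rho ts + s^2))$. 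Multiplying by $e^{-(t^2+s^2)/2}$ yields
\begin{align*}
  \EE\bigl[e^{tX-t^2/2}\, e^{sY-s^2/2}\bigr] = e^{\rho ts} = \sum_{k\geq 0} \frac{(\rho ts)^k}{k!}.
\end{align*}
Expanding the left side as a power series in $(t,s)$ via the Hermite generating function and matching coefficients of $t^j s^k$ delivers the key identity $\EE[h_j(X) h_k(Y)] = \rho^k \mathbbm{1}_{j=k}$.

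Finally, since $\{h_k\otimes h_l\}_{k,l\geq 0}$ is an orthonormal basis of $L^2(\phi\otimes\phi)$ and for $|\rho|<1$ the kernel $M_\rho(x,y) := p_\rho(x,y)/[\phi(x)\phi(y)]$ lies in $L^2(\phi\otimes\phi)$ (a direct Gaussian computation), I would expand
\begin{align*}
  M_\rho(x,y) = \sum_{k,l \geq 0} c_{k,l}\, h_k(x) h_l(y), \qquad c_{k,l} = \int\!\!\int h_k(x) h_l(y)\, p_\rho(x,y)\, dx\, dy = \rho^k \mathbbm{1}_{k=l},
\end{align*}
where the last equality uses the identity derived above. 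Combined with Step~1 this gives Mehler's formula in $L^2(\phi\otimes\phi)$.

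The main obstacle is upgrading this $L^2$ identity to the pointwise equality asserted in the proposition. I would handle this by exploiting the geometric factor $\rho^k$ together with the classical Cramér-type growth bound $|h_k(x)| \leq C (k+1)^{-1/4} e^{x^2/4}$ (or any polynomial-in-$k$ bound on compact sets), which makes $\sum_k \rho^k h_k(x) h_k(y)$ absolutely and uniformly convergent on compacta whenever $|\rho|<1$; continuity of both sides then promotes $L^2$ equality to pointwise equality on all of $\mathbb{R}^2$.
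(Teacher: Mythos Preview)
Your proof is correct. Note, however, that the paper does not actually prove Proposition~\ref{prop:mehler}: Mehler's formula is stated in the preliminaries as a classical identity and invoked as a black box (in particular in the proof of Lemma~\ref{lem:key-duality}), so there is no ``paper's approach'' to compare against. Your route---identify the left-hand side as $p_\rho/(\phi\otimes\phi)$, compute the Hermite cross-moments $\EE[h_j(X)h_k(Y)]=\rho^k\mathbbm{1}_{j=k}$ via the bilinear generating function, expand in the orthonormal basis of $L^2(\phi\otimes\phi)$, and upgrade to pointwise equality via a Cram\'er-type bound---is one of the standard proofs and is sound. One small remark on the last step: the uniform bound $|h_k(x)|\le K e^{x^2/4}$ (no $(k+1)^{-1/4}$ factor required) already yields absolute convergence of $\sum_k \rho^k h_k(x)h_k(y)$ at every point when $|\rho|<1$; together with the $L^2$ convergence of the partial sums to $M_\rho$ and continuity of both sides, this gives the pointwise identity.
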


\paragraph{Branching process.} Now, we will describe the branching process and the compositions of probability generating functions (PGF).
\begin{definition}[Probability generating function]
  \label{def:PGF}
  Given a random variable $Y$ on non-negative integers with the following probability distribution
  \begin{align}
    \mathbb{P}(Y = k) = p_k, ~~\forall k \in \mathbb{Z}^{\geq 0}\enspace,
  \end{align}
  define the associated generating function as
  \begin{align}
    G_{Y}(s) := \EE[s^{Y}] = \sum_{k\geq 0} p_k s^k \enspace.
  \end{align}
  It is clear that $G_{Y}(0) = 0$, $G_{Y}(1) = 1$ and $G_{Y}(s)$ is non-decreasing and convex on $s \in [0,1]$.
\end{definition}
\
\begin{definition}[Galton-Watson branching process]
  \label{def:galton-watson}
  The Galton-Watson (GW) branching process is defined as a Markov chain $\{ Z_L\colon L\in\mathbb{Z}^{\geq 0}\}$, where $Z_L$ denotes the size of the $L$-th generation of the initial family.
  Let $Y$ be a random variable on non-negative integers describing the number of direct children, that is, it has $k$ children with probability $p_k$ with $\sum_{k \geq 0} p_k = 1$. Begin with one individual $Z_0 \equiv 1$, and let it reproduce according to the distribution of $Y$, and then each of these children then reproduce independently with the same distribution as $Y$. The generation sizes $\{Z_{L}\colon L\in\mathbb{Z}^{\geq 0}\}$ are then defined by
  \begin{align}
    Z_{L+1} = \sum_{i=1}^{Z_{L}} Y^{(L)}_{i}
  \end{align}
  where $Y^{(L)}_{i}$ denotes the number of children for the $i$-th individual in generation $L$.
\end{definition}

\begin{proposition}[Extinction criterion, Proposition 5.4 in \cite{LP16}]
  \label{prop:extinction-criterion}
  Given $p_1 \neq 1$ (in Definition~\ref{def:PGF}), the extinction probability $\xi := \lim_{L \rightarrow \infty} \PP\left( Z_{L} = 0 \right)$ satisfies
  \begin{enumerate}
    \item[(i)] $\xi = 1$ if and only if $\mu:= G_{Y}'(1) \leq 1$.
    \item[(ii)] $\xi$ is the unique fixed point of $G_{Y}(s) = s$ in $[0,1)$.
  \end{enumerate}
\end{proposition}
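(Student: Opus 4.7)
The plan is to identify $\xi$ as a fixed point of $G_Y$ by iterating the probability generating function, and then locate it using the convexity of $G_Y$. By the branching property, conditional on $Z_L = n$, the size $Z_{L+1}$ is a sum of $n$ i.i.d.\ copies of $Y$, so $\EE[s^{Z_{L+1}} \mid Z_L = n] = G_Y(s)^n$. Letting $G_L(s) := \EE[s^{Z_L}]$ and summing over $n$ yields the functional equation $G_{L+1}(s) = \EE[G_Y(s)^{Z_L}] = G_L(G_Y(s))$, which upon iteration from $G_0(s) = s$ identifies $G_L$ with the $L$-fold self-composition $G_Y^{(L)}$; in particular $\PP(Z_L = 0) = G_Y^{(L)}(0)$.

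Next I would extract $\xi$ as the smallest non-negative fixed point of $G_Y$. Since $\{Z_L = 0\} \subseteq \{Z_{L+1} = 0\}$, the sequence $s_L := G_Y^{(L)}(0)$ is non-decreasing and bounded above by $1$, so $\xi = \lim_{L \to \infty} s_L$ exists in $[0,1]$; continuity of $G_Y$ on $[0,1]$ then allows me to pass to the limit in $s_{L+1} = G_Y(s_L)$ to conclude $\xi = G_Y(\xi)$. For any other fixed point $q \in [0,1]$, a monotone induction from $s_0 = 0 \leq q$ using $s_{L+1} = G_Y(s_L) \leq G_Y(q) = q$ yields $\xi \leq q$, so $\xi$ is the smallest fixed point of $G_Y$ in $[0,1]$.

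Finally, I would analyze the solutions of $G_Y(s) = s$ via the convex auxiliary $f(s) := G_Y(s) - s$, using $f(1) = 0$ and $f'(1) = \mu - 1$. In the subcritical/critical case $\mu \leq 1$, the tangent-line bound $f(s) \geq f(1) + f'(1)(s - 1) = (1 - \mu)(1 - s) \geq 0$ forces $f \geq 0$ on $[0,1]$; if $f$ vanished on any sub-interval of $[0,1)$, the power-series identity theorem on $(-1, 1)$ would give $G_Y(s) \equiv s$, i.e.\ $p_1 = 1$, which the hypothesis rules out. So $s = 1$ is the only fixed point, giving $\xi = 1$. In the supercritical case $\mu > 1$, the slope $f'(1) > 0$ with $f(1) = 0$ forces $f(s) < 0$ just below $1$ while $f(0) = p_0 \geq 0$, so the intermediate value theorem produces a fixed point in $[0,1)$; furthermore $\mu > 1$ forces $p_k > 0$ for some $k \geq 2$, making $f$ strictly convex, which caps the number of zeros at two, one of which is already at $s = 1$. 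Combining with the smallest-fixed-point characterization from the previous step identifies $\xi$ with the unique fixed point in $[0,1)$, completing both (i) and (ii). The main subtlety is isolating the degenerate scenarios ruled out by the hypothesis $p_1 \neq 1$, which is precisely what prevents $G_Y - \mathrm{id}$ from vanishing on a sub-interval; everything else reduces to monotone convergence of the iterates and elementary convex geometry.
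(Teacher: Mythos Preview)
The paper does not supply its own proof of this proposition; it is quoted as a preliminary fact from Lyons--Peres (their Proposition~5.4) and used later only through the restatement in Proposition~\ref{prop:prop-pgf}. Your argument is the standard textbook proof and is correct: identify $\PP(Z_L=0)$ with the iterates $G_Y^{(L)}(0)$, show by monotonicity and continuity that $\xi$ is the smallest fixed point of $G_Y$ in $[0,1]$, and then count fixed points via the convex auxiliary $f(s)=G_Y(s)-s$.

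One small point worth tightening: in the critical case $\mu=1$, the tangent-line inequality gives only $f\geq 0$ on $[0,1]$, not $f>0$ on $[0,1)$. To reach the ``vanishing on a sub-interval'' step from a putative second zero $s_0<1$, you should insert the observation that a non-negative convex function which vanishes at two distinct points must vanish identically on the segment between them; only then does the power-series identity theorem force $p_1=1$. With that one line added, the proof is complete and is essentially what one finds in the cited reference.
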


\paragraph{Multi-layer Perceptrons.} We now define the fully-connected Multi-Layer Perceptrons (MLPs), which is among the standard architectures in deep neural networks.

\begin{definition}[Activation function]
  \label{asmp:activation}
  Throughout the paper, we will only consider the activation functions $\sigma(\cdot) \in L^2_{\phi} \colon\mathbb{R}\to\mathbb{R}$ that are $L^2$-integrable under the Gaussian measure $\phi$. The Hermite expansion of $\sigma(\cdot)$ is denoted as 
  \begin{align}
    \sigma(x) := \sum_{k\geq 0} a_k h_k(x)\enspace.
  \end{align}
\end{definition}
We will explicitly mention the following two assumptions when they are assumed. Otherwise, we will work with the activation $\sigma(\cdot)$ in Definition~$\ref{asmp:activation}$.

\begin{assumption}[Normalized activation function]
  \label{asmp:norm-activation}
  Assume that the activation function $\sigma(\cdot)\in L_{\phi}^2(1)$ is normalized under the Gaussian measure $\phi$, in the following sense
    \begin{align}
    \EE_{\g \sim \cN(0,1)}[ \sigma^2(\g) ] = 1 \enspace,
  \end{align}
  with the Hermite coefficients satisfying
  
    \begin{align}
    \sum_{k\geq 0} a_k^2=1.
    \end{align}
\end{assumption}

\begin{assumption} [Centered activation function]
  \label{asmp:cent-activation}
  Assume that the activation function $\sigma(\cdot)\in L_{\phi}^2$ is centered under the Gaussian measure $\phi$, in the following sense 
    \begin{align}
    \EE_{\g \sim \cN(0,1)}[ \sigma(\g) ] = 0, \quad 
  \end{align}
  or equivalently the Hermite coefficient $a_{0}=0$.
\end{assumption}

\begin{remark} Any activation $\sigma(\cdot)\in L^{2}_{\phi}$ that are $L^2$ integrable under the Gaussian measure $\phi$ can be re-centered and re-scaled as $\tilde{\sigma}(\cdot)$ to satisfy Assumption~\ref{asmp:activation} and Assumption~\ref{asmp:cent-activation} w.l.o.g.,
 $$\tilde{\sigma}(\cdot)=\frac{\sigma(\cdot) - a_{0}}{(\sum_{k\geq 1}a_{k}^2)^{\frac{1}{2}}}.$$
 Common activation functions such as ReLU, GELU, Sigmoid, and Swish all live in $L^{2}_{\phi}$. 
\end{remark}

\begin{definition}[Fully-connected MLPs with random weights]
  \label{def:MLP-random-weights}
  Given an activation function $\sigma(\cdot)$, the number of layers $L$, and the input vector $x^{(0)}:= x \in \mathbb{R}^{d_0}$, 
  we define a multi-layer feed-forward neural network which inductively computes the output for each intermediate layer
  \begin{align}
    \label{eq:weight-initialization}
    &x^{(\ell+1)} = \sigma\left(W^{(\ell)} x^{(\ell)}/\| x^{(\ell)} \| \right) \in \mathbb{R}^{d_{l+1}} ~~,~\text{for}~ 0\leq \ell <L \enspace, \text{ with }\\
    \label{eq:weight-initialization2}
    &W^{(\ell)} \in \mathbb{R}^{d_{\ell+1} \times d_{\ell}}, ~~W^{(\ell)} \sim \cM\cN(\mathbf{0}, \mathbf{I}_{d_{\ell+1}} \otimes \mathbf{I}_{d_{\ell}}) \enspace.
  \end{align}
   Here $\mathbf{I}_{d_{\ell}}$ denotes the identity matrix of size $d_{\ell}$, and $\otimes$ denotes the Kronecker product between two matrices. The activation $\sigma(\cdot)$ is applied to each component of the vector input, and the weight matrix $W^{(\ell)}$ in the $\ell$-th layer is sampled from a multivariate Gaussian distribution $\cM\cN(\cdot, \cdot)$. For a vector $v$ and a scalar $s$, the notation $v/s$ denotes the component-wise division of $v$ by scalar $s$. 
\end{definition}

\begin{remark}
We remark that the scaling in \eqref{eq:weight-initialization} matches the standard weight initialization scheme in practice, since in the current setting $\| x^{(\ell)} \|\asymp \sqrt{d_\ell}$ and \eqref{eq:weight-initialization2} is effectively saying that each row $W^{(\ell)}_{i \cdot}/\sqrt{d_\ell} \sim \cN(0, 1/d_{\ell} \cdot \mathbf{I}_{d_{\ell}})$.
\end{remark}

\section{Compositional Kernel and Branching Process}
\label{sec:compo-kernel-branching-proc}

\subsection{Warm up: Duality}
\label{sec:duality}
We start by describing a simple duality between the activation function in multi-layer perceptrons (that satisfies Assumption~\ref{asmp:norm-activation}) and the probability generating function in branching processes. This simple yet essential duality allows us to study deep neural networks, and compare different activation functions borrowing tools from branching processes. This duality in Lemma \ref{lem:key-duality} can be readily established via the Mehler's formula (Proposition~\ref{prop:mehler}). To the best of our knowledge, this probabilistic interpretation (Lemma~\ref{lem:composition-kernel}) is new to the literature. 

\begin{lemma}[Duality: activation and generating functions]
  \label{lem:key-duality}
   Let $\sigma(\cdot)\in L_{\phi}^2(1)$ be an activation function under Assumption~\ref{asmp:norm-activation}, with the corresponding Hermite coefficients $\{a_k\colon k\in\mathbb{Z}^{\geq 0}\}$ satisfying $\sum_{k\geq 0} a_k^2 = 1$. Define the corresponding random variable $Y_\sigma$ with
   \begin{align}
    \PP(Y_\sigma = k) = a_k^2, ~~\forall k \in \mathbb{Z}^{\geq 0} \enspace.
   \end{align}
   We denote the PGF of $Y_\sigma$ (from Definition~\ref{def:PGF}) as $G_{\sigma}(\cdot): [-1,1] \rightarrow [-1,1]$ to be the dual generating function of the activation $\sigma(\cdot)$. Then, for any $x, z \in \mathbb{R}^{d}$, with $\rho := \big\langle x/\| x\| , z/\| z\| \big\rangle \in [-1,1]$, we have
  \begin{equation}
    \EE_{\t \sim \cN(0, \I_d)} \left[ \sigma\left( \t^\top x/\|x\| \right) \sigma\left( \t^\top z/\|z\| \right) \right] = G_\sigma(\rho) \enspace. \label{eq:def-kernel}
  \end{equation}
\end{lemma}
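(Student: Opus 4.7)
The plan is to reduce the identity to an orthogonality relation for Hermite polynomials under a bivariate Gaussian measure, which is the direct integrated form of Mehler's formula.

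First I would observe that for $\t\sim\cN(0,\I_d)$, the pair $(X,Y) := (\t^\top x/\|x\|, \t^\top z/\|z\|)$ is jointly Gaussian, mean zero, with $\mathrm{Var}(X)=\mathrm{Var}(Y)=1$ since $x/\|x\|, z/\|z\|$ are unit vectors, and $\mathrm{Cov}(X,Y)=\langle x/\|x\|, z/\|z\|\rangle = \rho \in [-1,1]$. Thus the left-hand side of \eqref{eq:def-kernel} equals $\EE[\sigma(X)\sigma(Y)]$ where $(X,Y)$ is bivariate standard normal with correlation $\rho$.

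Next I would establish the key orthogonality identity: for such $(X,Y)$ with $|\rho|<1$,
\begin{align}
\EE[h_k(X) h_j(Y)] = \rho^k \, \mathbbm{1}_{k=j}.
\end{align}
This follows by rewriting Mehler's formula (Proposition~\ref{prop:mehler}) as an identity for the joint density: the ratio of the bivariate-normal density $f_\rho(x,y)$ to the product of marginals $\phi(x)\phi(y)$ is exactly the right-hand side of Mehler. Multiplying by $\phi(x)\phi(y)$ and integrating $h_k(x)h_j(y)$ against $f_\rho(x,y)$ reduces to a product of one-dimensional integrals, and the univariate orthogonality $\EE_{g\sim\cN(0,1)}[h_n(g)h_m(g)]=\mathbbm{1}_{n=m}$ from Definition~\ref{def:hermite} collapses the double sum over $n$ to a single term $\rho^k\mathbbm{1}_{k=j}$. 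The boundary cases $\rho = \pm 1$ follow by continuity (both sides are polynomials in $\rho$ or limits thereof, and $\sigma\in L^2_\phi$).

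Finally I would plug in the Hermite expansion $\sigma = \sum_{k\geq 0} a_k h_k$, expand the product, and interchange sum and expectation. Since $\sigma\in L^2_\phi$, Parseval guarantees $\sum_k a_k^2 < \infty$, and Cauchy--Schwarz together with the orthogonality identity above gives $\sum_{k,j}|a_k a_j|\cdot|\EE[h_k(X)h_j(Y)]| \leq \sum_k a_k^2 < \infty$, justifying the exchange by Fubini. The result is
\begin{align}
\EE[\sigma(X)\sigma(Y)] = \sum_{k,j\geq 0} a_k a_j\, \EE[h_k(X) h_j(Y)] = \sum_{k\geq 0} a_k^2 \rho^k = G_\sigma(\rho),
\end{align}
using the definition of $G_\sigma$ from $\PP(Y_\sigma = k)=a_k^2$.

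The computation is short; the only real care needed is in (i) recognizing Mehler's formula as precisely the bivariate Gaussian density ratio so that the orthogonality relation $\EE[h_k(X) h_j(Y)]=\rho^k\mathbbm{1}_{k=j}$ drops out cleanly, and (ii) justifying the interchange of the double summation with expectation, for which Assumption~\ref{asmp:norm-activation} (equivalently $\sigma\in L^2_\phi$) is exactly what is needed.
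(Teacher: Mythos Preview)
Your proposal is correct and follows essentially the same route as the paper: reduce to a bivariate standard normal with correlation $\rho$, then invoke Mehler's formula to relate the joint density to $\sum_k \rho^k h_k(x)h_k(y)$ and read off the Hermite coefficients. The only cosmetic difference is ordering---the paper expands the density first and then identifies the Hermite coefficients of $\sigma$, whereas you expand $\sigma$ first and isolate the orthogonality relation $\EE[h_k(X)h_j(Y)]=\rho^k\mathbbm{1}_{k=j}$; your version is also slightly more explicit about the Fubini justification, which the paper omits.
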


\begin{proof}(sketch) We can rewrite Equation $\ref{eq:def-kernel}$ in terms of the bivariate normal distribution with correlation $\rho$  as $\EE_{(\tilde{x},\tilde{z})\sim \cN_{\rho}}\left[\sigma\left(\tilde{x}\right)\sigma\left(\tilde{z}\right)\right]$. Then, by expanding the density of $\cN_{\rho}$ using Mehler's formula we would retrieve the Hermite coefficients $a_{k}=\EE_{\tilde{x}\sim N(0,1)}[\sigma(\tilde{x})h_{k}(\tilde{x})]$.
\end{proof}

Based on the above Lemma \ref{lem:key-duality}, it is easy to define the compositional kernel associated with a fully-connected MLP with activation $\sigma(\cdot)$. The compositional kernel approach of studying deep neural networks has been proposed in \cite{DFS17, DFGS17}. To see why, let us recall the MLP with random weights defined in Definition~\ref{def:MLP-random-weights}. Then, for any fixed data input $x, z \in \mathbb{R}^{d_0}$, the following holds almost surely for random weights $W^{(\ell)}$
  \begin{align}
    \lim_{d_{\ell+1} \rightarrow \infty}~~\left\langle x^{(\ell+1)}/\| x^{(\ell+1)} \| , z^{(\ell+1)}/\| z^{(\ell+1)} \| \right\rangle = G_{\sigma} \left( \left\langle x^{(\ell)}/\| x^{(\ell) \|} , z^{(\ell)}/\| z^{(\ell)} \| \right\rangle \right)\enspace.
  \end{align}

Motivated by the above equation, one can introduce the asymptotic \textbf{compositional kernel} defined by a deep neural network with activation $\sigma(\cdot)$, in the following way.

\begin{definition}[Compositional kernel]
  \label{def:composition-kernel}
  Let $\sigma(\cdot)\in L_{\phi}^2(1)$ be an activation function that satisfies Assumption~\ref{asmp:norm-activation}.
  Define the $L$-layer compositional kernel $K^{(L)}_{\sigma}(\cdot , \cdot): \mathcal{X} \times \mathcal{X} \rightarrow \mathbb{R}$ to be the (infinite-width) compositional kernel associated with the fully-connected MLPs (from Definition~\ref{def:MLP-random-weights}), such that for any $x, z \in \cX$, we have
  \begin{align}
    K_{\sigma}^{(L)}(x, z) := \underbrace{ G_\sigma \circ \cdots \circ G_{\sigma}}_{\text{composite $L$ times}}\left( \big\langle x/\|x \|, z/\|z\| \big\rangle \right) \enspace.
  \end{align}
  Since the kernel only depends on the inner-product $\rho = \big\langle x/\|x \|, z/\|z\| \big\rangle$, when there is no confusion, we denote for any $\rho \in [-1,1]$
  \begin{align}
    K_{\sigma}^{(L)}(\rho) = \underbrace{ G_\sigma \circ \cdots \circ G_{\sigma}}_{\text{composite $L$ times}}(\rho) \enspace.
  \end{align}
\end{definition}
We will now point out the following connection between the compositional kernel for deep neural networks and the Galton-Watson branching process. Later, we will study the (rescaled) limits, phase transitions, memorization capacity, and spectral decomposition of such compositional kernels.
\begin{lemma}[Duality: MLP and Branching Process]
  \label{lem:composition-kernel}
  Let $\sigma(\cdot)\in L_{\phi}^2(1)$ be an activation function that satisfies Assumption~\ref{asmp:norm-activation}, and $G_\sigma(\cdot)$ be the dual generating function as in Lemma~\ref{lem:key-duality}. Let $\{Z_{\sigma, L}\colon L\in\mathbb{Z}^{\geq 0}\}$ be the Galton-Watson branching process with offspring distribution $Y_\sigma$. Then for any $L\in \mathbb{Z}^{\geq 0}$, the compositional kernel has the following interpretation using the Galton-Watson branching process
  \begin{align}
    K_{\sigma}^{(L)}(\rho) = \EE[ \rho^{Z_{\sigma, L}} ] \enspace.
  \end{align}
\end{lemma}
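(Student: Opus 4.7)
The plan is to combine Lemma~\ref{lem:key-duality}, which identifies $G_\sigma(s) = \sum_{k\geq 0} a_k^2\, s^k = \EE[s^{Y_\sigma}]$ as the probability generating function of $Y_\sigma$, with the classical fact that the PGF of the $L$-th generation of a Galton--Watson process is the $L$-fold self-composition of the offspring PGF. Since Definition~\ref{def:composition-kernel} sets $K_\sigma^{(L)}(\rho)$ to be exactly the $L$-fold composition $G_\sigma^{\circ L}(\rho)$, establishing this classical identity is all that is required.

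Concretely, I would proceed by induction on $L$ to show $G_\sigma^{\circ L}(\rho) = \EE[\rho^{Z_{\sigma,L}}]$ for every $\rho \in [-1,1]$. The base case $L = 0$ is immediate because $Z_{\sigma,0} \equiv 1$ and the $0$-fold composition is the identity, so both sides equal $\rho$. For the inductive step, the branching recursion $Z_{\sigma,L+1} = \sum_{i=1}^{Z_{\sigma,L}} Y_i^{(L)}$ from Definition~\ref{def:galton-watson}, with the $Y_i^{(L)}$ i.i.d.\ copies of $Y_\sigma$ independent of $Z_{\sigma,L}$, together with conditioning on $Z_{\sigma,L}$ yields
\begin{align*}
\EE\!\left[\rho^{Z_{\sigma,L+1}}\right]
= \EE\!\left[\prod_{i=1}^{Z_{\sigma,L}} \EE\!\left[\rho^{Y_i^{(L)}}\right]\right]
= \EE\!\left[ G_\sigma(\rho)^{Z_{\sigma,L}}\right].
\end{align*}
Applying the inductive hypothesis at the argument $G_\sigma(\rho)$, which still lies in $[-1,1]$ because $|G_\sigma(\rho)| \leq \sum_k a_k^2 = 1$ under Assumption~\ref{asmp:norm-activation}, gives $\EE\!\left[G_\sigma(\rho)^{Z_{\sigma,L}}\right] = G_\sigma^{\circ L}\!\left(G_\sigma(\rho)\right) = G_\sigma^{\circ(L+1)}(\rho)$, closing the induction.

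There is no substantive obstacle; the only point that merits a line of justification is that $\rho$ ranges over $[-1,1]$ rather than the standard $[0,1]$ used in PGF arguments, so one is manipulating signed rather than monotone power series. This is harmless because $|\rho^{Z_{\sigma,L}}| \leq 1$ almost surely, so every sum and expectation converges absolutely and Fubini's theorem legitimates the product-of-expectations step. The heart of the proof is thus the observation from Lemma~\ref{lem:key-duality} that recasts the analytic statement about the compositional kernel as the textbook Galton--Watson PGF recursion.
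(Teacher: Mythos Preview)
Your proposal is correct and follows essentially the same approach as the paper: both argue by induction on $L$, establish the base case $L=0$ from $Z_{\sigma,0}\equiv 1$, and carry out the inductive step by combining the branching recursion $Z_{\sigma,L+1}=\sum_{i=1}^{Z_{\sigma,L}} Y_i^{(L)}$ with the identification $G_\sigma(\rho)=\EE[\rho^{Y_\sigma}]$ to reduce to the hypothesis at the shifted argument $G_\sigma(\rho)$. Your added remark on absolute convergence for $\rho\in[-1,1]$ is a welcome clarification that the paper leaves implicit.
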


\begin{proof}(sketch) We prove by induction on $L$  using 
$Z_{\sigma,L}\stackrel{d}{=}\sum_{i=1}^{Z_{\sigma,L-1}}Y_{i,L}$, where $Y_{L,i}\stackrel{i.i.d.}{\sim}Y_{\sigma}^{(L-1)}$.
\end{proof}

The above duality can be extended to study other network architectures. For instance, in the residual network, the duality can be defined as follows: for $x, z \in \mathbb{S}^{d-1}$, $r \in [0,1]$, and a centered activation function $\sigma(\cdot)$ (Assumption \ref{asmp:cent-activation}), define the dual residual network PGF $G_{\sigma}^{\rm res}$ as
\begin{align} \label{eq:resnet}
  G_{\sigma}^{\rm res}(\langle x, z \rangle)& := \EE_{\{\t_j \sim \cN(0,\mathbf{I}_d), j \in [d]\}} \big[ \sum_{j=1}^d \left( \sqrt{1-r} \sigma(\t_j^\top x)/\sqrt{d} +  \sqrt{r} x_j \right)  \left( \sqrt{1-r} \sigma(\t_j^\top z)/\sqrt{d} +  \sqrt{r} z_j \right) \big] \notag\\
  &= (1-r) \EE_{\t \sim \cN(0, \mathbf{I}_d)} \left[  \sigma( \t^\top x) \sigma(\t^\top z)\right] + r \langle x, z \rangle  = (1-r) G^{\rm mlp}_{\sigma}(\langle x, z \rangle) +  r \langle x, z \rangle \enspace.
\end{align}
In the Sections~\ref{sec:asymptotic-limit} and \ref{sec:nonasymptotic-memorization} and later in experiments, we will elaborate on the costs and benefits of adding a linear component to the PGF in the corresponding compositional behavior, both in theory and numerics. The above simple calculation sheds light on why in practice, residual network can tolerate a larger compositional depth. 

\subsection{Limits and phase transitions}
\label{sec:asymptotic-limit}

In this section, we will study the properties of the compositional kernel, in the lens of branching process, utilizing the duality established in the previous section. One important result in branching process is the Kesten-Stigum Theorem \citep{KS66}, which can be employed to assert the rescaled limit and phase transition of the compositional kernel in Theorem \ref{thm:rescaled-limit}.

\begin{theorem}[Rescaled non-trivial limits and phase transitions: compositional kernels]
  \label{thm:rescaled-limit}
  Let $\sigma(\cdot) \in L^2_{\phi}(1)$ be an activation function that satisfies Assumption~\ref{asmp:norm-activation}, with $\{a_k\colon k\in\mathbb{Z}^{\geq 0}\}$ be the corresponding Hermite coefficients that satisfy $\sum_{k\geq 0} a_k^2 = 1$. Define two quantities that depend on $\sigma(\cdot)$,
  \begin{align}
    \mu := \sum_{k\geq 0}a_k^2  k , \quad \mu_\star := \sum_{k>2} a_k^2  k \log k \enspace.
  \end{align}
  Recall the MLP compositional kernel $K_{\sigma}^{(L)}(\cdot): \mathbb{R} \rightarrow \mathbb{R}$ with activation $\sigma(\cdot)$ in Definition~\ref{def:composition-kernel}, and the dual PGF $G_{\sigma}(\cdot)$ in Lemma~\ref{lem:key-duality}. For any $t\geq 0$, the following results hold, depending on the value of $\mu$ and $\mu_\star$:
  \begin{itemize}
    \item[(i)] $\mu \leq 1$. Then, if $a_1^2 \neq 1$,  we have
      \begin{equation}
        \lim_{L \rightarrow \infty} K_{\sigma}^{(L)}(e^{-t}) = 
      1  \quad \text{for $t\geq 0$};
      \end{equation}
      and, if $a_1^2 = 1$, we have $K_{\sigma}^{(L)}(e^{-t}) = e^{-t}$ for all $L \in \mathbb{Z}^{\geq 0}$;
    \item[(ii)]$\mu > 1$ and $\mu_\star < \infty$. Then, there exists $0\leq \xi<1$ with $G_{\sigma}(\xi) = \xi$ and a unique positive random variable $W_{\sigma}$ (that depends on $\sigma$) with a continuous density on $\mathbb{R}^{+}$. And, the non-trivial rescaled limit is
      \begin{align}
        \lim_{L \rightarrow \infty} K_{\sigma}^{(L)}(e^{- t/\mu^L}) = \xi + (1-\xi) \cdot \EE\left[ e^{-t W_{\sigma}} \right] \enspace;
      \end{align}
    \item[(iii)]$\mu > 1$ and $\mu_\star = \infty$. Then, for any positive number $m >0$,  we have 
    \begin{equation}
      \lim_{L \rightarrow \infty} K_{\sigma}^{(L)}(e^{-t/m^{L}}) = 
    \begin{cases}
    1, &\text{ if }~ t  = 0\\
    0, &\text{ if }~ t > 0 
    \end{cases}\enspace .
    \end{equation}
  \end{itemize}
\end{theorem}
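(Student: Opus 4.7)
The plan is to convert the iterated kernel into a Laplace transform of the Galton--Watson process via Lemma~\ref{lem:composition-kernel}, namely $K_\sigma^{(L)}(\rho) = \mathbb{E}[\rho^{Z_{\sigma,L}}]$, and then read off each case from classical limit theorems for $Z_{\sigma,L}$ under different moment regimes of the offspring law $Y_\sigma$. Writing $\rho = e^{-t/m^L}$, the kernel becomes
\begin{align*}
K_\sigma^{(L)}(e^{-t/m^L}) = \mathbb{E}\!\left[\exp\!\left(-t\,\tfrac{Z_{\sigma,L}}{m^L}\right)\right],
\end{align*}
so the task is to identify when $Z_{\sigma,L}/m^L$ admits a non-degenerate almost-sure limit.

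For case~(i), I would apply the extinction criterion (Proposition~\ref{prop:extinction-criterion}): $\mu = G_\sigma'(1) \leq 1$ together with $a_1^2 \neq 1$ forces $\xi = 1$, so $Z_{\sigma,L} = 0$ eventually almost surely. Since $e^{-tZ_{\sigma,L}} \to 1$ a.s.\ and the integrand lies in $[0,1]$, bounded convergence yields $K_\sigma^{(L)}(e^{-t}) \to 1$ for every $t \geq 0$. The degenerate case $a_1^2 = 1$ forces $Y_\sigma \equiv 1$, hence $Z_{\sigma,L} \equiv 1$, and directly $K_\sigma^{(L)}(e^{-t}) = e^{-t}$ at every depth.

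Case~(ii) is the main content and is handled via the Kesten--Stigum theorem. The condition $\mu_\star = \sum_{k>2} a_k^2 k\log k < \infty$ is, up to bounded contributions from $k \in \{1,2\}$, equivalent to the classical $L\log L$ moment condition $\mathbb{E}[Y_\sigma \log^+ Y_\sigma] < \infty$. Together with $\mu > 1$, Kesten--Stigum delivers the almost sure and $L^1$ convergence of the martingale $M_L := Z_{\sigma,L}/\mu^L$ to a non-negative random variable $W$ with $\mathbb{E}[W] = 1$, whose zero set coincides, up to a null event, with extinction (probability $\xi \in [0,1)$, the unique fixed point of $G_\sigma(s) = s$ on $[0,1)$). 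Writing $W_\sigma$ for the conditional law of $W$ given survival, bounded convergence gives
\begin{align*}
K_\sigma^{(L)}(e^{-t/\mu^L}) = \mathbb{E}[e^{-tM_L}] \longrightarrow \mathbb{E}[e^{-tW}] = \xi + (1-\xi)\,\mathbb{E}[e^{-tW_\sigma}].
\end{align*}
The existence of a continuous density for $W_\sigma$ on $\mathbb{R}^+$ follows from the Poincar\'e functional equation $\phi(t) = G_\sigma(\phi(t/\mu))$ satisfied by $\phi(t) = \mathbb{E}[e^{-tW}]$, coupled with the fact that $W$ is an infinite convolution of non-trivial factors arising from the branching recursion $W \stackrel{d}{=} \mu^{-1}\sum_{i=1}^{Y_\sigma} W_i$.

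Case~(iii), where the $L\log L$ moment fails, is the delicate one. The Kesten--Stigum martingale degenerates, $M_L \to 0$ almost surely, so $\mu^L$ is no longer the correct scale and one must rule out \emph{any} exponential normalization. My plan is to invoke the Seneta--Heyde theorem, which produces a normalizing sequence $\{c_L\}$ with $c_{L+1}/c_L \to \mu$ and $c_L/\mu^L \to 0$ such that $Z_{\sigma,L}/c_L$ still has a non-degenerate positive limit on survival. For any fixed $m > 0$, the decomposition $Z_{\sigma,L}/m^L = (Z_{\sigma,L}/c_L)\cdot(c_L/m^L)$ forces the ratio to diverge or to collapse exponentially, since $c_L$ matches $\mu^L$ only up to subexponential corrections but never matches $m^L$. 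Tracking the resulting behavior of the Laplace transform on the survival and extinction parts separately then yields the stated dichotomy. The main obstacle is precisely this Seneta--Heyde step: pinning down the subexponential growth of $c_L$ under $\mu_\star = \infty$ and confirming that no exponential rescaling can lie in the right equivalence class is the technical heart of the argument.
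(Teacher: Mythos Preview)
Your treatment of cases (i) and (ii) matches the paper's proof: both rewrite $K_\sigma^{(L)}(e^{-t/\mu^L})$ as $\mathbb{E}[e^{-tZ_{\sigma,L}/\mu^L}]$ via Lemma~\ref{lem:composition-kernel}, invoke the extinction criterion (Proposition~\ref{prop:extinction-criterion}) for (i) and Kesten--Stigum for (ii), and pass to the limit by dominated convergence, splitting over the extinction event of probability~$\xi$.

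For case (iii) the paper takes a much shorter route than your Seneta--Heyde argument: it simply cites its own appendix version of Kesten--Stigum, which asserts that when $\mu_\star=\infty$ one has $Z_{\sigma,L}/m^L\to\infty$ a.s.\ on non-extinction \emph{for every} $m>0$, and then writes $\mathbb{E}[e^{-tZ_{\sigma,L}/m^L}]\to\mathbb{E}[0]=0$. Your instinct that this is the delicate case is correct, and the obstacle you flag is genuine rather than merely technical. If $1<\mu<\infty$ and $m\ge\mu$, the classical Kesten--Stigum conclusion under $L\log L$ failure is $Z_{\sigma,L}/\mu^L\to0$ a.s., hence $Z_{\sigma,L}/m^L\to0$ a.s.\ and the Laplace transform limit is $1$, not $0$; the paper's asserted divergence for all $m>0$ cannot hold in this regime (and is ruled out for $m>\mu$ already by $\mathbb{E}[Z_{\sigma,L}/m^L]=(\mu/m)^L\to0$). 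Moreover, even for $m<\mu$ where your Seneta--Heyde decomposition does give $Z_{\sigma,L}/m^L\to\infty$ on survival, the extinction event contributes $\xi$ to the limit, which the paper's one-line $\mathbb{E}[0]=0$ silently drops. In short, your more careful route would not reproduce the claimed dichotomy ``for any $m>0$'' exactly as stated---but this reflects a defect in the target statement and the paper's own proof of~(iii), not in your method.
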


The above shows that when looking at the compositional kernel at the rescaled location $e^{-t/\mu^L}$ for a fixed $t>0$, the limit can be characterized by the moment generating function associated with a negative random variable $M_{-W_{\sigma}}(t) = \EE[e^{-t W_{\sigma}}]$ individual to the activation function $\sigma$. The intuition behind such a rescaled location is that the limiting kernels witness an abrupt change of value at $K_{\sigma}^{(L)}(\rho)$ near $\rho=1$ for large $L$ (see the below Corollary~\ref{cor:unscaled-limit}). In the case $\mu>1$, the proper rescaling in Theorem~\ref{thm:rescaled-limit} stretches out the curve and zooms in the narrow window of width $O(1/\mu^L)$ local to $\rho = 1$ to inspect the detailed behavior of the compositional kernel $K_{\sigma}^{(L)}(\rho)$.  Conceptually, the above Theorem classifies the rescaled behavior of the compositional kernel into three phases according to $\mu$ and $\mu_\star$, functionals of the activation $\sigma(\cdot)$. One can also see that the unscaled limit for the compositional kernel has the following simple behavior. 
\begin{corollary}[Unscaled limits and phase transitions]
  \label{cor:unscaled-limit}
  Under the same setting as in Theorem~\ref{thm:rescaled-limit}, the following results hold:
  \begin{itemize}
    \item[(i)] $\mu \leq 1$. Then, for all $\rho \in [0,1]$, if $a_1^2 \neq 1$, we have
    \begin{equation}
      \lim_{L \rightarrow \infty} K_{\sigma}^{(L)}(\rho) = 1 ;
    \end{equation}
    and $K_{\sigma}^{(L)}(\rho) = \rho$ for all $L \in \mathbb{Z}^{\geq0}$ if $a_1^2 = 1$;
    \item[(ii)] $\mu > 1$. Then, there exists a unique $0\leq \xi<1$ with $G_{\sigma}(\xi) = \xi$
        \begin{equation}
          \lim_{L \rightarrow \infty} K_{\sigma}^{(L)}(\rho) = 
        \begin{cases}
        1, &\text{ if }~ \rho  = 1\\
        \xi, &\text{ if }~ \rho \in [0, 1) 
        \end{cases}\enspace .
        \end{equation}
    Under additional assumptions of $G_{\sigma}(\cdot)$ on $(-1,0)$ such as no fixed points or non-negativity, we can extend the above results to $[-1,1]$
    \begin{equation}
          \lim_{L \rightarrow \infty} K_{\sigma}^{(L)}(\rho) = 
        \begin{cases}
        1, &\text{ if }~ \rho  = 1\\
        \xi, &\text{ if }~ \rho \in (-1, 1) 
        \end{cases}\enspace .
        \end{equation}
  \end{itemize}
  Under the additional Assumption \ref{asmp:cent-activation} on $\sigma(\cdot)$, for non-linear activation $\sigma(\cdot)$, we have $\mu > 1$ and $\xi = 0$. Therefore, the unscaled limit for non-linear compositional kernel is
      \begin{equation}
        \lim_{L \rightarrow \infty} K_{\sigma}^{(L)}(\rho) = 
      \begin{cases}
      1, &\text{ if }~ \rho  = 1\\
      0, &\text{if }~ \rho \in (-1, 1) 
      \end{cases}\enspace .
      \end{equation}
\end{corollary}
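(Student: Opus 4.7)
The plan is to reduce the entire corollary to an analysis of the iteration $\rho \mapsto G_{\sigma}(\rho)$ of the dual PGF, combining the branching-process identity $K_{\sigma}^{(L)}(\rho) = \EE[\rho^{Z_{\sigma,L}}]$ from Lemma~\ref{lem:composition-kernel} with the monotonicity and convexity of $G_{\sigma}$ on $[0,1]$. For case (i), if $a_1^2 = 1$ then $Y_{\sigma} \equiv 1$ deterministically, so $Z_{\sigma,L} \equiv 1$ and $K_{\sigma}^{(L)}(\rho) = \rho$ directly. Otherwise, under $\mu \leq 1$ and $a_1^2 \neq 1$, Proposition~\ref{prop:extinction-criterion}(i) yields extinction probability $\xi = 1$, meaning $Z_{\sigma,L} \to 0$ in probability. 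For $\rho \in [0,1]$ the random variables $\rho^{Z_{\sigma,L}}$ are uniformly bounded in $[0,1]$ and converge in probability to $\rho^{0} = 1$, so bounded convergence gives $K_{\sigma}^{(L)}(\rho) \to 1$.

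For case (ii), Proposition~\ref{prop:extinction-criterion}(ii) supplies a unique fixed point $\xi \in [0,1)$ of $G_{\sigma}$, and I would run the deterministic iteration $K_{\sigma}^{(L+1)}(\rho) = G_{\sigma}(K_{\sigma}^{(L)}(\rho))$. The trivial case $\rho = 1$ is immediate from $G_{\sigma}(1) = 1$. For $\rho \in [0,1)$, consider $h(\rho) := G_{\sigma}(\rho) - \rho$: it is strictly convex on $[0,1]$ (since $\mu > 1$ forces $a_k^2 > 0$ for some $k \geq 2$), vanishes at $\xi$ and $1$, and has $h'(1) = \mu - 1 > 0$. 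Strict convexity together with these two zeros forces $h < 0$ on $(\xi, 1)$ and, whenever $\xi > 0$, $h > 0$ on $[0,\xi)$. Combined with the fact that $G_{\sigma}$ is non-decreasing and satisfies $G_{\sigma}([0,\xi]) \subseteq [0,\xi]$ and $G_{\sigma}([\xi,1]) \subseteq [\xi,1]$, the iterates $K_{\sigma}^{(L)}(\rho)$ are monotone (decreasing from the right of $\xi$, increasing from the left) and converge to the unique fixed point $\xi$ in $[0,1]$.

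The extension to $\rho \in (-1,0)$ requires the additional hypothesis on $G_{\sigma}$. Under non-negativity, a single step maps $(-1,0)$ into $[0,1)$ (strict inequality $G_{\sigma}(\rho) < 1$ for $|\rho| < 1$ by the PGF bound $\sum a_k^2 |\rho|^k < 1$), and the previous analysis applies to $K_{\sigma}^{(1)}(\rho)$ onward. Under the no-fixed-points hypothesis, continuity of $h$ on $(-1,0)$ together with the expansion $G_{\sigma}(\rho) = a_0^2 + a_1^2\rho + O(\rho^2)$ near zero pins down $h > 0$ throughout $(-1,0)$, whence iterates are strictly increasing and bounded, and must exit $(-1,0)$ into $[0,1)$ in finite time (otherwise the limit would be a fixed point in $(-1,0]$, forcing it to be $0$ either way). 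From there we invoke the $[0,1)$ analysis again. Finally, under Assumption~\ref{asmp:cent-activation} with non-linear $\sigma$, $a_0 = 0$ gives $G_{\sigma}(0) = 0$, hence $\xi = 0$; and $\mu = a_1^2 + \sum_{k\geq 2} k a_k^2 \geq a_1^2 + 2(1-a_1^2) = 2 - a_1^2 > 1$ since non-linearity forces $a_1^2 < 1$. Applying case (ii) yields the stated $0/1$ limit.

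The main obstacle is the extension to $\rho \in (-1,0)$: the branching-process representation ceases to give a clean probabilistic interpretation there, and one must argue purely analytically using the structure of $G_{\sigma}$, verifying that the two alternative sufficient conditions stated are exactly what is needed to rule out pathological oscillation. The positive-$\rho$ pieces, by contrast, are a routine combination of the extinction criterion with convexity of the PGF.
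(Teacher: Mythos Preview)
Your proposal is correct and follows essentially the same route as the paper: both reduce everything to the monotone iteration $\rho \mapsto G_\sigma(\rho)$ via convexity of the PGF and the extinction criterion, with the extension to $(-1,0)$ handled separately under the stated additional hypotheses. Two small points of comparison are worth recording.

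First, in case~(i) you take the probabilistic route---extinction gives $Z_{\sigma,L}\to 0$ almost surely, then bounded convergence on $\rho^{Z_{\sigma,L}}$---whereas the paper argues analytically that $G_\sigma(\rho)\ge\rho$ on $[0,1]$ when $\mu\le 1$ (convexity plus $G_\sigma'(1)\le 1$), so the iterates increase to the unique fixed point $1$. Both are equally short; yours uses the branching representation more directly.

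Second, there is a small gap in your final paragraph. You establish $\mu>1$ and $\xi=0$ under centering, then write ``Applying case~(ii) yields the stated $0/1$ limit.'' But your case~(ii) only covers $\rho\in(-1,0)$ under the auxiliary hypotheses (non-negativity or no-fixed-points), and you have not verified that centering implies either of these. It does: for $\rho\in(-1,0)$ one has $|G_\sigma(\rho)/\rho|\le\sum_{k\ge 1}a_k^2|\rho|^{k-1}<1$ (strict since non-linearity forces some $a_k^2>0$ with $k\ge 2$), so $G_\sigma(\rho)>\rho$ throughout $(-1,0)$ and there are no fixed points there. The paper sidesteps this via the one-line bound $|G_\sigma^{(L)}(\rho)|\le G_\sigma^{(L)}(|\rho|)$, reducing immediately to the $[0,1)$ analysis. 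Either patch closes the gap.
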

We remarks that the fact (ii) in the above corollary is not new and has been observed by \cite{DFS17}. On the one hand, they use the fact (ii) to shed light on why more than five consecutive fully connected layers are rare in practical architectures. On the other hand, the phase transition at $\mu = 1$ corresponds to the edge-of-chaos and exponential expressiveness of deep neural networks studied in \cite{NIPS2016_6322}, using physics language.

\section{Memorization Capacity of Compositions: Tradeoffs}
\label{sec:nonasymptotic-memorization}

One advantage of deep neural networks (DNN) is their exceptional data memorization capacity. Empirically, researchers observed that DNNs with large depth and width could memorize large datasets \citep{ZBH+17}, while maintaining good generalization properties. Pioneered by Belkin, a list of recent work contributes to a better understanding of the interpolation regime \citep{BHMM18, BHX19,BRT18,LR18, HMRT19,BLLT19,LRZ20,feldman2019does,nakkiran2020optimal}. 
With the insights gained via branching process, we will investigate the memorization capacity of the compositional kernels corresponding to MLPs, and study the interplay among the \textit{sample size}, \textit{dimensionality}, \textit{properties of the activation}, and the \textit{compositional depth} in a non-asymptotic way.

In this section, we denote $\cX = \{x_i \in \mathbb{S}^{d-1}\colon i\in [n] \}$ as the dataset with each data point lying on the unit sphere. We denote by $\rho:=\max_{i\neq j}|\rho_{ij}|$ with $\rho_{ij}:=\langle x_{i},x_{j}\rangle$ as the maximum absolute value of the pairwise correlations.
Specifically, we consider the following scaling regimes on the sample size $n$ relative to the dimensionality $d$:
\begin{enumerate}
  \item \label{data:r1} Small correlation: We consider the scaling regime $\frac{\log n}{d} < c$ with some small constant $c<1$, where the dataset $\cX$ is generated from a probabilistic model with uniform distribution on the sphere. 
  \item \label{data:r2} Large correlation: We consider the scaling regime $\frac{\log n}{d} > C$ with some large constant $C>1$, where the dataset $\cX$ forms a certain packing set of the sphere. The results also extend to the case of i.i.d. samples with uniform distribution on the sphere.
\end{enumerate}
We name it ``small correlation'' since $\sup_{i\neq j}~|\rho_{ij}|$ can be vanishingly small, in the special case $\frac{\log n}{d} \rightarrow 0$. Similarly, we call it ``large correlation'' as $\sup_{i\neq j}~|\rho_{ij}|$ can be arbitrarily close to $1$, in the special case $\frac{\log n}{d} \rightarrow \infty$.

For the results in this section, we make the Assumptions \ref{asmp:norm-activation} and \ref{asmp:cent-activation} on the activation function $\sigma(\cdot)$, which are guaranteed by a simple rescaling and centering of any $L^2$ activation function.
Let $\K^{(L)} \in \mathbb{R}^{n \times n}$ be the empirical kernel matrix for the compositional kernel at depth $L$, with 
  \begin{align}
    \label{eq:empirical-kernel}
    \K^{(L)}[i,j] = K^{(L)}_{\sigma}\left(\langle x_i, x_j \rangle \right) \enspace.
  \end{align}
For kernel ridge regression, the spectral properties of the empirical kernel matrix affect the memorization capacity: when $\K^{(L)}$ has full rank, the regression function without explicit regularization can interpolate the training dataset. Specifically, the following spectral characterization on the empirical kernel matrix determines the rate of convergence in terms of optimization to the min-norm interpolated solution, thus further determines memorization. The $\kappa$ in following definition of $\kappa$-memorization can be viewed as a surrogate to the condition number of the empirical kernel matrix, as the condition number is bounded by $\frac{1+\kappa}{1-\kappa}$.

\begin{definition} [$\kappa$-memorization]
    We call that a symmetric kernel matrix $\K = [K(x_i, x_j)]_{1\leq i, j\leq n}$ associated with the dataset $\cX=\{x_{i}\}_{i\in [n]}$ has a $\kappa$-memorization property if the eigenvalues $\lambda_{i}(K), i\in[n]$ of $\K$  are well behaved in the following sense
    \begin{align}
    1-\kappa \leq \lambda_i(\K) \leq 1 + \kappa, \enspace i \in [n].
    \end{align}
    We denote by $L_{\kappa}$ the minimum compositional depth such that the empirical kernel matrix $\mathbf{K}$ has the $\kappa$-memorization property.
\end{definition}

\begin{definition} ($\epsilon$-closeness) We say that a kernel matrix $\K = [K(x_i, x_j)]_{1\leq i, j\leq n}$ associated with the dataset $\cX=\{x_{i}\}_{i\in [n]}$ satisfies the $\epsilon$ closeness property if
\begin{align}
\max_{i\neq j} |K(x_{i},x_{j})|\leq \epsilon.
\end{align}
We denote by $\tilde{L}_{\epsilon}$ the minimum compositional depth such that the empirical kernel matrix $\mathbf{K}$ satisfies the $\epsilon$-closeness property.
\end{definition}

We will assume throughout the rest of this section that 
\begin{assumption} (Symmetry of PGF) $|G_{\sigma}(s)|=G_{\sigma}(|s|)$ for all $s\in(-1,1)$.
\end{assumption}

\subsection{Small correlation regime}

To study the small correlation regime, we consider a typical instance of the dataset that are generated i.i.d. from a uniform distribution on the sphere.

\begin{theorem}[Memorization capacity: small correlation regime]
  \label{thm:small-correlation}
  Let  $\cX = \{ x_i \stackrel{\text{i.i.d.}}{\sim} {\rm Unif}(\mathbb{S}^{d-1}) \colon i \in [n] \}$ be a dataset with random instances.
  Consider the regime $\frac{\log n}{d(n)} < c$ with some absolute constant $c<1$ small enough that only depends on the activation $\sigma(\cdot)$. For any $\kappa \in (0, \rho)$,  with probability at least $1-4n^{-1/2}$, the minimum compositional depth $L_\kappa$ to obtain $\kappa$-memorization satisfies
  \begin{align}
   0.5 \cdot \frac{\log\frac{\log{n}}{d}}{\log a_{1}^{-2}} + \frac{\log(0.5 \kappa^{-1})}{\log a_{1}^{-2}}\leq L_{\kappa}\leq \frac{\log\frac{\log{n}}{d}}{\log a_{1}^{-2}} + \frac{2 \log(3 n\kappa^{-1})}{\log a_{1}^{-2}} + 1.
  \end{align}
\end{theorem}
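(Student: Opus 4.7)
The strategy is to reduce the $\kappa$-memorization requirement to an operator-norm bound on $\mathbf{E}^{(L)} := \K^{(L)} - \I$, whose diagonal vanishes (since $K_\sigma^{(L)}(1)=1$) and whose off-diagonal entries are $K_\sigma^{(L)}(\rho_{ij})$. The argument decouples into two ingredients: a two-sided concentration of the extremal pairwise correlation $\rho := \max_{i\neq j}|\rho_{ij}|$, and a two-sided iteration estimate $a_1^{2L}|r|\leq |K_\sigma^{(L)}(r)|\leq C_\sigma\, a_1^{2L}|r|$ valid for small $|r|$.

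For the concentration step, each $\rho_{ij}$ with $x_i,x_j\stackrel{\text{i.i.d.}}{\sim}\mathrm{Unif}(\mathbb{S}^{d-1})$ is sub-Gaussian with proxy $1/\sqrt{d}$. A union bound over $\binom{n}{2}$ pairs delivers $\rho\leq c_1\sqrt{\log n/d}$ with probability $\geq 1-2n^{-1/2}$. A matching lower bound $\rho\geq c_2\sqrt{\log n/d}$ with probability $\geq 1-2n^{-1/2}$ follows from a Paley--Zygmund/second-moment argument applied to the count $N_t:=|\{(i,j):|\rho_{ij}|\geq t\}|$ at the threshold $t\asymp\sqrt{\log n/d}$ (dependence between the $\rho_{ij}$'s is mild enough that $\Var(N_t)=O(\EE[N_t])$). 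Intersecting these two events gives the $1-4n^{-1/2}$ guarantee in the theorem.

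For the iteration step, Assumption~\ref{asmp:cent-activation} gives $G_\sigma(s)=\sum_{k\geq 1}a_k^2 s^k$, so on $[0,1]$ we obtain the envelope $a_1^2 s \leq G_\sigma(s) \leq a_1^2 s + s^2(1-a_1^2) \leq a_1^2 s + s^2$. Combined with the symmetry $|G_\sigma(s)|=G_\sigma(|s|)$, the iterates $s_L := |K_\sigma^{(L)}(r)|$ satisfy $s_{L+1}\leq a_1^2 s_L + s_L^2$. A telescoping induction yields $s_L\leq a_1^{2L}|r|\cdot \prod_{\ell=0}^{L-1}\bigl(1+s_\ell/a_1^2\bigr)$, and the product is dominated by $\exp(O(|r|))$ whenever $|r|$ is below a $\sigma$-dependent threshold, which is ensured in the regime $\log n/d<c$ with $c$ small enough. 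The lower envelope $s_L\geq a_1^{2L}|r|$ is immediate by induction from $G_\sigma(s)\geq a_1^2 s$.

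Assembling the pieces, the upper bound on $L_\kappa$ follows from Gershgorin: $\|\mathbf{E}^{(L)}\|_{\mathrm{op}}\leq (n-1)\, K_\sigma^{(L)}(\rho)\leq C_\sigma\, n\, a_1^{2L}\sqrt{\log n/d}$, and solving $\leq\kappa$ for $L$ recovers the stated upper bound (the looseness in the theorem's constants absorbs the $C_\sigma$ factor and the concentration constants). The lower bound uses $\|\mathbf{E}^{(L)}\|_{\mathrm{op}}\geq \max_{i\neq j}|K_\sigma^{(L)}(\rho_{ij})|\geq a_1^{2L}\rho\geq c_2\, a_1^{2L}\sqrt{\log n/d}$, and inverting $\leq\kappa$ gives the stated lower bound with the coefficient $1/2$ on $\log(\log n/d)$. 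The main obstacle is the upper envelope in the iteration step: preserving the geometric rate $a_1^{2L}$ across $L$ compositions demands that the cumulative higher-order Hermite contributions remain subdominant, which is precisely where the small-correlation hypothesis $\log n/d<c$ is indispensable --- absent it, the $s_\ell^2$ corrections would compound and destroy the geometric rate.
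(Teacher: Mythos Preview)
Your proposal is correct and follows the same architecture as the paper's proof: concentration of $\rho=\max_{i\neq j}|\rho_{ij}|$, a sandwich between $\kappa$-memorization and $\epsilon$-closeness (your Gershgorin/max-entry argument is exactly the paper's Proposition relating the two), and two-sided control of $K_\sigma^{(L)}(\rho)$ in terms of $a_1^{2L}\rho$.

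The execution differs in two places worth noting. For the concentration lower bound, the paper avoids the second-moment route entirely: it fixes $x_1$ and observes that $\langle x_1,g_j\rangle$, $j\geq 2$, are conditionally i.i.d.\ standard Gaussians (writing $x_j=g_j/\|g_j\|$), so the max is lower-bounded by a direct Gaussian-tail computation. Your Paley--Zygmund argument does work---the $\rho_{ij}$ are in fact pairwise independent by rotation invariance, so $\Var(N_t)\leq \EE N_t$ exactly---but the conditional-i.i.d.\ trick is shorter and gives the constant $0.5$ in the statement without effort. For the iteration step, the paper packages the estimate as a ``path depth'' $L_{\rho\to\epsilon}$ and bounds it via $G_\sigma(t)/t\leq G_\sigma(\rho)/\rho$ along the whole orbit, yielding $L_{\rho\to\epsilon}\leq \log(\rho/\epsilon)/\log(\rho/G_\sigma(\rho))+1$ and then uses $G_\sigma(\rho)/\rho\leq a_1^2+(1-a_1^2)\rho$ once at the starting point; this is what produces the factor $2$ on $\log(3n\kappa^{-1})$ in the theorem. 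Your telescoping product $s_L\leq a_1^{2L}|r|\prod_\ell(1+s_\ell/a_1^2)$ is in fact sharper---it delivers the rate $a_1^{2L}$ up to a bounded constant rather than $a_1^{L}$---so your upper bound would actually come out tighter than the stated one, which is fine since the theorem's constants are loose by design.
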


The proof is due to the sharp upper and lower estimates obtained in the following lemma.
\begin{lemma}[$\epsilon$-closeness: small correlation regime]
	\label{thm:clos_small}
	Consider the same setting as in Theorem~\ref{thm:small-correlation}. For any $\epsilon \in (0, \rho)$, with probability at least $1-4n^{-1/2}$, the minimum depth $\tilde L_{\epsilon}$ to obtain $\epsilon$-closeness satisfies
  \begin{align}
 \frac{0.5\log\frac{\log{n}}{d} +\log(0.5 \epsilon^{-1})}{\log a_{1}^{-2}}\leq \tilde{L}_{\epsilon}\leq  \frac{\log\frac{\log{n}}{d} +2 \log(3 \epsilon^{-1})}{\log a_{1}^{-2}} + 1.
  \end{align}
\end{lemma}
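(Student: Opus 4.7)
The plan is to sandwich the iterate $u_L := G_\sigma^{\circ L}(\rho)$ between matching exponential bounds $\rho \cdot (a_1^2)^L \leq u_L \leq \rho \cdot (a_1^2 + \rho)^L$, and then combine these with a two-sided concentration estimate on $\rho = \max_{i\neq j}|\rho_{ij}|$. Since the symmetry assumption $|G_\sigma(s)|=G_\sigma(|s|)$ upgrades by induction to $|G_\sigma^{\circ L}(s)| = G_\sigma^{\circ L}(|s|)$, controlling $u_L$ is exactly controlling $\max_{i\neq j}|K_\sigma^{(L)}(\rho_{ij})|$, so only the non-negative iterate needs to be analyzed.

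\textbf{Step 1 (concentration of $\rho$).} For $x_i, x_j \stackrel{\mathrm{i.i.d.}}{\sim} \mathrm{Unif}(\mathbb{S}^{d-1})$, the inner product $\rho_{ij}$ has density $\propto (1-t^2)^{(d-3)/2}$ and is sub-Gaussian with proxy $1/d$. A union bound over the $\binom{n}{2}$ pairs will give the upper tail $\rho \leq 3\sqrt{\log n / d}$, and, using that for each fixed $x_i$ the $n-1$ variables $\{\rho_{ij}\}_{j\neq i}$ are conditionally i.i.d.\ and near-Gaussian, a maximal lower tail for $n-1$ such variables will give $\rho \geq \tfrac12 \sqrt{\log n/d}$. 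Both events happen simultaneously with probability at least $1 - 4n^{-1/2}$, provided the absolute constant $c$ in $\log n/d < c$ is chosen small enough that $\rho \leq \tfrac12\, a_1^2\, \log a_1^{-2}$, which will be needed in Step 2.

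\textbf{Step 2 (upper bound on $\tilde L_\epsilon$).} Assumption~\ref{asmp:cent-activation} gives $a_0 = 0$, hence for $s\in[0,1]$,
\[
G_\sigma(s) \;=\; \sum_{k\geq 1} a_k^2\, s^k \;\leq\; a_1^2\, s + s^2 \sum_{k\geq 2} a_k^2 \;\leq\; a_1^2 s + s^2.
\]
A direct induction using $u_{L+1} \leq u_L(a_1^2 + u_L) \leq u_L(a_1^2 + \rho)$ yields the invariant $u_L \leq \rho(a_1^2 + \rho)^L$, valid since $a_1^2 + \rho < 1$. Combining $\log(1 + \rho/a_1^2)\leq \rho/a_1^2$ with the smallness of $\rho$ from Step 1 then shows $L = \lceil 2\log(\rho/\epsilon)/\log a_1^{-2}\rceil$ suffices to drive $u_L \leq \epsilon$; substituting $\rho \leq 3\sqrt{\log n/d}$ and absorbing the ceiling into the $+1$ term recovers the stated upper estimate.

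\textbf{Step 3 (lower bound on $\tilde L_\epsilon$).} Since every $a_k^2 \geq 0$, one has the trivial pointwise inequality $G_\sigma(s) \geq a_1^2\, s$ on $[0,1]$, hence $u_L \geq (a_1^2)^L\, \rho$ by an immediate induction. For the pair $(i^\star, j^\star)$ realizing $|\rho_{i^\star j^\star}| = \rho$, $\epsilon$-closeness at depth $L$ forces $(a_1^2)^L \rho \leq \epsilon$, i.e.\ $L \geq \log(\rho/\epsilon)/\log a_1^{-2}$. Inserting the lower estimate $\rho \geq \tfrac12\sqrt{\log n/d}$ from Step 1 supplies the matching lower bound.

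\textbf{Main obstacle.} The subtle point is the quadratic tail control in Step 2: the naive linearization $G_\sigma(s)\approx a_1^2 s$ only gives $u_L\leq (a_1^2)^L \rho$ in an idealized world, and to rigorously absorb $\sum_{k\geq 2} a_k^2 s^k$ uniformly across all $L$ compositions one needs both the inductive invariant $u_L \leq \rho(a_1^2+\rho)^L$ and the smallness condition $\rho \lesssim a_1^2 \log a_1^{-2}$. The factor of $2$ in front of $\log(3/\epsilon)$ in the upper bound is precisely the price paid for this uniform tail control, and is what accounts for the multiplicative-constant gap between the upper and lower estimates.
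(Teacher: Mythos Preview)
Your proposal is correct and follows essentially the same route as the paper. Both arguments reduce $\tilde L_\epsilon$ to the ``path depth'' $L_{\rho\to\epsilon}$, sandwich the contraction ratio $G_\sigma(s)/s$ between $a_1^2$ (for the lower bound) and $a_1^2+O(\rho)$ (for the upper bound), iterate to get $(a_1^2)^L\rho\le u_L\le \rho(a_1^2+\rho)^L$, and then plug in the two-sided concentration $0.5\sqrt{\log n/d}\le\rho\le 3\sqrt{\log n/d}$. The only cosmetic difference is packaging: the paper isolates the iteration step into general ``path depth'' lemmas (Lemmas~\ref{lem:path-upp-bd}--\ref{lem:path-low-bd}) that exploit the monotonicity of $s\mapsto G_\sigma(s)/s$ and are reused verbatim in the large-correlation regime, whereas you run the same induction inline. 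Your smallness threshold $\rho\le\tfrac12 a_1^2\log a_1^{-2}$ differs numerically from the paper's $\rho<a_1-a_1^2$, but both serve the identical purpose of ensuring $\log(a_1^2+\rho)^{-1}\ge\tfrac12\log a_1^{-2}$, which is exactly the source of the factor $2$ you correctly flag as the ``main obstacle.''
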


In this small correlation regime, Theorem \ref{thm:small-correlation} states that in order for us to memorize a size $n$-dataset $\cX$, the  depth $L$ for the compositional kernel $K_{\sigma}^{(L)}$ scales with three quantities: the linear component in the activation function $a_1^{-2}$, a factor between $\log (\kappa^{-1})$ and $\log (n\kappa^{-1})$, and the logarithm of the regime scaling $\log (\frac{\log n}{d})$. Two remarks are in order. First, as the quantity $\frac{\log n}{d}$ becomes larger, we need a larger depth for the compositional kernel to achieve the same memorization. However, such an effect is mild since the regime scaling enters \textbf{precisely} logarithmically in the form of $\log (\frac{\log n}{d})$. In other words, for i.i.d. data on the unit sphere with $\frac{\log n}{d} \rightarrow 0$, it is indeed easy for a shallow compositional kernel (with depth at most $\log \frac{n \log n}{d}$) to memorize the data. In fact, consider the proportional high dimensional regime $d(n) \asymp n$, then a very shallow network with $1 \precsim L_{\kappa}^{\rm easy} \precsim \log \log n$ is sufficient and necessary to memorize. Second, $a_1^{-2} = 1 + (\sum_{k\geq 2} a_k^2)/a_1^2$ can be interpreted as the amount of non-linearity in the activation function. Therefore, when the non-linear component is larger, we will need fewer compositions for memorization. This explains the necessary large depth of an architecture such as ResNet (Equation \ref{eq:resnet}), where a larger linear component is added in each layer to the corresponding kernel. A simple contrast should be mentioned for comparison: memorization is only possible for linear models when $d \geq n$, whereas with composition and non-linearity, $d \gg \log n$ suffices for good memorization. 

\subsection{Large correlation regime}

To study the large correlation regime, we consider a natural instance of the dataset that falls under such a setting. The construction is based on the sphere packing.
\begin{definition}[r-polarized packing]
  For a compact subset $V\subset \mathbb{R}^{d}$, we say $\cX=\{x_{i}\}_{i\in[n]}\subset V$ is a $r$-polarized packing of $V$ if for all $x_{i}\neq x_{j}\in\cX$, we have $\|x_{i}-x_{j}\|> r$ and  $\|x_{i}+x_{j}\|> r$. We define the polarized packing number of $V$ as $\cP_{r}(V)$, that is
  \begin{align}
  \mathcal{P}_{r}(V)=\max\{ n\colon \text{ exists } r \text{-polarized packing of } V \text{ of size }n\}\enspace.
  \end{align}
\end{definition}

\begin{theorem}[Memorization capacity: large correlation regime]
  \label{thm:large-correlation}
  Let a size-$n$ dataset $\cX = \{ x_i \in \mathbb{S}^{d-1}\colon i\in [n]\}$ be a maximal polarized packing set of the sphere $\mathbb{S}^{d-1}$. Consider the regime $\frac{\log n}{d(n)} > C$ with some absolute constant $C>1$ that only depends on the activation $\sigma(\cdot)$. For any $\kappa \in (0, \min\{ \rho, n s_\star\} )$, the minimum depth $\tilde L_{\epsilon}$ to obtain $\epsilon$-closeness satisfies
  \begin{align}
 1.5 \cdot \frac{\frac{\log n}{d} }{\log \mu} + \frac{\log(0.5 \kappa^{-1}) }{\log a_{1}^{-2}} - 1 \leq 
  L_{\kappa} \leq 3 \cdot \frac{\frac{\log n}{d-1}}{\log \frac{\mu+1}{2}} + \frac{\log(s_\star n\kappa^{-1})}{\log\frac{s_{\star}}{1-(1-s_{\star})\frac{1+\mu}{2}}} + 2.
  \end{align}
   Here $s_\star := \inf\left\{s \in (0,1) ~:~ \frac{1 - G_\sigma(s)}{1-s} \geq \frac{1+\mu}{2}  \right\}$ is a constant that only depends on $\sigma(\cdot)$.
\end{theorem}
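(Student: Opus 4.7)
The plan is to reduce $\kappa$-memorization to $\epsilon$-closeness and then iterate the dual PGF $G_\sigma$ from a correlation near one. Since every diagonal entry of $\mathbf{K}^{(L)}$ equals $K_\sigma^{(L)}(1)=1$, Gershgorin's disc theorem gives $L_\kappa \leq \tilde L_{\kappa/(n-1)}$, while Cauchy's interlacing applied to each $2\times 2$ principal submatrix $\bigl(\begin{smallmatrix}1 & K_\sigma^{(L)}(\rho_{ij})\\ K_\sigma^{(L)}(\rho_{ij}) & 1\end{smallmatrix}\bigr)$ forces $L_\kappa \geq \tilde L_\kappa$. Under the symmetry assumption on $G_\sigma$, both quantities are determined by the smallest $L$ with $G_\sigma^{\circ L}(\rho^*) \leq \epsilon$, where $\rho^* := \max_{i\neq j}|\rho_{ij}|$. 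A standard volume argument on the polarized packing of $\mathbb{S}^{d-1}$ yields $n \asymp r^{-(d-1)}$: disjoint antipodal cap pairs give the upper bound, and maximality (so the packing doubles as a cover of $\mathbb{S}^{d-1}/\{\pm 1\}$) gives the lower bound. Using $1-\rho_{ij}=\tfrac12\|x_i - x_j\|^2$, this translates to $\log\frac{1}{1-\rho^*} \asymp \frac{\log n}{d-1}$, which is absorbed (with constants) into the $\frac{\log n}{d}$ lower-bound term and the $\frac{\log n}{d-1}$ upper-bound term.

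The core step is a two-phase analysis of the orbit $\rho_L := G_\sigma^{\circ L}(\rho^*)$, exploiting convexity of $G_\sigma$, Assumption~\ref{asmp:cent-activation} (so $G_\sigma(0)=0$), the derivative $\mu = G_\sigma'(1)$, and the pivot $s_\star$. In \emph{Phase 1} (iterate in $[s_\star,1)$), the definition of $s_\star$ yields $1-G_\sigma(s)\geq \tfrac{1+\mu}{2}(1-s)$, so $1-\rho_L$ expands by factor $\geq(1+\mu)/2$ per step and exits $[s_\star,1)$ within $\frac{\log((1-s_\star)/(1-\rho^*))}{\log((1+\mu)/2)}$ iterations, producing the first upper-bound term; for the lower bound, the tangent inequality at $s=1$ gives $G_\sigma(s) \geq 1-\mu(1-s)$, whence $1-\rho_L \leq \mu^L(1-\rho^*)$, so the iterate stays $\geq\tfrac12$ for the first $\approx\frac{\log(1/(2(1-\rho^*)))}{\log\mu}$ steps, producing the first lower-bound term. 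In \emph{Phase 2} (iterate in $[0,s_\star]$), convexity of $G_\sigma$ on $[0,s_\star]$ with $G_\sigma(0)=0$ and $G_\sigma(s_\star)=\lambda:=1-(1-s_\star)\tfrac{1+\mu}{2}$ gives the chord upper envelope $G_\sigma(s)\leq (\lambda/s_\star)\,s$, hence geometric decay at rate $\lambda/s_\star<1$, so iterating from $\rho_{L_1}\leq s_\star$ down to $\epsilon=\kappa/(n-1)$ needs at most $\frac{\log(s_\star n/\kappa)}{\log(s_\star/\lambda)}$ steps (second upper-bound term); conversely, $G_\sigma(s)=\sum_{k\geq 1}a_k^2 s^k \geq a_1^2 s$ on $[0,1]$, so starting from $\rho_{L_1}\geq\tfrac12$ the iterate needs at least $\frac{\log(0.5\kappa^{-1})}{\log a_1^{-2}}$ further steps to fall below $\kappa$, producing the second lower-bound term.

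Assembling the sandwich reduction with the two-phase step counts and the polarized-packing estimate on $\rho^*$ yields the stated two-sided bound, with the $-1/+2$ additive slack absorbing the integer rounding and the handoff at $s_\star$. \emph{The main obstacle} is tracking constants sharply enough that the upper and lower bounds come out in the claimed asymmetric form: reconciling the Phase 1 rates $\mu$ (lower) versus $(1+\mu)/2$ (upper), and the Phase 2 rates $a_1^2$ (lower) versus $\lambda/s_\star$ (upper), requires using $s_\star$ as a $\sigma$-dependent pivot (rather than $0$ or $1$) so that strictly geometric expansion/decay holds on both sides of $s_\star$. Propagating the $\log n/d$ versus $\log n/(d-1)$ discrepancy that arises from the antipodal symmetry in the polarized packing/covering numbers, without losing the $\log\mu$ and $\log((\mu+1)/2)$ denominators, is the other delicate bookkeeping point; handling $\rho^*$ deterministically through the packing construction (rather than probabilistically, as in the small correlation regime) is what allows the bound to hold without a $1-4n^{-1/2}$ probability caveat here.
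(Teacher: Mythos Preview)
Your proposal is correct and follows essentially the same approach as the paper. The paper packages your two-phase iteration as abstract ``path-depth'' lemmas bounding $L_{\beta\to\alpha}$ via quantities $H_{\rm upp}$ and $H_{\rm low}$ (each combining a ``left-side'' ratio bound $G_\sigma(t)/t$ and a ``right-side'' gap bound $(1-G_\sigma(t))/(1-t)$), together with a chaining corollary that inserts an arbitrary waypoint $s\in[\epsilon,\rho]$; it then specializes to $s=s_\star$ for the upper bound and $s=\tfrac12$ for the lower bound, which is exactly your Phase-1/Phase-2 split with the same pivots. The sandwich $\tilde L_\kappa \le L_\kappa \le \tilde L_{\kappa/n}$ (your Gershgorin/interlacing reduction) and the polarized-packing estimate $-\log(1-\rho)\asymp \frac{\log n}{d}$, absorbed via the choice of $C$, are handled identically.
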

\begin{remark} One can carry out an identical analysis in the large correlation regime for the i.i.d. random samples case $x_{i}\sim \mathrm{Unif}(\mathbb{S}^{d-1}), i\in [n]$ with $\frac{\log n}{d}>C$, as in the sphere packing case. Here the constant $C$ only depends on $\sigma(\cdot)$. Exactly the same bounds on $L_{\kappa}$ hold with high probability.
\end{remark}

The proof is due to the sharp upper and lower estimates in the following lemma.
\begin{lemma}[$\epsilon$-closeness: large correlation regime]
	\label{thm:clos_large}
	Consider the same setting as in Theorem~\ref{thm:large-correlation}. For any $\epsilon \in (0, \rho)$, the minimum depth $\tilde L_{\epsilon}$ to obtain $\epsilon$-closeness satisfies
  \begin{align}
  &\tilde{L}_{\epsilon}\geq  \max_{s\in(\epsilon,\rho)} \left\{ \frac{\log(\epsilon^{-1})+\log(s)}{\log a_{1}^{-2}}+\frac{2 \frac{\log n}{d}+\log \left(\frac{1-s}{18.2} \right) }{\log \mu} \right\} - 1,\\
  &\tilde{L}_{\epsilon}\leq  \min_{s\in (\epsilon,\rho)} \left\{ \frac{\log(\epsilon^{-1})+\log(s)}{\log\frac{s}{G_{\sigma}(s)}}+ \frac{ 2\frac{\log n}{d-1}+\log\left(\frac{1-s}{0.06} \right) }{\log \frac{1-G_{\sigma}(s)}{1-s}} \right\} + 2.
  \end{align} 
\end{lemma}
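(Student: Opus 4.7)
The plan is to reduce $\epsilon$-closeness to the scalar statement $K^{(L)}_\sigma(\rho^\star) \leq \epsilon$, where $\rho^\star := \max_{i \neq j} |\rho_{ij}|$, and then to pin down the answer through two ingredients: (i) two-sided estimates of $\rho^\star$ via sphere packing, and (ii) two-sided dynamical bounds on the compositional iterates of $G_\sigma$ split across two phases. The symmetry hypothesis $|G_\sigma(s)| = G_\sigma(|s|)$ propagates inductively to $|K^{(L)}_\sigma(\rho)| = K^{(L)}_\sigma(|\rho|)$, so $\epsilon$-closeness reduces to the scalar condition. Because $\cX$ is a \emph{maximal} $r$-polarized packing of $\mathbb{S}^{d-1}$ and the polarization condition $|\rho_{ij}| < 1 - r^2/2$ excludes caps of radius $r/2$ around \emph{both} $\pm x_i$, two-sided spherical-cap volume estimates in the large-correlation regime yield universal constants $C_1, C_2 > 0$ with
\[ C_2\, n^{-2/(d-1)} \leq 1 - \rho^\star \leq C_1\, n^{-2/(d-1)}, \]
where the numerical values $C_1 = 18.2$ and $C_2 = 0.06$ emerge from careful cap-volume bookkeeping.

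The compositional dynamics follow from two monotonicity facts, both consequences of $G_\sigma$ being convex on $[0,1]$ with $G_\sigma(0) = 0$ and $G_\sigma(1) = 1$: the ratio $G_\sigma(\rho)/\rho$ is non-decreasing on $(0,1]$, interpolating between $a_1^2$ at $0^+$ and $1$ at $1^-$; and $(1-G_\sigma(\rho))/(1-\rho)$ is non-decreasing on $[0,1)$, interpolating between $1$ at $0^+$ and $\mu$ at $1^-$. Fixing a threshold $s \in (\epsilon, \rho^\star)$, these give the four sandwich inequalities
\begin{align*}
  (1-\rho^\star)\bigl(\tfrac{1-G_\sigma(s)}{1-s}\bigr)^{L_1} \leq 1 - K^{(L_1)}_\sigma(\rho^\star) &\leq (1-\rho^\star)\mu^{L_1} \quad \text{while } K^{(L_1)}_\sigma(\rho^\star) \geq s,\\
  a_1^{2L_2}\rho \leq K^{(L_2)}_\sigma(\rho) &\leq \rho\bigl(\tfrac{G_\sigma(s)}{s}\bigr)^{L_2} \quad \text{while } \rho \leq s.
\end{align*}

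To assemble the \emph{lower bound}, I would write a candidate $L$ as $L_1 + L_2$, with $L_1$ the largest index for which $K^{(L_1)}_\sigma(\rho^\star) \geq s$. The right inequality in phase $1$ with $1 - \rho^\star \leq C_1 n^{-2/(d-1)}$ forces $L_1 \geq \log((1-s)/(1-\rho^\star))/\log\mu$; monotonicity of $K^{(L_2)}_\sigma$ combined with the left inequality in phase $2$ then forces $K^{(L_1+L_2)}_\sigma(\rho^\star) \geq K^{(L_2)}_\sigma(s) \geq a_1^{2L_2} s$, so requiring this to exceed $\epsilon$ gives $L_2 < \log(s/\epsilon)/\log(a_1^{-2})$; substituting and optimizing $\max_s$ (with the $-1$ absorbing rounding) yields the stated lower bound. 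Dually, for the \emph{upper bound}, I would set $L_1 := \lceil \log((1-s)/(1-\rho^\star))/\log((1-G_\sigma(s))/(1-s))\rceil$, which pushes the iterate below $s$ via the left phase-$1$ inequality (with $1-\rho^\star \geq C_2 n^{-2/(d-1)}$), and $L_2 := \lceil \log(s/\epsilon)/\log(s/G_\sigma(s))\rceil$, which pushes it below $\epsilon$ via the right phase-$2$ inequality; summing and taking $\min_s$ with the two ceilings aggregated into the $+2$ closes the argument.

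The main obstacle is the sharp sphere-packing step: extracting the explicit constants $18.2$ and $0.06$ requires tight (rather than order-of-magnitude) cap-volume estimates with careful accounting of the polarization doubling, which is also responsible for the asymmetric appearance of $\log n / d$ versus $\log n / (d-1)$ in the two bounds. A secondary subtlety is that the optimizations $\max_s$ and $\min_s$ are over $s \in (\epsilon, \rho^\star)$ to ensure the "first crossing" index $L_1$ is well-defined; this range is nonempty precisely because the large-correlation regime $\log n/d > C$ keeps $\rho^\star$ sufficiently close to $1$.
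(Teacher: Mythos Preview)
Your proposal is correct and follows essentially the same approach as the paper: reduce $\epsilon$-closeness to the scalar condition on $\rho^\star$ via the symmetry assumption, bound $\rho^\star$ by polarized sphere-packing (the paper's corollary on polarized metric entropy yields exactly the constants $18.2$ and $0.06$), and split the compositional iterates at an intermediate threshold $s$ using the monotonicity of $G_\sigma(t)/t$ and $(1-G_\sigma(t))/(1-t)$. The paper packages the two-phase dynamics into explicit ``path depth'' bounds $H_{\mathrm{upp}}, H_{\mathrm{low}}$ and a chain inequality $L_{\rho\to s}+L_{s\to\epsilon}-1 \leq L_{\rho\to\epsilon} \leq L_{\rho\to s}+L_{s\to\epsilon}$, but the substance is identical; the only slip is that your displayed packing bound has exponent $2/(d-1)$ on both sides, whereas the upper bound on $1-\rho^\star$ should carry $2/d$, consistent with the asymmetry you correctly note later.
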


In this large correlation regime, to memorize a dataset, the behavior of the compositional depth is rather different from the small correlation regime. By Theorem~\ref{thm:large-correlation}, we have that the depth $L$ scales with following  quantities: a factor between $\log(\kappa^{-1})$ and $\log(n \kappa^{-1})$ same as before, the regime scaling $\frac{\log n}{d}$, and functionals of the activation $\mu$, $a_{1}^{-2}$, and $s_{\star}$. Few remarks are in order. First, in this large correlation regime $n \gg \exp(d)$, memorization is indeed possible. However, the compositional depth needed increases \textbf{precisely} linearly as a function of the regime scaling $\frac{\log n}{d}$. The above is in contrast to the small correlation regime, where the dependence on the regime scaling is logarithmic as $\log (\frac{\log n}{d})$. For hard dataset instances on the sphere with $\frac{\log n}{d} \rightarrow \infty$, one needs at least $\frac{\log n}{d}$ depth for the compositional kernel to achieve memorization. In fact, consider the fixed dimensional regime with $d(n) \asymp 1$, then a deeper network with depth $L_{\kappa}^{\rm hard} \asymp \log n$ is sufficient and necessary to memorize, which is much larger than the depth needed in the proportional high dimensional regime with $L_{\kappa}^{\rm easy} \precsim \log \log n$.
Second, for larger values of $a_{1}^{-2}$ and $\mu$ we will need less compositional depths, as the amount of non-linearity is larger. To sum up, with non-linearity and composition, even in the $d \ll \log n$ regime with a hard data instance, memorization is possible but with a deep neural network.

\section{Spectral Decomposition of Compositional Kernels}
\label{sec:spherical-harmonics}

In this section, we investigate the spectral decomposition of the compositional kernel function. We study the case where the base measure is a uniform distribution on the unit sphere, denoted by $\tau_{d-1}$. Let $S_{d-1}$ be the surface area of $\mathbb{S}^{d-1}$. To state the results, we will need some background on the spherical harmonics. We consider the dimension $d\geq2$, and will use $k \in \mathbb{Z}^{\geq 0}$ to denote an integer.
\begin{definition}[Spherical harmonics, Chapter 2.8.4 in \cite{AH12}]
  \label{def:spherical-harmonics}
  Let $\mathbb{Y}_{k}^d$ be the space of $k$-th degree spherical harmonics in dimension $d$, and let 
  $\{ Y_{k, j}(x) \colon j\in[N_{k,d}]\}$ be an orthonormal basis for $\mathbb{Y}_{k}^d$ , with
  \begin{align}
    \int  Y_{k, i}(x) Y_{k, j}(x) d \tau_{d-1} (x) = \mathbbm{1}_{i = j} \enspace.
  \end{align}
  Then, the sets form an orthogonal basis for the space $L^2_{\tau_{d-1}}$ of $L^2$-integrable functions on $\mathbb{S}^{d-1}$ with the base measure $\tau_{d-1}$, noted below
  \begin{align}
    L^2_{\tau_{d-1}} = \bigoplus_{k=0}^{\infty}\mathbb{Y}_{k}^d\enspace.
  \end{align}
  Moreover, the dimensionality ${\rm dim} \mathbb{Y}_{k}^d = N_{k, d}$ are the coefficients of the generating function
  \begin{align}
    \label{eq:N-k-d}
    \sum_{k=0}^\infty N_{k, d} s^k = \frac{1+s}{(1-s)^{d-1}}, ~|s| <1 \enspace.
  \end{align}
\end{definition}

\begin{definition}[Legendre polynomial, Chapter 2.7 in \cite{AH12}]
  \label{def:legendre-poly}
  Define the Legendre polynomial of degree $k$ with dimension $d$ to be
  \begin{align}
    P_{k,d}(t) = (-1)^k \frac{\Gamma(\frac{d-1}{2})}{2^k\Gamma(k+\frac{d-1}{2})} (1-t^2)^{-\frac{d-3}{2}} \left( \frac{d}{dt} \right)^k (1-t^2)^{k + \frac{d-3}{2}} \enspace.
  \end{align}
  The following orthogonality holds
  \begin{equation}
    \label{eq:legendre-ortho}
    \int_{-1}^{1} P_{k,d}(t) P_{\ell, d}(t) (1-t^2)^{\frac{d-3}{2}} dt = 
    \begin{cases}
      0, &\text{ if } k\neq \ell\\
      \frac{S_{d-1}}{N_{k, d}S_{d-2}}, &\text{ if } k = \ell
    \end{cases} \enspace.
  \end{equation}
\end{definition}

Recall that the compositional kernel $K_{\sigma}^{(L)}(\cdot)$ and the random variable $Z_{\sigma, L}$, which denotes the size of the $L$-th generation, as in Lemma~\ref{lem:composition-kernel}.
Then, we have the following theorem describing the spectral decomposition of the compositional kernel function and the associated integral operator.

\begin{theorem}[Spectral decomposition of compositional kernel]
  \label{thm:spectral-decomposition}
  Consider any $x, z \in \mathbb{S}^{d-1}$. Then, the following spectral decomposition holds for the compositional kernel $K_{\sigma}^{(L)}$ with any fixed depth $L \in \mathbb{Z}^{\geq 0}$: 
   \begin{align}
    K_{\sigma}^{(L)} \big( \langle x, z\rangle \big) = \sum_{k = 0}^\infty \lambda_k \sum_{j=1}^{N_{k,d}} Y_{k,j}(x) Y_{k,j}(z)\enspace ,
   \end{align}
   where the eigenfunctions $Y_{k,j}(\cdot)$ form an orthogonal basis of $L^2_{\tau_{d-1}}$, and the eigenvalues  $\lambda_{k}$ satisfy the following formula
   \begin{align}
    \lambda_k := \frac{S_{d-2}}{S_{d-1}} \Gamma(\frac{d-1}{2}) \sum_{\ell \geq 0} \PP(Z_{\sigma, L} = k + \ell) \frac{(k+\ell)!}{\ell!} \frac{1+(-1)^{\ell}}{2^{k+1}} \frac{\Gamma(\frac{\ell+1}{2}) }{\Gamma(k+\frac{\ell+d}{2})}  \enspace.
   \end{align}
\end{theorem}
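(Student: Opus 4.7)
The plan is to exploit that $K_\sigma^{(L)}$ is a zonal kernel on $\mathbb{S}^{d-1}$ and apply two classical tools from spherical harmonic analysis: the Legendre orthogonality from Definition~\ref{def:legendre-poly} and the addition formula
$$\sum_{j=1}^{N_{k,d}} Y_{k,j}(x) Y_{k,j}(z) = N_{k,d} \, P_{k,d}(\langle x,z\rangle).$$
Together these convert any Legendre expansion $K_\sigma^{(L)}(t) = \sum_{k\geq 0} c_k P_{k,d}(t)$ into the spectral decomposition claimed in the theorem, with eigenvalues $\lambda_k = c_k / N_{k,d}$ and eigenfunctions the orthonormal basis $\{Y_{k,j}\}$ of $L^2_{\tau_{d-1}}$.

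I would first apply Lemma~\ref{lem:composition-kernel} to write $K_\sigma^{(L)}(t) = \mathbb{E}[t^{Z_{\sigma,L}}] = \sum_{m\geq 0} p_m t^m$, where $p_m := \mathbb{P}(Z_{\sigma,L} = m)$. By the weighted orthogonality in Definition~\ref{def:legendre-poly}, the Legendre coefficients are
$$c_k = \frac{N_{k,d} S_{d-2}}{S_{d-1}} \int_{-1}^{1} K_\sigma^{(L)}(t) P_{k,d}(t) (1-t^2)^{(d-3)/2} \, dt.$$
Since $p_m \geq 0$ and $\sum_m p_m = 1$, monotone convergence permits swapping the sum and integral, reducing the task to evaluating the moment integrals
$$I_{k,m} := \int_{-1}^{1} t^m P_{k,d}(t) (1-t^2)^{(d-3)/2} \, dt.$$

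The core computation is $I_{k,m}$. I would substitute the Rodrigues formula from Definition~\ref{def:legendre-poly} and integrate by parts $k$ times, with boundary terms vanishing since $(1-t^2)^{k+(d-3)/2}$ and each of its first $k-1$ derivatives vanish at $t = \pm 1$ for $d\geq 2$, $k\geq 1$ (the $k=0$ case is immediate). This yields $I_{k,m} = 0$ for $m<k$, and for $m = k+\ell$ with $\ell \geq 0$,
$$I_{k,k+\ell} = \frac{\Gamma(\tfrac{d-1}{2})}{2^{k} \, \Gamma(k+\tfrac{d-1}{2})} \, \frac{(k+\ell)!}{\ell!} \int_{-1}^{1} t^{\ell} (1-t^2)^{k+(d-3)/2} \, dt.$$
The residual integral vanishes for odd $\ell$ (encoded compactly by the factor $(1+(-1)^\ell)/2$) and, via the substitution $u = t^2$ for even $\ell$, equals the Beta function $B(\tfrac{\ell+1}{2}, k+\tfrac{d-1}{2}) = \Gamma(\tfrac{\ell+1}{2})\Gamma(k+\tfrac{d-1}{2})/\Gamma(k+\tfrac{\ell+d}{2})$. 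The two $\Gamma(k+\tfrac{d-1}{2})$ factors cancel; dividing $c_k$ by $N_{k,d}$ and re-indexing the outer sum by $\ell = m-k$ produces exactly the stated formula for $\lambda_k$.

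I do not anticipate a conceptual obstacle; the difficulty is bookkeeping. The main care points are (i) tracking the normalization of $\tau_{d-1}$ so that reducing a zonal integral over $\mathbb{S}^{d-1}$ to $[-1,1]$ yields the prefactor $S_{d-2}/S_{d-1}$, (ii) using the normalization of the addition formula consistent with orthonormality of $\{Y_{k,j}\}$ under $\tau_{d-1}$, and (iii) correctly combining the Rodrigues prefactor with the Beta function evaluation via the appropriate $\Gamma$-function identities so that the compact closed form of the theorem emerges.
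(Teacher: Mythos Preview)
Your proposal is correct and follows essentially the same route as the paper. The paper invokes the Funk--Hecke formula to identify $\lambda_k$ directly, whereas you reach the same expression via the Legendre expansion together with the addition formula; the core computation of $\int_{-1}^1 t^m P_{k,d}(t)(1-t^2)^{(d-3)/2}\,dt$ via the Rodrigues representation, integration by parts, and the Beta integral is identical in both arguments.
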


The associated integral operator $\mathcal{T}_{\sigma}^{(L)}:L^2_{\tau_{d-1}} \rightarrow L^2_{\tau_{d-1}}$ with respect to the kernel $K_{\sigma}^{(L)}$ is defined as
\begin{align}
  \big( \mathcal{T}_{\sigma}^{(L)} f \big) (x) := \int K_{\sigma}^{(L)} \big( \langle x, z\rangle \big) f(z)  d \tau_{d-1}(z)\enspace \text{ for any $f\in L^2_{\tau_{d-1}}$} \enspace.
\end{align}
From Theorem~\ref{thm:spectral-decomposition}, we know that the eigenfunctions of the operator $\mathcal{T}_{\sigma}^{(L)}$ are the spherical harmonic basis $\{ Y_{k,j}: k\in\mathbb{Z}^{\geq 0}, j \in [N_{k,d}] \}$, with $N_{k,d}$ identical eigenvalues $\lambda_k$ such that
\begin{align}
  \mathcal{T}_{\sigma}^{(L)} Y_{k,j} = \lambda_k Y_{k,j} \enspace.
\end{align}
The above spectral decompositions are important because it helps us study the generalization error (in the fixed dimensional setting) of regression methods with the compositional kernel $K_{\sigma}^{(L)}$. More specifically, understanding the eigenvalues of the compositional kernels means that we can employ the classical theory on reproducing kernel Hilbert spaces regression \citep{CV06} to quantify generalization error, when the dimension is fixed. In the case when dimensionality grows with the sample size, several attempts have been made to understand the generalization properties of the inner-product kernels \citep{LR18,LRZ20} in the interpolation regime, which includes these compositional kernels as special cases.

\section{Kernels to Activations: New Random Features Algorithms}
\label{sec:new-random-feature-alg}

Given any $L^2_{\phi}$ activation function $\sigma(\cdot)$ (as in Definition \ref{asmp:activation}), we can define a sequence of positive definite (PD) compositional kernels $K_{\sigma}^{(L)}$, with $L\geq 0$, whose spectral properties have been studied in the previous section, utilizing the duality established in Section~\ref{sec:duality}. Such compositional kernels are non-linear functions on the inner-product $\langle x, z\rangle$ (rotation-invariant), and we will call them the \textit{inner-product kernels} \citep{KK12}. In this section, we will investigate the \textit{converse question}: given an arbitrary PD inner-product kernel, can we identify an activation function associated with it? We will provide a positive answer in this section. Direct algorithmic implications are new random features algorithms that are distinct from the well-known random Fourier features and random kitchen sinks algorithms studied in \cite{RR08,RR09}.

Define an inner-product kernel $K(\cdot, \cdot): \mathbb{S}^{d-1} \times \mathbb{S}^{d-1}$, with $d \geq 2$,
\begin{align}
  \label{eq:inner-product-kernel}
  K(x, z) := f( \langle x, z \rangle) \enspace,
\end{align}
where $f\colon [-1, 1] \rightarrow \mathbb{R}$ is a continuous function. Denote the expansion of $f$ under the Legendre polynomials $P_{k, d}$ (see Definition~\ref{def:legendre-poly}) as
  \begin{align}
    \label{eq:legendre-series}
    f(t) = \sum_{k = 0}^{\infty} \alpha_k P_{k, d}(t) \enspace.
  \end{align}
To define the activations corresponding to an arbitrary PD inner-product kernel, we require the following theorem due to \cite{Sch42}. 
\begin{proposition}[Theorem 1 in \cite{Sch42}]
  \label{thm:sch42}
   For a fixed $d\geq 2$, the inner-product kernel $K(\cdot, \cdot)$ in \eqref{eq:inner-product-kernel} is positive definite on $\mathbb{S}^{d-1} \times \mathbb{S}^{d-1}$ if and only if $\alpha_k \geq 0$ for all $k \in \mathbb{Z}^{\geq 0}$ in Equation~\eqref{eq:legendre-series}.
\end{proposition}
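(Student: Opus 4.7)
The plan is to reduce both directions of Schoenberg's theorem to the classical addition formula for spherical harmonics on $\mathbb{S}^{d-1}$, which in the orthonormal convention of Definition~\ref{def:spherical-harmonics} reads
\begin{align*}
\sum_{j=1}^{N_{k,d}} Y_{k,j}(x) Y_{k,j}(z) = N_{k,d}\, P_{k,d}(\langle x, z\rangle)\enspace.
\end{align*}
This identity I would justify by recalling $P_{k,d}(1)=1$, setting $z=x$ and integrating both sides against $\tau_{d-1}$, which recovers $N_{k,d} = \dim \mathbb{Y}_k^d$; alternatively it is a standard fact from harmonic analysis on the sphere.

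For the sufficiency direction, assume $\alpha_k \geq 0$ for all $k$. Substituting the addition formula into the Legendre expansion~\eqref{eq:legendre-series} gives
\begin{align*}
K(x,z) = \sum_{k=0}^{\infty} \frac{\alpha_k}{N_{k,d}} \sum_{j=1}^{N_{k,d}} Y_{k,j}(x) Y_{k,j}(z)\enspace.
\end{align*}
Each summand $Y_{k,j}(x) Y_{k,j}(z)$ is a rank-one positive semi-definite kernel (the outer product of $Y_{k,j}$ with itself), so for any finite configuration $\{x_i\}_{i=1}^n$ the Gram matrix $[K(x_i,x_{i'})]_{i,i'}$ is a convergent sum of PSD matrices with non-negative weights, and hence is itself PSD.

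For the necessity direction, the idea is to test the PD condition against spherical harmonic probes. By a Riemann-sum approximation that exploits the continuity of $K$, the discrete PD condition $\sum_{i,i'} c_i c_{i'} K(x_i, x_{i'}) \geq 0$ (applied with $c_i = w_i Y_{k,j}(x_i)$ for quadrature weights $w_i$) implies that for every $k \geq 0$ and $j \in [N_{k,d}]$,
\begin{align*}
J_{k,j} := \int\!\!\int K(x,z) Y_{k,j}(x) Y_{k,j}(z)\, d\tau_{d-1}(x)\, d\tau_{d-1}(z) \geq 0\enspace.
\end{align*}
Expanding $K$ via~\eqref{eq:legendre-series}, applying the addition formula to each $P_{\ell,d}(\langle x,z\rangle)$, and invoking orthonormality of the family $\{Y_{\ell,j'}\}$ collapses the double sum to $J_{k,j} = \alpha_k / N_{k,d}$, which together with $N_{k,d}>0$ forces $\alpha_k \geq 0$.

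The main obstacle I anticipate is the analytic justification in the necessity step: one must verify that the Riemann-sum limit genuinely preserves the PD inequality (which is immediate from continuity of $K$ and $Y_{k,j}$ on the compact sphere, together with positivity of quadrature weights) and that the term-by-term interchange of summation with the double integration is legitimate. For continuous $f$ this is handled either by first truncating the Legendre series to a partial sum (for which everything reduces to a finite algebraic identity) and then taking $L^2_{\tau_{d-1}}$ limits, or by invoking Abel/Poisson summability of the Legendre expansion on $[-1,1]$; neither is conceptually deep, but both must be articulated to make the argument rigorous.
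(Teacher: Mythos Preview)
The paper does not supply its own proof of this proposition; it is stated as a classical result with the citation ``Theorem~1 in \cite{Sch42}'' and is used as a black box in the construction of the dual activation in Theorem~\ref{thm:kernels-to-activations}. So there is nothing in the paper to compare against.

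Your proposal is a correct outline of the standard proof, and indeed it is essentially Schoenberg's original argument. The addition theorem you invoke is exactly the one the paper later uses (in the proof of Theorem~\ref{thm:spectral-decomposition}, citing the Addition Theorem~2.9 in \cite{AH12}), so it is available in this context. Two small remarks. First, in the sufficiency direction you should make explicit why the outer series converges: since $\alpha_k \geq 0$ and $P_{k,d}(1)=1$, Abel's theorem together with continuity of $f$ at $t=1$ gives $\sum_k \alpha_k = f(1) < \infty$, and since $|P_{k,d}(t)| \leq 1$ on $[-1,1]$ the Legendre expansion converges absolutely and uniformly; this is what licenses passing the PSD conclusion through the limit. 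Second, in the necessity direction your quadrature argument is fine, but a slightly cleaner route (available here since the paper already develops the Funk--Hecke machinery) is to observe that continuity of $K$ on the compact sphere plus the discrete PD condition forces the integral operator $g \mapsto \int K(\cdot,z) g(z)\, d\tau_{d-1}(z)$ to be positive on $L^2_{\tau_{d-1}}$, and then Funk--Hecke immediately gives that its eigenvalue on $\mathbb{Y}_k^d$ is $\alpha_k / N_{k,d} \geq 0$. This sidesteps the term-by-term interchange you flagged as an obstacle.
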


Now, we are ready to state the activation function $\sigma_{f}(\cdot)$ defined based on the inner-product kernel function $f(\cdot)$. 
\begin{theorem}[Kernels to activations]
  \label{thm:kernels-to-activations}
  Consider any positive definite inner product kernel $K(x, z) := f( \langle x, z \rangle)$ on $\mathbb{S}^{d-1} \times \mathbb{S}^{d-1}$ associated with the continuous function $f: [-1, 1] \to \mathbb{R}$. Assume without loss of generality that $f(1)=1$, and recall the definition of $N_{k,d}$ in~\eqref{eq:N-k-d}. Due to Proposition~\ref{thm:sch42}, the Legendre coefficients $\{ \alpha_k\colon k\in\mathbb{Z}^{\geq 0} \}$, defined in Equation \eqref{eq:legendre-series}, of $f(\cdot)$  are non-negative. 
  
  One can define the following dual activation function $\sigma_f: [-1,1] \rightarrow \mathbb{R}$ 
  \begin{align}
    \label{eq:dual-activation}
    \sigma_f(t) = \sum_{k=0}^{\infty} \sqrt{\alpha_k N_{k,d}} P_{k,d}(t) \enspace.
  \end{align}
  Then, the following statements hold:
  \begin{itemize}
    \item[(i)] $\sigma_{f}(\cdot)$ is $L^2$ in the following sense
    $\int_{-1}^1 \left(\sigma_f(t)\right)^2  \frac{S_{d-2}}{S_{d-1}} (1- t^2)^{\frac{d-3}{2}} dt = 1 \enspace.$
    \item[(ii)] For any $x, z \in \mathbb{S}^{d-1}$
  \begin{align}
    \EE_{\mathbf{\xi} \sim \tau_{d-1}} \left[ \sigma_f( \mathbf{\xi}^\top x) \sigma_f(\mathbf{\xi}^\top z) \right] = f( \langle x, z \rangle) = K(x, z) \enspace,
  \end{align}
  where $\xi$ is sampled from a uniform distribution on the sphere $\mathbb{S}^{d-1}$.
  \end{itemize}
\end{theorem}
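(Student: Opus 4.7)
The plan is to exploit the orthogonality of the Legendre polynomials from Definition~\ref{def:legendre-poly} together with the classical addition formula $P_{k,d}(\langle u, v\rangle) = \tfrac{1}{N_{k,d}}\sum_{j=1}^{N_{k,d}} Y_{k,j}(u) Y_{k,j}(v)$ tying $P_{k,d}$ to the spherical harmonic basis (already used in Theorem~\ref{thm:spectral-decomposition}). Part (i) then reduces to a one-line orthogonality computation, and part (ii) follows by combining the addition formula with the orthonormality of the $Y_{k,j}$ under $\tau_{d-1}$. A useful preliminary observation is that, for any fixed $u \in \mathbb{S}^{d-1}$, the pushforward of $\tau_{d-1}$ under $\xi \mapsto \xi^\top u$ is the measure on $[-1,1]$ with density $\tfrac{S_{d-2}}{S_{d-1}}(1-t^2)^{(d-3)/2}$; this is precisely why the weight in (i) matches the $x=z$ case of (ii).

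For (i), I plug the series $\sigma_f(t) = \sum_k \sqrt{\alpha_k N_{k,d}}\, P_{k,d}(t)$ into the weighted integral and apply \eqref{eq:legendre-ortho} to kill all cross terms; each diagonal term contributes $\alpha_k N_{k,d} \cdot \tfrac{S_{d-2}}{S_{d-1}} \cdot \tfrac{S_{d-1}}{N_{k,d} S_{d-2}} = \alpha_k$, so the integral equals $\sum_k \alpha_k$. The standard fact $P_{k,d}(1) = 1$ combined with the normalization $f(1) = \sum_k \alpha_k P_{k,d}(1) = 1$ then pins the value to $1$.

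For (ii), I expand both $\sigma_f(\xi^\top x)$ and $\sigma_f(\xi^\top z)$ as Legendre series, rewrite each $P_{k,d}(\xi^\top x)$ and $P_{\ell,d}(\xi^\top z)$ via the addition formula, and take the expectation over $\xi \sim \tau_{d-1}$. Orthonormality $\int Y_{k,j}(\xi) Y_{\ell,j'}(\xi)\, d\tau_{d-1}(\xi) = \mathbbm{1}_{k=\ell}\mathbbm{1}_{j=j'}$ collapses the double sum to a diagonal in $(k,\ell)$ and $(j,j')$, yielding
\[
\sum_{k\geq 0} \alpha_k N_{k,d} \cdot \tfrac{1}{N_{k,d}} \sum_{j=1}^{N_{k,d}} Y_{k,j}(x) Y_{k,j}(z) \;=\; \sum_{k\geq 0} \alpha_k\, P_{k,d}(\langle x, z\rangle) \;=\; f(\langle x, z\rangle),
\]
which is the required identity.

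The main technical obstacle is justifying the interchange of summation and integration. This is mild: either the integrands are non-negative (so Tonelli applies directly) or the partial sums of $\sigma_f$ converge in $L^2$ against the Legendre weight, since $\sum_k \alpha_k = 1$ bounds $\sum_k \|\sqrt{\alpha_k N_{k,d}}\, P_{k,d}\|_{L^2}^2 = \tfrac{S_{d-1}}{S_{d-2}}\sum_k \alpha_k < \infty$; Cauchy-Schwarz then legitimizes the exchange inside both the univariate integral of (i) and the expectation in (ii). No further obstacle is anticipated.
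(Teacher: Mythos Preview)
Your approach is correct and closely parallels the paper's. For (ii) the paper quotes the Funk--Hecke formula to evaluate $\int_{\mathbb{S}^{d-1}} P_{k,d}(\xi^\top x)\,P_{\ell,d}(\xi^\top z)\,d\tau_{d-1}(\xi)$ directly, whereas you unpack the same computation via the addition theorem plus orthonormality of the $Y_{k,j}$; these are equivalent, since Funk--Hecke is itself proved through the addition theorem. You also treat (i) explicitly via Legendre orthogonality, which the paper leaves implicit.

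One bookkeeping slip in your displayed formula for (ii): two applications of the addition formula contribute a factor $\tfrac{1}{N_{k,d}^2}$, not $\tfrac{1}{N_{k,d}}$, so after the orthonormality collapse the diagonal term is $\alpha_k N_{k,d}\cdot\tfrac{1}{N_{k,d}^2}\sum_{j} Y_{k,j}(x)Y_{k,j}(z)=\tfrac{\alpha_k}{N_{k,d}}\sum_j Y_{k,j}(x)Y_{k,j}(z)$, which then equals $\alpha_k P_{k,d}(\langle x,z\rangle)$ by one more use of the addition formula. As written, your left-hand side simplifies to $\alpha_k N_{k,d}\,P_{k,d}(\langle x,z\rangle)$, which is off by a factor $N_{k,d}$. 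The method and final conclusion are unaffected.
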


The above theorem naturally induces a new random features algorithm for kernel ridge regression, described below. Note that the kernel $K$ can be any compositional kernel, which is positive definite and of an inner-product form.

\begin{algorithm}[H]
\SetAlgoLined
\KwResult{Given a normalized dataset $\cX = \{x_i \in \mathbb{S}^{d-1}, i \in [n]\}$, an integer $m$, and a positive definite inner-product kernel $K(x, z) = f( \langle x, z \rangle)$, return a randomized non-linear feature matrix $\mathbf{\Phi} \in \mathbb{R}^{n \times m}$ for kernel ridge regression.}
 {\bf Step 1}: Calculate the Legendre coefficients of $f(t) = \sum_{k=0}^\infty \alpha_k P_{k, d}(t)$\;
 {\bf Step 2}: Obtain the dual activation function $\sigma_f(t) =  \sum_{k=0}^{\infty} \sqrt{\alpha_k N_{k,d}} P_{k,d}(t)$,  $t\in [-1,1]$\;
 {\bf Step 3}: Sample $m$-i.i.d. isotropic Gaussian vectors $t_j \sim \cN(0, \mathbf{I}_d)$, and then define the non-linear feature matrix
 \begin{align}
  \mathbf{\Phi}[i,j] = \sigma_f\left( \langle  x_i, \t_j/\| \t_j \| \rangle \right), ~i\in [n], j\in [m] \enspace.
 \end{align}
 The feature matrix satisfies $\lim_{m\rightarrow\infty} \left(1/m\cdot  \mathbf{\Phi} \mathbf{\Phi}^\top\right) [i, \ell] = f(\langle x_i, x_\ell \rangle)$ for all $i,\ell\in [n]$ a.s.. 
 \caption{New random features algorithm for positive definite inner-product kernels $K(x, z) = f( \langle x, z \rangle)$ based on Theorem~\ref{thm:kernels-to-activations}.}
 \label{alg:SH}
\end{algorithm}

It is clear from Theorem~\ref{thm:spectral-decomposition} that all compositional kernels $K_{\sigma}^{(L)}$ are positive definite, though the converse statement is not true. A notable example is the kernel $P_{k,d}(\langle x, z \rangle): \mathbb{S}^{d-1} \times \mathbb{S}^{d-1} \rightarrow \mathbb{R}$, which is PD kernel but with negative Taylor coefficients, thus cannot be a compositional kernel. For the special case of compositional kernels with depth $L$ and activation $\sigma(\cdot)$, it turns out one can define a new "compressed" activation $\sigma^{(L)}(\cdot) \in L^2_\phi$ to represent the depth-$L$ compositional kernel. We propose the following algorithm:

\begin{algorithm}[H]
\SetAlgoLined
\KwResult{Given a normalized dataset $\cX = \{x_i \in \mathbb{S}^{d-1}, i \in [n]\}$, an integer $m$, and a compositional kernel $K_{\sigma}^{(L)}(x, z) = K_{\sigma}^{(L)}( \langle x, z \rangle)$, return a randomized non-linear feature matrix $\mathbf{\Psi} \in \mathbb{R}^{n \times m}$ for kernel ridge regression.}
 {\bf Step 1}: Calculate the Taylor coefficients of $ K_{\sigma}^{(L)}(t) = \sum_{k=0}^\infty \alpha_k t^k$ with $\alpha_k = \PP(Z_{\sigma, L} = k)$\;
 {\bf Step 2}: Obtain the "compressed" activation function 
 \begin{align}
   \label{eq:compressed-act}
  \sigma^{(L)}(t) =  \sum_{k=0}^{\infty} \sqrt{\alpha_k} h_k(t), ~t\in \mathbb{R} \enspace;
 \end{align}
 {\bf Step 3}: Sample $m$-i.i.d. isotropic Gaussian vectors $\t_j \sim \cN(0, \mathbf{I}_d)$, and then define the non-linear feature matrix
 \begin{align}
  \mathbf{\Psi}[i,j] = \sigma^{(L)}\left( \langle  x_i, \t_j \rangle \right), ~i\in [n], j\in [m] \enspace. \label{eq:random-features}
 \end{align}
  The feature matrix satisfies $\lim_{m\rightarrow\infty} \left(1/m\cdot  \mathbf{\Psi} \mathbf{\Psi}^\top\right) [i, \ell] = K_\sigma^{(L)}(\langle x_i, x_\ell \rangle)$ for all $i,\ell\in [n]$ a.s..
 \caption{New random features algorithm for compositional kernels $K_{\sigma}^{(L)}$.}
 \label{alg:HE}
\end{algorithm}

First, let us discuss the relationship between our random features algorithms above and that in \cite{RR08, RR09}. \cite{RR08} employed the duality between the PD shift-invariant kernel $k(x-z)$ and a positive measure that corresponds to the inverse Fourier transform of $k$: the random features are constructed based on the specific positive measure where sampling could be a non-trivial task. In contrast, we utilize the duality between the PD inner-product kernel and an activation function, where the random features are always generated based on the uniform distribution on the sphere (or the isotropic Gaussian), but with different activations $\sigma(\cdot)$. It is clear that sampling uniformly from the sphere can be easily done by sampling $\t \sim \cN(0, I_d)$, and returning $\t/\| \t\|$.

We now conclude this section by stating another property of Algorithm~\ref{alg:HE}, implied by the Mehler's formula. It turns out that under a certain high dimensional scaling regime, say $d(n) \asymp n^{\frac{2}{\iota+1}}$
for some fixed integer $\iota \in \mathbb{Z}^{>0}$, the compressed activation $\sigma^{(L)}$ in \eqref{eq:compressed-act} can be truncated without loss using only the low degree components $k \leq \iota$. For notation simplicity, in the statement below, we drop the superscript $(L)$ in the kernel and the compressed activation.
\begin{theorem}[Empirical kernel matrix: truncation and implicit regularization]
  \label{thm:ekm-truncation-regularization}
  Consider the compositional kernel function $K_{\sigma}(t) = \sum_{k=0}^\infty \alpha_k t^k$ as in Algorithm~\ref{alg:HE}, and the dataset $\mathbf{X} = [x_1^\top, \cdots, x_n^\top]^{\top} \in \mathbb{R}^{n \times d}$ with column $x_i \stackrel{i.i.d.}{\sim} {\rm Unif}(\mathbb{S}^{d-1})$. Consider the high dimensional regime where the dimensionality $d(n)$ scales with $n$, satisfying $$d(n) \succsim n^{\frac{2}{\iota+1} + \delta},$$ for some fixed integer $\iota \in \mathbb{Z}^{>0}$ and any fixed small $\delta>0$. Define a truncated activation based on \eqref{eq:compressed-act}, at degree level $\iota$
  \begin{align}
    \sigma_{\leq \iota}(t) :=  \sum_{k=0}^{\iota} \sqrt{\alpha_k} h_k(t) \enspace.
  \end{align}
Then the empirical kernel matrix $\K \in \mathbb{R}^{n \times n}$ satisfies the following decomposition
  \begin{align}
    \K = \underbrace{\big( \sum_{k > \iota} \alpha_k \big) \cdot \I_n}_{\text{implicit regularization}} + \underbrace{\EE_{\t \sim \cN(0, \I_d)}\big[ \sigma_{\leq \iota}(\mathbf{X} \t) \sigma_{\leq \iota}(\mathbf{X} \t)^\top \big]}_{\text{degree truncated random features}} + \underbrace{\mathbf{R}}_{\text{remainder}}
  \end{align}
  with the remainder matrix $\mathbf{R} \in \mathbb{R}^{n\times n}$ satisfies
  \begin{align}
    \| \mathbf{R} \|_{\rm op} \rightarrow 0,~~\text{as $n, d(n) \rightarrow \infty$.}
  \end{align}
\end{theorem}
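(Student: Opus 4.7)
The plan is to expand the entries of the empirical kernel matrix in the Taylor series of $K_{\sigma}$, identify the low-degree part with the truncated random-features expectation via Hermite orthogonality, and then show that the high-degree tail splits into a diagonal piece equal to $(\sum_{k>\iota}\alpha_k)\I_n$ (because $\|x_i\|=1$) plus an off-diagonal remainder that I bound in Frobenius norm.

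First I would write, entrywise, $\K[i,j]=\sum_{k\geq 0}\alpha_k\langle x_i,x_j\rangle^{k}$ with $\alpha_k=\PP(Z_{\sigma,L}=k)$ from Algorithm~\ref{alg:HE}, and split the sum at $k=\iota$. The Hermite polynomial orthogonality relation $\EE[h_k(\t^\top u)h_\ell(\t^\top v)]=\mathbbm{1}_{k=\ell}\langle u,v\rangle^{k}$ for unit vectors $u,v$ and $\t\sim\cN(0,\I_d)$, which is the same Mehler-based computation that underlies Lemma~\ref{lem:key-duality}, combined with $\sigma_{\leq\iota}(t)=\sum_{k=0}^{\iota}\sqrt{\alpha_k}h_k(t)$, immediately yields
$$\EE_{\t\sim\cN(0,\I_d)}\bigl[\sigma_{\leq\iota}(\t^\top x_i)\,\sigma_{\leq\iota}(\t^\top x_j)\bigr]=\sum_{k=0}^{\iota}\alpha_k\langle x_i,x_j\rangle^{k}.$$
This identifies the degree-truncated random-features term in the claimed decomposition \emph{exactly}, not approximately.

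Next, for each $k>\iota$, the diagonal entries of $\alpha_k(\mathbf{X}\mathbf{X}^\top)^{\odot k}$ are just $\alpha_k$, so summing gives $\sum_{k>\iota}\alpha_k$ on every diagonal entry, which is precisely the implicit-regularization multiple of $\I_n$. The remainder is therefore pinned down to $\R[i,i]=0$ and $\R[i,j]=\sum_{k>\iota}\alpha_k\langle x_i,x_j\rangle^{k}$ for $i\neq j$. Using $\sum_k\alpha_k=1$ together with $|\langle x_i,x_j\rangle|^{k}\leq|\langle x_i,x_j\rangle|^{\iota+1}$ for $k\geq\iota+1$, one gets the entrywise bound $|\R[i,j]|\leq|\langle x_i,x_j\rangle|^{\iota+1}$ off diagonally, so
$$\|\R\|_{\mathrm{op}}^{2}\leq\|\R\|_{F}^{2}\leq n^{2}\max_{i\neq j}|\langle x_i,x_j\rangle|^{2(\iota+1)}.$$

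To finish, I would invoke the standard sub-Gaussian concentration for one-dimensional projections of a uniform vector on $\mathbb{S}^{d-1}$ (variance proxy of order $1/d$) together with a union bound over the $\binom{n}{2}$ pairs to conclude $\max_{i\neq j}|\langle x_i,x_j\rangle|\lesssim\sqrt{\log n/d}$ with probability tending to $1$. Combining, $\|\R\|_{F}^{2}\lesssim n^{2}(\log n/d)^{\iota+1}$, which vanishes under $d(n)\succsim n^{2/(\iota+1)+\delta}$ since then $n^{2}/d^{\iota+1}\lesssim n^{-\delta(\iota+1)}$ and the $(\log n)^{\iota+1}$ factor is absorbed by the extra $\delta$. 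The main obstacle is essentially bookkeeping: the identification of the low-degree random-features expectation with the low-degree Taylor component is immediate from the Hermite orthogonality already used in Lemma~\ref{lem:key-duality}, and what has to be checked carefully is only that the scaling threshold $2/(\iota+1)+\delta$ is tight on the Frobenius side, which it is because the exponent $\iota+1$ in $n^{2}d^{-(\iota+1)}$ matches exactly the degree at which the truncation occurs.
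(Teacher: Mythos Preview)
Your proposal is correct and follows essentially the same route as the paper's proof: identify the low-degree part with the truncated random-features expectation via Hermite/Mehler orthogonality, observe that the high-degree diagonal contributes exactly $(\sum_{k>\iota}\alpha_k)\I_n$, bound the off-diagonal remainder entrywise by $|\rho_{ij}|^{\iota+1}$, and conclude using the $\sqrt{\log n/d}$ concentration of pairwise inner products. The only cosmetic difference is that you pass through the Frobenius norm whereas the paper bounds $\|\mathbf{R}\|_{\rm op}\leq n\cdot\max_{i\neq j}|\rho_{ij}|^{\iota+1}$ directly; both yield the identical estimate $n(\log n/d)^{(\iota+1)/2}\to 0$ under the stated scaling.
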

A direct consequence of the above theorem is that, the empirical kernel matrix $\K$ shares the same eigenvalues and empirical spectral density as the matrix $( \sum_{k > \iota} \alpha_k ) \cdot \I_n + \EE_{\t \sim \cN(0, \I_d)}[ \sigma_{\leq \iota}(\mathbf{X} \t) \sigma_{\leq \iota}(\mathbf{X} \t)^\top ]$, asymptotically. The implicit regularization matrix is contributed by the high degree components of the activation function collectively. Therefore, in Algorithm~\ref{alg:HE} with truncation level $\iota$ the random features in Equation \ref{eq:random-features} are generated according to 
 \begin{align}
 \mathbf{\Psi}[i,j] = \sigma^{(L)}_{\leq \iota}\left( \langle  x_i, \t_j \rangle \right) +\big(1-\sum_{k=0}^{\iota}\alpha_{k}\big)^{1/2} \cdot z_{ij}, ~i\in [n], j\in [m]\enspace,
 \end{align}
where $z_{ij}\stackrel{i.i.d.}{\sim} \cN(0,1)$ are Gaussian noise.

\section{Numerical Investigation}

In this section, we will study numerically the theory established in the previous sections. We will experiment with four common activations used in practice as a proof of concept, namely ReLU, GeLU, Swish, and Sigmoid, and four PGFs associated with non-negative discrete probability distributions including Poisson$(\lambda)$, Binomial($n$,$p$), Geometric($p$), and Uniform($0, n$). To execute the theory numerically, we introduce a simple and general Algorithm $\ref{alg:coef_HE}$ for estimating the Hermite coefficients of \textit{any} activation function $\sigma(\cdot)$, with provable guarantees. The numerical stability of this algorithm to estimate the Hermite coefficients can be seen in Figure
\ref{fig:alg_he}. The truncation level considered is $\iota=20$. We will use this level throughout the rest of the experiments in this section.

\begin{algorithm}[!htbp]
\KwResult{Given an activation function $\sigma(t)=\sum_{k=0}^{\infty}a_{k} h_{k}(t) \in L^2_\phi$ and a truncation level $\iota$, return an estimate of the Hermite coefficients $\{a_{k}\}_{k=0}^{\iota}$.}
{\bf Step 1}: Generate $M$ inputs $\{x_{i}\}_{i=1}^{M}$ from a standard normal $\cN(0,1)$\;
{\bf Step 2}: For each $k =0,1,2,\ldots, \iota$, calculate $$\hat{a}_k = \frac{1}{M} \sum_{i=1}^M \sigma(x_i) h_k(x_i)\enspace.$$ The coefficients satisfy $\lim_{M\to \infty}\hat{a}_{k}=a_{k}$ a.s. for all $k$.
\caption{Estimating the Hermite coefficients of an activation $\sigma(\cdot)$.}
\label{alg:coef_HE}
\end{algorithm}

\begin{table}[!htbp]
  \begin{center}
    \begin{tabular}{c|c|c|c|c}
    \toprule
    Activations &ReLU & GeLU & Sigmoid & Swish  \\ \hline
    $\sigma(t)$ &  $\max{(0,t)}$ & $t\cdot \Phi(t)$ & $1/(1+e^{-t})$ &$t/(1+e^{-t})$ \\

    \midrule

    PGFs &Poisson($\lambda$) & Binomial($n$,$p$) & Geometric($p$)&  Uniform($0$,$n$)  \\ \hline
    $G(s)$ &  $\exp\{\lambda(s-1)\}$ & $(1-p + ps)^{n}$ & $(1-p)/(1-ps)$ &$(1-s^{n+1})/(n(1-s))$ \\ 
    \bottomrule
    \end{tabular}
  \end{center}
  \caption{Examples of common activation functions and PGFs.}
  \label{tbl:examples}
\end{table}

\begin{figure}[!htbp]
  \begin{center}
  \includegraphics[scale=0.6]{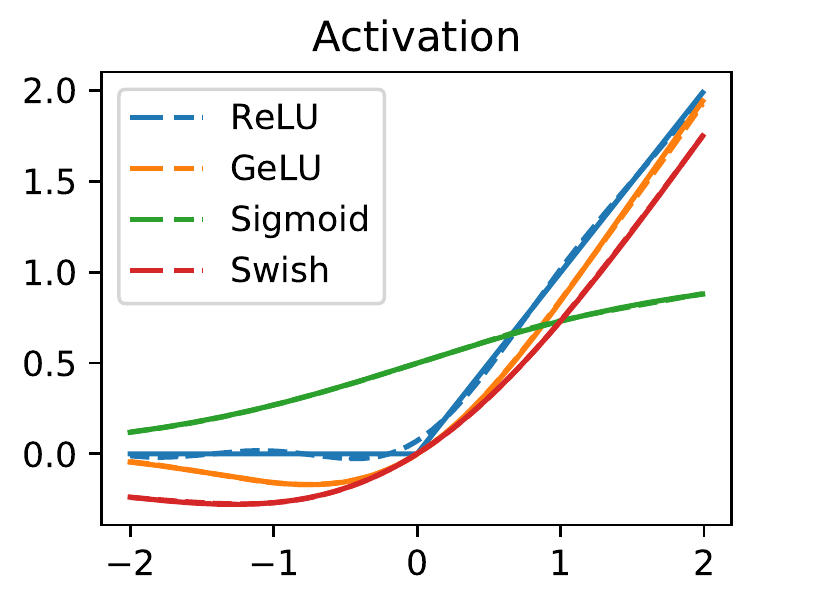}
  \end{center}
  \label{fig:alg_he}
  \caption{Plot of the numerical stability of Algorithm $\ref{alg:coef_HE}$ in approximating activation functions. Solid lines represent the activations, and the dashed lines represent the approximations of the activation functions.}
\end{figure}

\subsection{Duality: Activations and PGFs}
According to Lemma \ref{lem:key-duality},  there is a duality between activation functions $\sigma(\cdot)$ (normalized  as in \ref{asmp:norm-activation}) and PGFs $G_{\sigma}(\cdot)$ (of a discrete non-negative probability distribution). In the first two plots in Figure \ref{fig:duality}, we start from a probability distribution, and construct its corresponding activation function. Reversely, in the last two plots in Figure \ref{fig:duality}, we start from an activation function, and approximate (using Algorithm \ref{alg:coef_HE}) its corresponding PGF.   

\begin{figure}[!htbp]
  \begin{center}
  \includegraphics[scale=0.55]{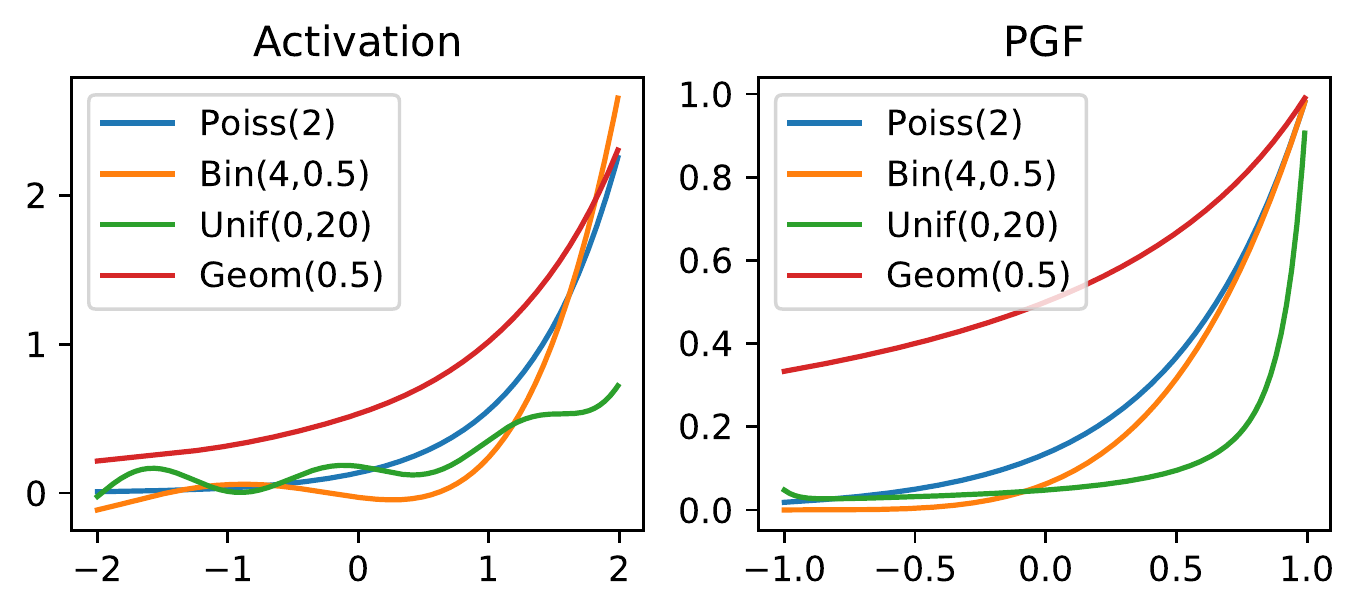}
  \includegraphics[scale=0.55]{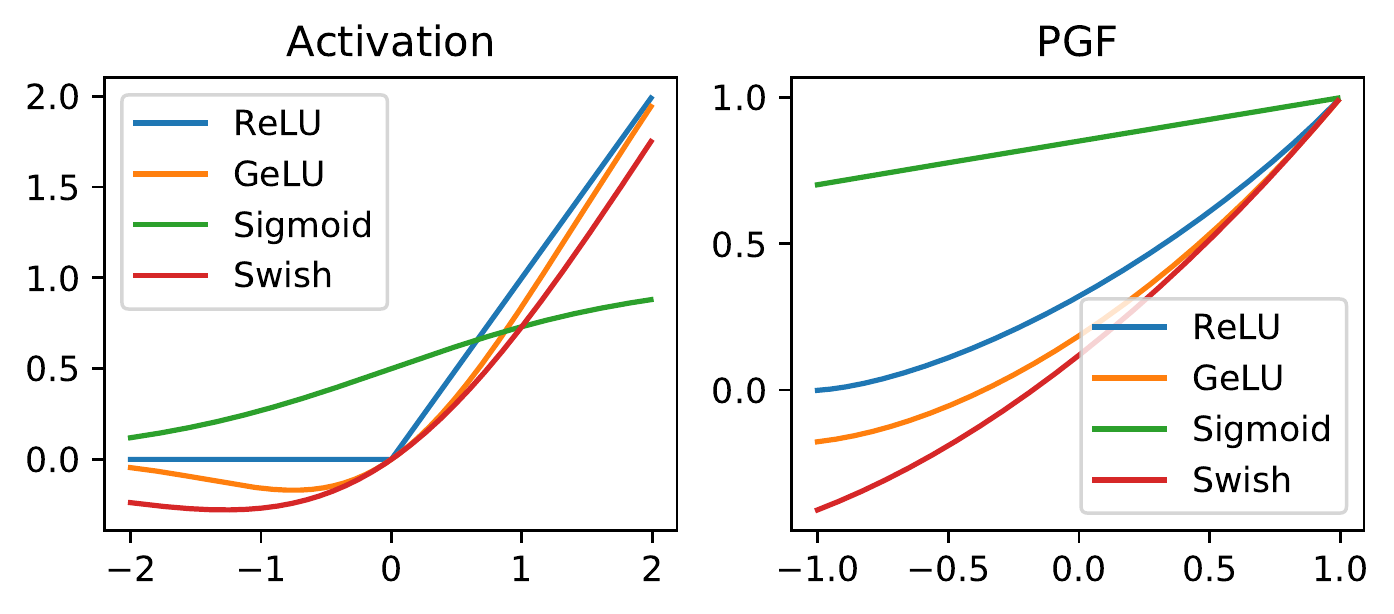}
  \end{center}
  \caption{Duality between activations and PGFs.}
  \label{fig:duality}
\end{figure}

\subsection{Kernel limits}

In this section, we will illustrate the compositional behavior of the kernels, both in the unscaled (Corollary~\ref{cor:unscaled-limit}) and rescaled (Theorem \ref{thm:rescaled-limit}) cases.
We start with Figure \ref{fig:comp-unscaled} on the compositional behavior of unscaled kernels $K^{(L)}_\sigma(\rho)$ as a function of $\rho\in [-1,1]$, without centering the activation functions. According to Corollary \ref{cor:unscaled-limit}, we know that the unscaled limits of compositional kernels are determined strictly by a phase transition of $\mu$ at $1$. On the interval $(-1,1)$, ReLU and Sigmoid's composite kernels converge to $1$, while GeLU and Swish's converge to their corresponding extinction probability. In Figure \ref{fig:comp-unscaled-centered} we plot the compositional behavior of the $K^{(L)}_\sigma(\rho)$ for centered activations $\sigma$: For centered ReLU, GeLU, Swish, the limit approaches $0$ for $\rho \in [-1,1)$, and approaches $1$ at $\rho=1$. For centered Sigmoid, $a_1 \approx 1$, which explains the resemblance to the linear kernel.

On the other hand, for rescaled kernels $K^{(L)}_\sigma(e^{-t/\mu^L})$ as a function of $t\in [0,\infty)$, non-trivial limits that depend on $\sigma(\cdot)$ exist, when $\mu_\star < \infty$. In the un-centered and rescaled case (Theorem \ref{thm:rescaled-limit}), we have that ReLU and Sigmoid's compositional kernels converge to $1$, while GeLU and Swish's ones approach a non-trivial limit, shown in Figure~\ref{fig:comp-rescaled}. If we center the activations, all rescaled kernels will approach non-trivial limits as seen in Figure \ref{fig:comp-rescaled-centered}.

In terms of the convergence speed of kernels to their unscaled limit, Theorem \ref{thm:small-correlation} explains how fast the curve flattens around $0$, and Theorem \ref{thm:large-correlation} on the rate it flattens around $1$. The convergence speed, in fact, determines the memorization capacity of composition kernels. To flatten around $0$, we need the number of compositions to scale with $\frac{1}{\log a_{1}^{-2}}$, while to flatten around $1$, we need the compositional depth to scale with $\frac{1}{\mu-1}$. For example, Sigmoid has $\mu\approx 1.03$ and $a_{1}^2\approx 0.99$, thus explaining slow convergence compared to the other activations. Table~\ref{tbl:val-mu} summarizes the crucial quantities that determine the compositional behavior for each activation.

\begin{table}[!htbp]
  \begin{center}
    \begin{tabular}{c|c c|c c|c c|c c}
    \toprule
    Activation  &  \multicolumn{2}{c|}{ReLU} & \multicolumn{2}{c|}{GeLU}&\multicolumn{2}{c|}{Sigmoid}&\multicolumn{2}{c}{Swish}\\
    \midrule
    $\mu$ & $0.95$ & $1.39$ & $1.08$ & $1.47$ & $0.15$ & $1.03$ & $1.07$ & $1.22$\\
    $\mu_{\star}$ & $0.48$ & $0.69$ & $0.39$ & $0.89$ & $0.01$ & $0.05$ & $0.28$ & $0.31$\\
    $a_{1}^2$ & $0.50$ & $0.74$ & $0.59$ & $0.71$ & $0.15$ & $0.99$ & $0.80$  & $0.70$\\
    $\xi$ & $1.00$ & $0.00$ & $0.76$& $0.00$ &$1.00$ &$0.00$ & $0.66$ &$0.00$  \\
    \bottomrule
\end{tabular}
  \end{center}
  \caption{Approximations of $\mu$, $\mu_{\star}$ (from Theorem \ref{thm:rescaled-limit}), and $a_{1}$ (from Definition \ref{asmp:activation}). For each activation, there are two columns: values for un-centered activation (left) and values for centered activation (right). }
  \label{tbl:val-mu}
\end{table}

\begin{figure}[!htbp]
  \begin{center}
  \includegraphics[scale=0.6]{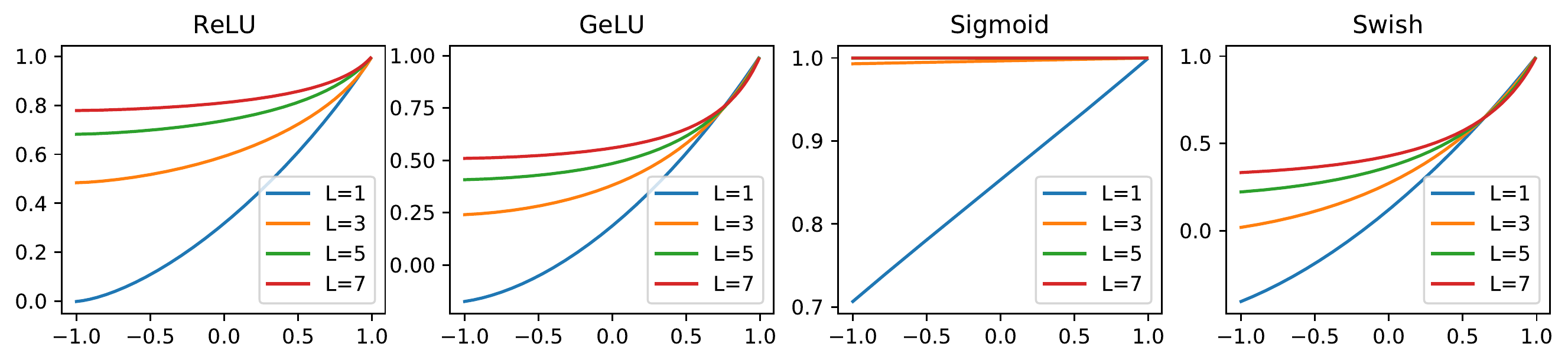}
  \end{center}
  \caption{Compositional behavior of unscaled kernels.}
  \label{fig:comp-unscaled}
\end{figure}

\begin{figure}[!htbp]
  \begin{center}
  \includegraphics[scale=0.6]{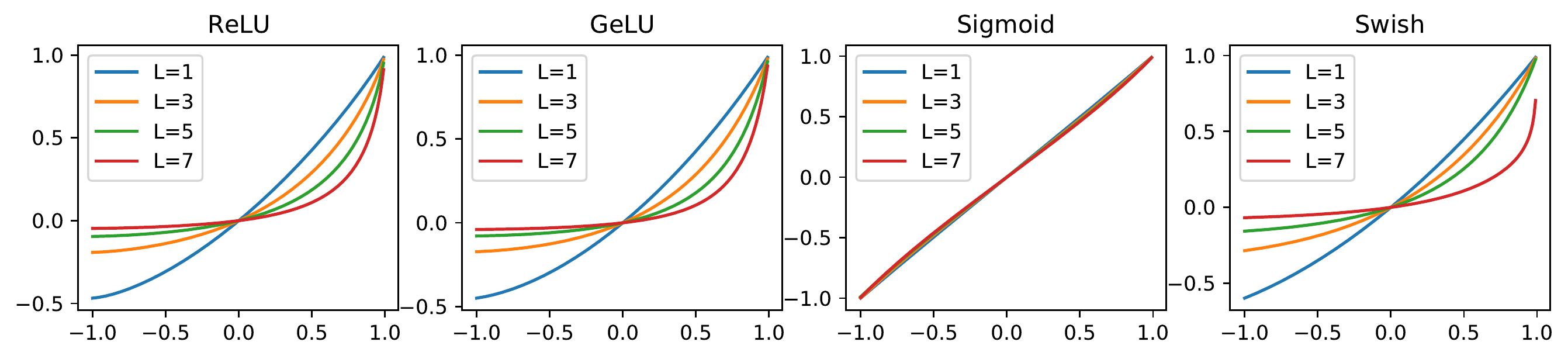}
  \end{center}
  \caption{Compositional behavior of unscaled kernels with centered activations.}
  \label{fig:comp-unscaled-centered}
\end{figure}

\begin{figure}[!htbp]
  \begin{center}
  \includegraphics[scale=0.6]{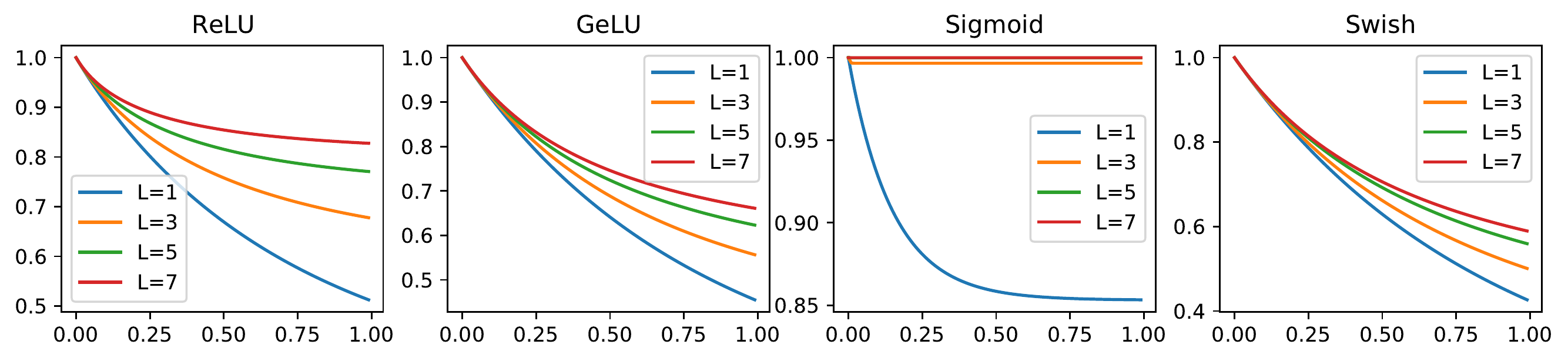}
  \end{center}
  \caption{Compositional behavior of rescaled kernels.}
  \label{fig:comp-rescaled}
\end{figure}

\begin{figure}[!htbp]
  \begin{center}
  \includegraphics[scale=0.6]{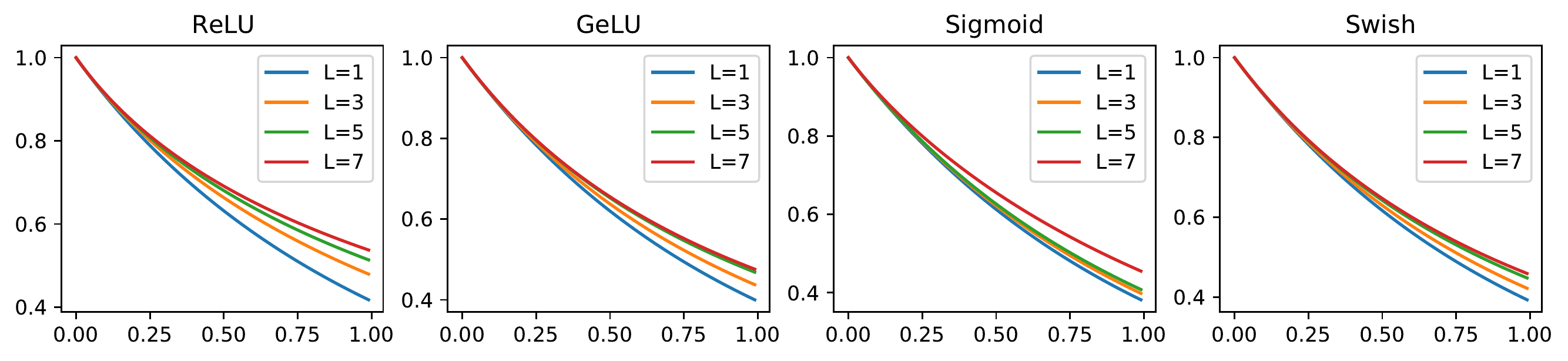}
  \end{center}
  \caption{Compositional behavior of rescaled kernels with centered activations.}
  \label{fig:comp-rescaled-centered}
\end{figure}

\subsection{Applications to datasets: new random features algorithm}
In this section, we will investigate the "compressed" activations obtained from compositional kernels to generate random features, as in Algorithm~\ref{alg:HE}. In Figure \ref{fig:compressed-act}, we plot the shape of the compressed activations, where the initial activations were re-centered and re-scaled (Assumption 1 and 2). We will test the validity of the new random features Algorithm~\ref{alg:HE} on two available datasets, MNIST and CIFAR10. In addition, we construct a new dataset called VGG11, which takes as input the last convolutional layer of the architecture VGG11 (the $8$-th layer) trained on CIFAR10, and as outputs the CIFAR10 labels. 

We plot in Figure \ref{fig:condition} the condition number of the empirical kernel matrix as depth increases. Empirically, we see that depth improves the kernel matrix's condition number, as discussed in Section 4. Note that, unlike the other three activation functions, the Sigmoid activation's kernel has the slowest decay as depth increases. This behavior of the Sigmoid activation results from the fact that the activation has a significant component when projected on the first $\iota$ Hermite polynomials (see Figure \ref{fig:coeff}), which further leads to a smaller implicit regularization due to truncation at level $\iota$. In contrast, ReLU activation has a much smaller component for the first $\iota$ Hermite coefficients, and therefore the condition number decays much faster. With the Sigmoid activation function, the compositional kernel can tolerate much higher depths such that the lower order Hermite coefficients do not vanish.

For each dataset, we run multi-class logistic regression (a simple one-layer neural network) with $10$ categories, using the random features generated by Algorithm~\ref{alg:HE}, with truncation level $\iota=20$. We consider four activations and three compositional depths, in total $12$ experiments. We run vanilla stochastic gradient descent as the training algorithm with batches of size $256$. The results are plotted in Figure \ref{fig:datasets} and the numerical results are displayed in Table \ref{fig:datasets}. We remark that only for the Sigmoid activation, the compositional kernels help improve the test accuracy. We postulate that this is because of the slower decay of the lower order Hermite coefficients of the compositional kernel, allowing one to vary the depth $L$ for a favorable trade-off between memorization and generalization.

\begin{figure}[!htbp]
  \begin{center}
  \includegraphics[scale=0.6]{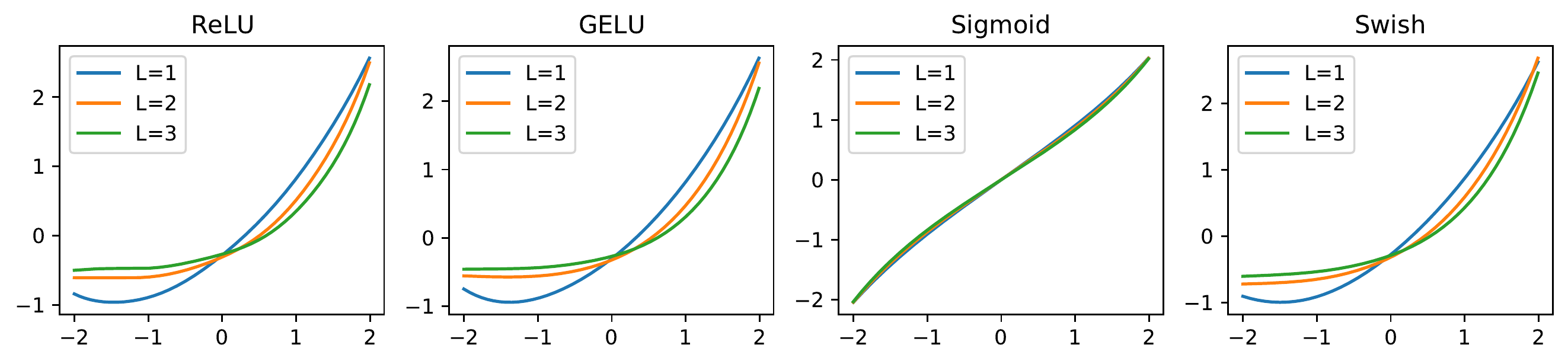}
  \end{center}
  \caption{Compressed activation functions truncated at level $\iota=20$ as in Algorithm~\ref{alg:HE}.}
  \label{fig:compressed-act}
\end{figure}

\begin{figure}[!htbp]
  \begin{center}
  \includegraphics[scale=0.5]{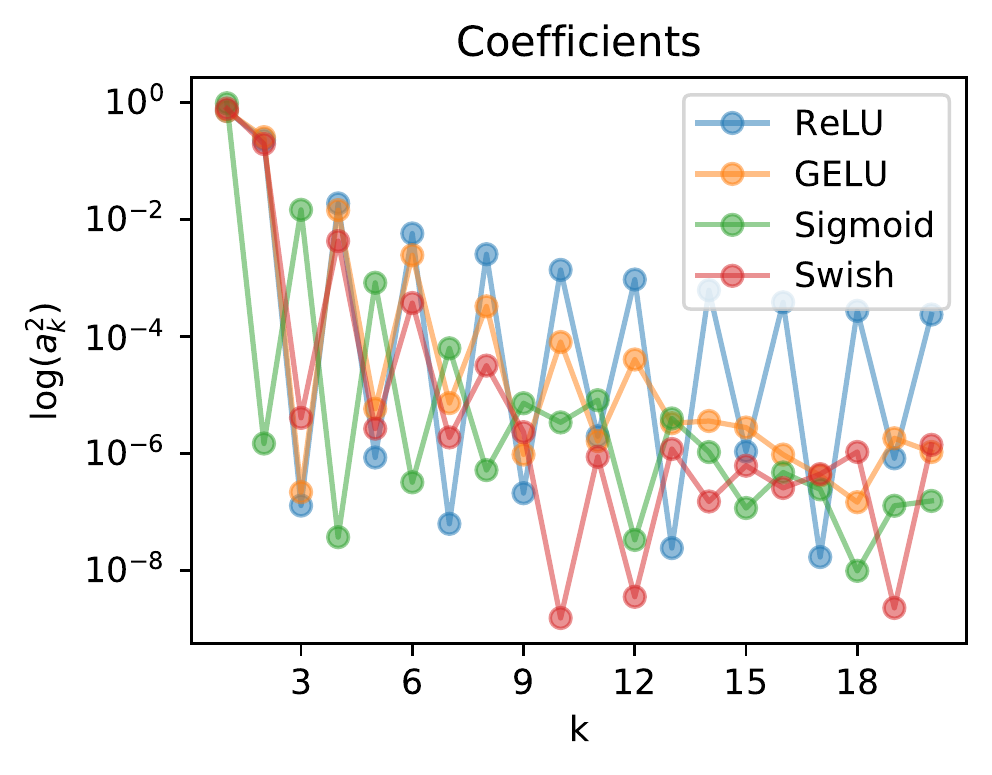}
  \includegraphics[scale=0.5]{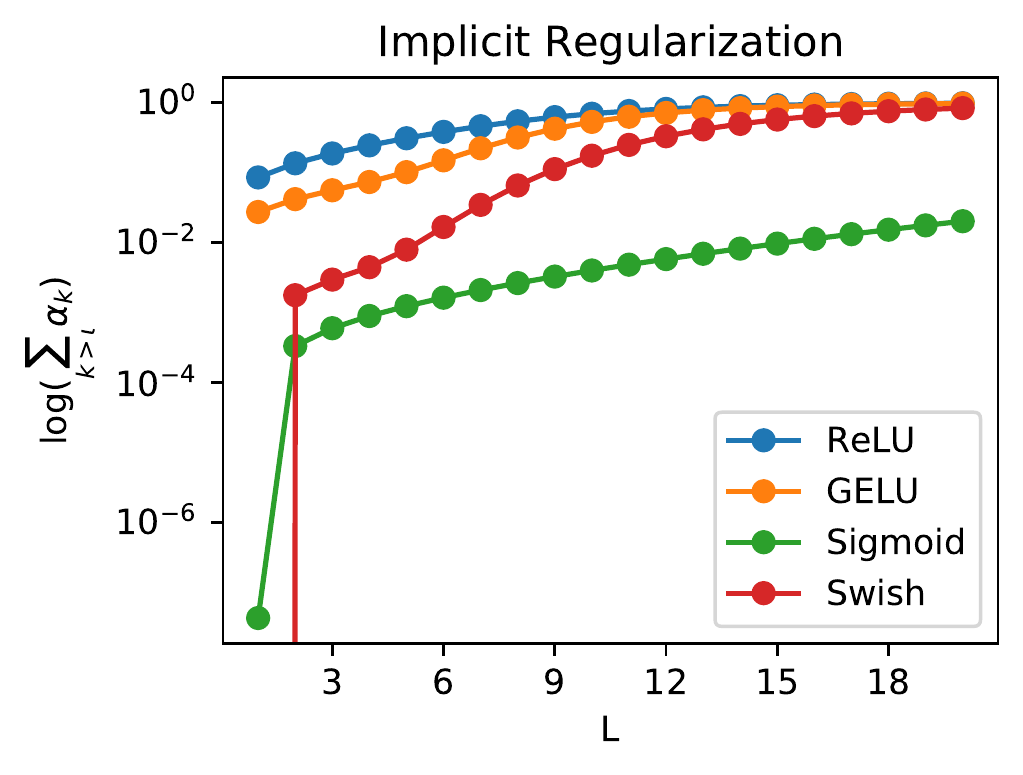}
  \end{center}
  \caption{On the left: estimated coefficients of the activation functions as in Algorithm~\ref{alg:coef_HE}. On the right: the amount of implicit regularization $\sum_{k>\iota}\alpha_{k}$ of the compressed activation function due to the truncation level $\iota=20$, as defined in Theorem \ref{thm:ekm-truncation-regularization}.}
  \label{fig:coeff}
\end{figure}

\begin{figure}[!htbp]

  \begin{center}

  \includegraphics[scale=0.5]{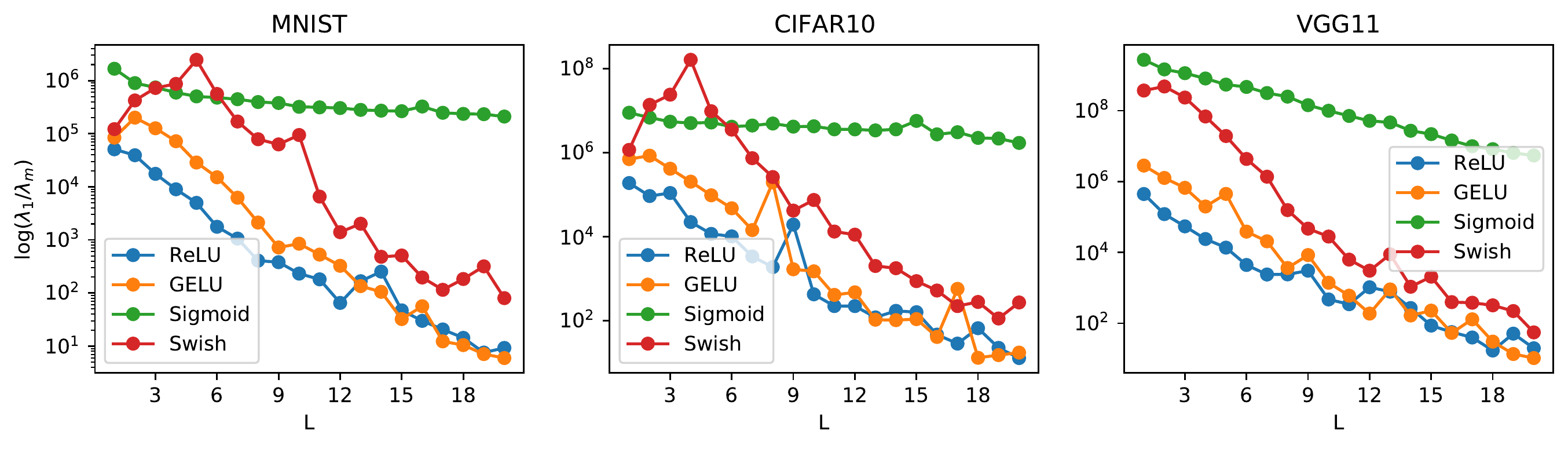}

  \end{center}
  \caption{Condition number of the empirical kernel matrix $\lambda_{1}(\Psi\Psi^{T}/m)/\lambda_{m}(\Psi\Psi^{T}/m)$ as a function of depth, where the random features matrix $\Psi$ is defined in Algorithm~\ref{alg:HE}.}
    \label{fig:condition}

\end{figure}

\begin{figure}[!htbp]
  \begin{center}
  \includegraphics[scale=0.5]{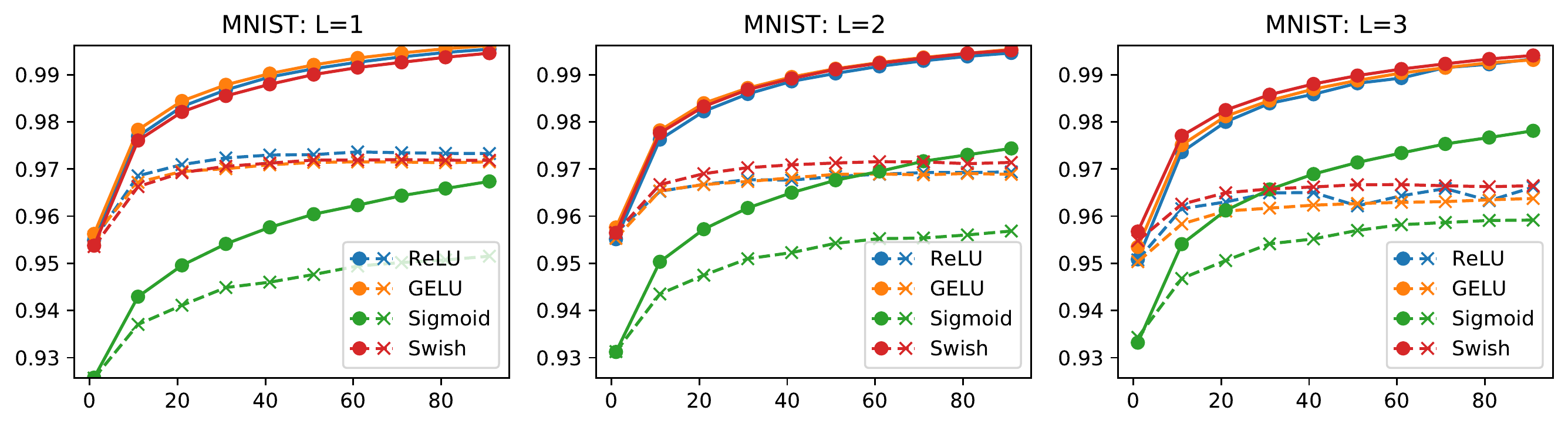}
  \includegraphics[scale=0.5]{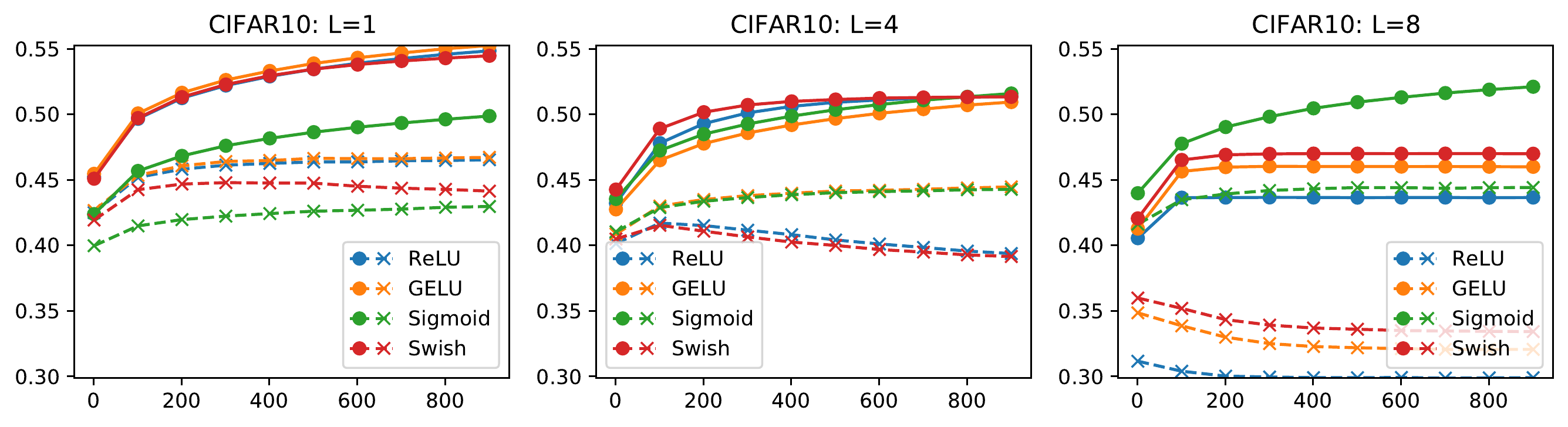}
  \includegraphics[scale=0.5]{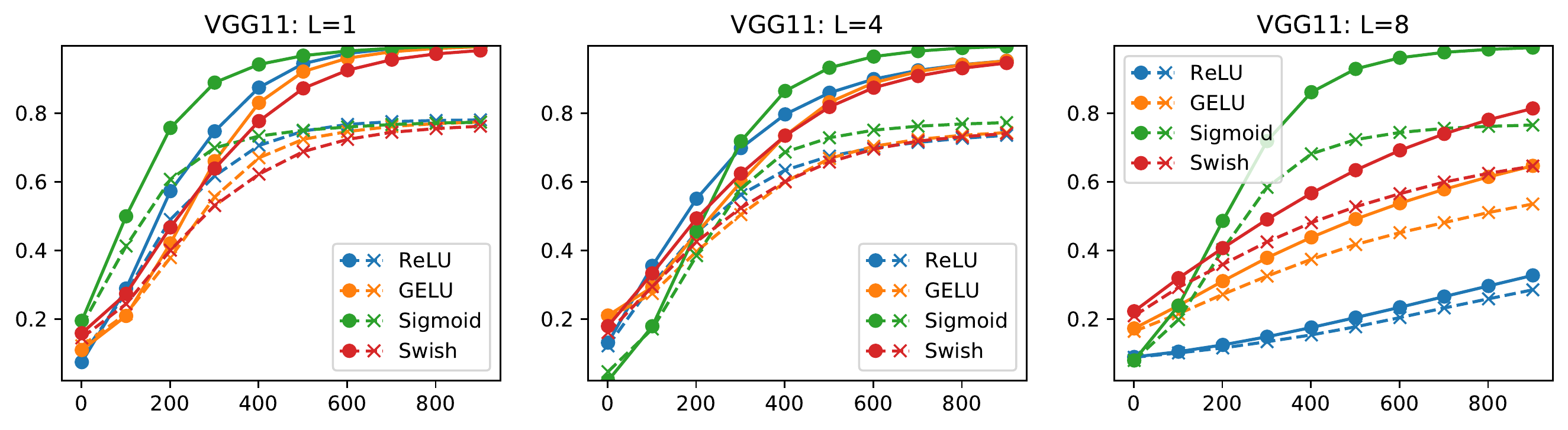}
  \end{center}
  \caption{Accuracy for varying activation functions and depths: Solid lines represent the training accuracy and dashed lines represent the testing accuracy. We used $100$ epochs ($\mathrm{lr}=10^{-2}$), $1000$ epochs ($\mathrm{lr}=10^{-3}$), and $1000$ epochs ($\mathrm{lr}=10^{-7}$) for MNIST, CIFAR10, and VGG11, respectively. The number of random features in every experiment was set to $m=2048$ (yielding $10\times 2048$ trainable parameters).}
  \label{fig:datasets}
\end{figure}

\begin{table}[!htbp]
  \begin{center}
    \begin{tabular}{c |c|c c c|c c c|c c c|c c c}
    \toprule
    \multicolumn{2}{c|}{Activation}  &\multicolumn{3}{c|}{ReLU} & \multicolumn{3}{c|}{GeLU}&\multicolumn{3}{c|}{Sigmoid}&\multicolumn{3}{c}{Swish}\\

    \hline
    \multicolumn{2}{c|}{Depth}  &$1$ & $2$ & $3$ & $1$ & $2$ & $3$ & $1$ & $2$ & $3$ & $1$ & $2$ & $3$\\
    \midrule
    \multirow{2}{*}{MNIST} & Train & $1.00$ & $0.99$ & $0.99$ & $1.00$ & $1.00$ & $0.99$ & $0.97$ & $0.97$ & $0.98$ & $0.99$ & $1.00$ & $0.99$ \\
    {} & Test & $0.97$ & $0.97$ & $0.97$ & $0.97$ & $0.97$ & $0.96$ & $0.95$ & $0.96$ & $0.96$ & $0.97$ & $0.97$ & $0.97$ \\
    
    \hline
    \multicolumn{2}{c|}{Depth}  &$1$ & $4$ & $8$ & $1$ & $4$ & $8$ & $1$ & $4$ & $8$ & $1$ & $4$ & $8$\\
    \midrule
    \multirow{2}{*}{CIFAR10}& Train & $0.55$ & $0.51$ & $0.44$ & $0.55$ & $0.51$ & $0.46$ & $0.50$ & $0.52$ & $0.52$ & $0.55$ & $0.51$ & $0.47$ \\
    {} & Test & $0.47$ & $0.39$ & $0.30$ & $0.47$ & $0.44$ & $0.32$ & $0.43$ & $0.44$ & $0.44$ & $0.44$ & $0.39$ & $0.33$ \\ 

    \hline
    \multicolumn{2}{c|}{Depth}  &$1$ & $4$ & $8$ & $1$ & $4$ & $8$ & $1$ & $4$ & $8$ & $1$ & $4$ & $8$\\
    \midrule
    \multirow{2}{*}{VGG11}& Train & $1.00$ & $0.96$ & $0.34$ & $0.99$ & $0.96$ & $0.66$ & $1.00$ & $1.00$ & $0.99$ & $0.99$ & $0.95$ & $0.83$ \\
    {} & Test & $0.78$ & $0.74$ & $0.30$ & $0.77$ & $0.75$ & $0.54$ & $0.77$ & $0.77$ & $0.77$ & $0.76$ & $0.74$ & $0.65$ \\ 

    \bottomrule
\end{tabular}
  \end{center}
  \caption{Accuracy percentage based on activation and depths. }
  \label{tbl:dataset}
\end{table}

\newpage
\bibliography{mehler-composition-kernel}

\appendix

\section{Proofs}

\subsection{Proofs in Section~\ref{sec:compo-kernel-branching-proc}}

\begin{proof}[Proof of Lemma~\ref{lem:key-duality}]
  By properties of the normal distribution, we have
\begin{align}
  \EE_{\t \sim \cN(0, \I_d)} \left[ \sigma\left( \t^\top x/\|x\| \right) \sigma\left( \t^\top z/\|z\| \right) \right] = \EE_{(\tilde x,\tilde z)\sim \cN_{\rho}}\left[\sigma\left(\tilde{x}\right)\sigma\left(\tilde{z}\right)\right],
\end{align}
where $\cN_{\rho}$ denotes the standard bivariate normal distribution with correlation $\rho = \langle x/\| x\|, z/\| z \| \rangle$. Expand the above expression explicitly and use Mehler's formula in Proposition~\ref{prop:mehler} to obtain
\begin{align}
\EE_{(\tilde{x},\tilde{z})\sim \cN_{\rho}}\left[\sigma\left(\tilde{x}\right)\sigma\left(\tilde{z}\right)\right]
&=\int_{\mathbb{R}}\int_{\mathbb{R}}\sigma(\tilde{x})\sigma(\tilde{z})\frac{1}{2\pi\sqrt{1-\rho^2}}\exp\left\{-\frac{\tilde{x}^2+\tilde{z}^2-2\rho\tilde{x}\tilde{z}}{2(1-\rho^2)}\right\}d\tilde{x}d\tilde{z}\\
&=\int_{\mathbb{R}}\int_{\mathbb{R}}\sigma(\tilde{x})\sigma(\tilde{z})\frac{1}{2\pi}\exp\left\{-\frac{\tilde{x}^2+\tilde{z}^2}{2}\right\}\sum_{k\geq0}\rho^{k}h_{k}(\tilde{x})h_{k}(\tilde{z})\\
&=\sum_{k\geq 0}\rho^{k}\EE_{\tilde{x}\sim \cN(0,1)}[\sigma(\tilde{x})h_{k}(\tilde{x})]\EE_{\tilde{z}\sim \cN(0,1)}[\sigma(\tilde{z})h_{k}(\tilde{z})]\\
&=\sum_{k\geq 0}a_{k}^2\rho^{k} =G_{\sigma}(\rho) \enspace.
\end{align}

\end{proof}

\begin{proof}[Proof of Lemma~\ref{lem:composition-kernel}]
We can prove by induction on $L$. For $L=0$, it follows from the definition that
\begin{align}
K_{\sigma}^{(0)}(\rho)=\rho=\EE[\rho^{Z_{\sigma,0}}].
\end{align}
For $L\geq 1$, we have by induction
\begin{align*}
K_{\sigma}^{(L)}(\rho)
&=K_{\sigma}^{(L-1)}(K_{\sigma}(\rho))=K_{\sigma}^{(L-1)}(G(\rho))\\
&=\EE[G(\rho)^{Z_{\sigma,L-1}}]=\EE[\EE[\rho^{Y_{\sigma}}]^{Z_{\sigma,L-1}}]\\
&=\EE[\rho^{\sum_{i=1}^{Z_{\sigma,L-1}}Y_{\sigma}^{(L-1)}}] =\EE[\rho^{Z_{\sigma,L}}] \enspace.
\end{align*}
\end{proof}

We will use the following facts about branching process and composition of PGFs.
\begin{proposition}[Composition of PGFs]
  The PGF associated with the size of the $L$-th generation $Z_L$ in a branching process satisfies
  \begin{align}
    \EE[s^{Z_{L}}] = \underbrace{ G \circ  \cdots \circ G }_{\text{composite $L$ times}}(s) =: G^{(L)}(s)
  \end{align}
  where $G(s)$ is the PGF of $Y$ in Definition~\ref{def:galton-watson}.
\end{proposition}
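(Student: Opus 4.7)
The plan is to prove the identity $\E[s^{Z_L}] = G^{(L)}(s)$ by induction on the generation index $L$, using the recursive definition of the Galton--Watson process $Z_{L+1} = \sum_{i=1}^{Z_L} Y_i^{(L)}$ from Definition~\ref{def:galton-watson}, where $\{Y_i^{(L)}\}$ is a collection of i.i.d.\ copies of $Y$ that is independent of $Z_L$.

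For the base case, I would take $L = 1$: since $Z_0 \equiv 1$, we have $Z_1 = Y_1^{(0)}$, so $\E[s^{Z_1}] = \E[s^Y] = G(s) = G^{(1)}(s)$ directly from the definition of the PGF. For the inductive step, assuming $\E[s^{Z_L}] = G^{(L)}(s)$, I would compute $\E[s^{Z_{L+1}}]$ by first conditioning on the $L$-th generation size:
\begin{align}
\E\!\left[ s^{Z_{L+1}} \,\big|\, Z_L \right] = \E\!\left[ s^{\sum_{i=1}^{Z_L} Y_i^{(L)}} \,\Big|\, Z_L \right] = \prod_{i=1}^{Z_L} \E\!\left[ s^{Y_i^{(L)}} \,\big|\, Z_L \right] = G(s)^{Z_L},
\end{align}
where the second equality uses that the $Y_i^{(L)}$ are mutually independent and independent of $Z_L$. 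Taking the outer expectation and applying the inductive hypothesis with the argument $s$ replaced by $G(s) \in [0,1]$ (valid since $G$ maps $[0,1]$ to $[0,1]$) yields
\begin{align}
\E[s^{Z_{L+1}}] = \E\!\left[ G(s)^{Z_L} \right] = G^{(L)}\!\left( G(s) \right) = G^{(L+1)}(s),
\end{align}
where the last step uses associativity of function composition.

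There is no real obstacle here; the only subtlety worth flagging is the careful bookkeeping of independence (the offspring variables $Y_i^{(L)}$ in generation $L$ are independent of the history $\{Z_0, \dots, Z_L\}$), which is precisely what lets the conditional expectation factor into a product. This same identity is what was invoked implicitly in the sketched proof of Lemma~\ref{lem:composition-kernel}, so the present proposition can be viewed as formalizing the branching-process fact used there.
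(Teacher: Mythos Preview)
Your induction argument is correct and entirely standard. The paper actually states this proposition without proof, treating it as a background fact about branching processes; the only place a closely related computation appears is the proof of Lemma~\ref{lem:composition-kernel}, which carries out the same conditioning-and-induction step (writing $\EE[\rho^{Z_{\sigma,L}}]=\EE[G(\rho)^{Z_{\sigma,L-1}}]$ via $Z_{\sigma,L}=\sum_{i=1}^{Z_{\sigma,L-1}} Y_i^{(L-1)}$), so your approach matches what the paper implicitly relies on.
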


\begin{theorem}[\cite{KS66}, See also Theorem 12.3 in \cite{LP16}]
  \label{thm:kensten-stigum}
  Consider the setting in Theorem \ref{thm:rescaled-limit}. On the one hand if $\mu_\star<\infty$
  then there exist a random variable $W_{\sigma} >0$
  \begin{align}
    \lim_{L\rightarrow \infty} Z_{\sigma,L}/\mu^{L}  = W_{\sigma}, ~~\text{a.s. conditional on non-extinction}\enspace.
  \end{align}
  Here $\EE [W_{\sigma}] = 1$, and $\Var[W_{\sigma}] = \frac{\Var[Y_\sigma]}{\EE[Y_\sigma] ( \EE[Y_\sigma]-1 )}$.
  On the other hand if $\mu_\star=\infty$ then for any constant $m>0$, we have
  \begin{align}
    \label{eq:rescaled}
    \lim_{L\rightarrow \infty} Z_{\sigma,L}/m^{L} = \infty, ~~\text{a.s. conditional on non-extinction}\enspace.
  \end{align}
\end{theorem}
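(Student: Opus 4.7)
The plan is to establish both claims by treating $W_L := Z_{\sigma, L}/\mu^L$ as a non-negative martingale and then invoking the classical Kesten-Stigum dichotomy. From the branching recursion $Z_{\sigma, L+1} = \sum_{i=1}^{Z_{\sigma, L}} Y_{\sigma, i}^{(L)}$ of Definition~\ref{def:galton-watson}, with i.i.d. offspring and $\EE[Y_\sigma] = \mu$, I would first verify $\EE[W_{L+1} \mid \cF_L] = W_L$ for the natural filtration $\cF_L$. Doob's martingale convergence theorem then gives $W_L \to W_\sigma$ almost surely for some $W_\sigma \geq 0$ with $\EE[W_\sigma] \leq 1$.

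Next, I would obtain the variance formula from the conditional-variance identity $\Var(Z_{\sigma, L+1}) = \mu^2 \Var(Z_{\sigma, L}) + \Var(Y_\sigma)\mu^L$, which iterated from $\Var(Z_{\sigma, 0}) = 0$ yields $\Var(W_L) = \Var(Y_\sigma)(1 - \mu^{-L})/(\mu(\mu - 1))$. When $\Var(Y_\sigma) < \infty$, this bounds $\{W_L\}$ in $L^2$, so the convergence also holds in $L^2$ and passing to the limit gives $\Var(W_\sigma) = \Var(Y_\sigma)/(\EE[Y_\sigma](\EE[Y_\sigma] - 1))$.

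The subtle part is non-degeneracy under $\mu_\star < \infty$ (equivalently $\EE[Y_\sigma \log Y_\sigma] < \infty$): one must show $\EE[W_\sigma] = 1$, so that $W_\sigma > 0$ almost surely on non-extinction. My plan is to follow the spine / size-biased tree argument of Lyons-Pemantle-Peres: construct an augmented tree under a biased measure in which a distinguished ``spine'' reproduces according to the size-biased law of $Y_\sigma$, and reduce uniform integrability of $\{W_L\}$ under the original measure to the a.s. finiteness of a certain random-walk functional along the spine---a condition that, by a Borel-Cantelli / log-moment calculation, holds precisely when $\EE[Y_\sigma \log Y_\sigma] < \infty$. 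Once uniform integrability is established, $\EE[W_\sigma] = 1$ follows, forcing $\PP(W_\sigma > 0 \mid \text{non-extinction}) = 1$. Continuity of the density of $W_\sigma$ on $\mathbb{R}^+$ can then be extracted from the functional equation $\phi(t) = G_\sigma(\phi(t/\mu))$ satisfied by $\phi(t) := \EE[e^{-t W_\sigma}]$, using the analyticity of $G_\sigma$.

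When $\mu_\star = \infty$, the same spine argument diverges and forces $W_L \to 0$ a.s., i.e., $\{W_L\}$ is not uniformly integrable. Combining this vanishing with $Z_{\sigma, L} \to \infty$ on non-extinction (guaranteed by $\mu > 1$) and the Seneta-Heyde normalization---a slowly varying sequence $c_L$ with $c_L/\mu^L \to 0$ and $Z_{\sigma, L}/c_L$ converging to a strictly positive limit on non-extinction---delivers the claimed divergence $Z_{\sigma, L}/m^L \to \infty$ in the range of $m$ relevant to Theorem~\ref{thm:rescaled-limit}(iii). The principal obstacle is the non-degeneracy step: the $Y \log Y$ threshold is genuinely delicate and cannot be reached by a routine second-moment calculation, so either the spine/size-bias machinery or Kesten and Stigum's original truncation argument is essential.
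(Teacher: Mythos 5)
The paper does not actually prove Theorem~\ref{thm:kensten-stigum}: it is imported as a black box from \cite{KS66} and Theorem~12.3 of \cite{LP16}, so there is no internal proof to compare against. Your outline is, in substance, the proof given in the cited reference: the non-negative martingale $W_L = Z_{\sigma,L}/\mu^L$ with Doob convergence, the conditional-variance recursion for the second-moment formula, the Lyons--Pemantle--Peres size-biased tree/spine argument for the $\EE[Y_\sigma\log Y_\sigma]$ dichotomy, and the Seneta--Heyde normalization in the degenerate case; all of these steps are sound, and you correctly flag that the variance formula requires $\Var[Y_\sigma]<\infty$ (strictly stronger than $\mu_\star<\infty$, a point the theorem statement elides) and that non-degeneracy cannot be obtained from a routine moment bound. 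One point you should make explicit rather than leave as a hedge: as literally stated, the second claim (``for any constant $m>0$'') is false when $\mu<\infty$, since for $m\geq\mu$ one has $Z_{\sigma,L}/m^L = W_L(\mu/m)^L \leq W_L \to 0$ in the $\mu_\star=\infty$ regime; the Seneta--Heyde constants $c_L$ satisfy $c_L^{1/L}\to\mu$, so your argument delivers $Z_{\sigma,L}/m^L\to\infty$ on non-extinction exactly for $m<\mu$ (and for all finite $m$ only when $\mu=\infty$). Your restriction to ``the range of $m$ relevant to Theorem~\ref{thm:rescaled-limit}(iii)'' correctly identifies this, and the same qualification is in fact needed in the statement of Theorem~\ref{thm:rescaled-limit}(iii) itself.
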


\begin{proposition} [Properties of PGF]
  \label{prop:prop-pgf}
  Consider the function $G_{\sigma}(\cdot)$ defined in Lemma \ref{lem:key-duality}. Let $\xi$ be smallest fixed point with  $G_{\sigma}(\xi)=\xi$, in $[0,1]$. If $a_{1}^2=1$, then $G_{\sigma}(\cdot)$ is linear with $G_{\sigma}(\rho)=\rho$. For $a_{1}\neq 1$, we have 
  \begin{enumerate}
    \item (Extinction). $\xi=1$ if and only if $G'(1)\leq 1$.
    \item (Fixed Points). $G_{\sigma}(\cdot)$ has only two fixed points $\xi$ and $1$ on $[0,1)$.
    \item (Compositional limit). $\lim_{L\to\infty}G^{(L)}_{\sigma}(\rho)=\xi$ for any $\rho\in[0,1)$. 
  
  \end{enumerate}
\end{proposition}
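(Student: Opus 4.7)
The plan is to exploit the fact that $G_{\sigma}(s)=\sum_{k\geq 0} a_k^2 s^k$ is a power series with non-negative coefficients summing to one, hence continuous, non-decreasing, and convex on $[0,1]$ with $G_{\sigma}(1)=1$. Part~(1) requires no new argument: $G_{\sigma}$ is by construction the PGF of the offspring distribution $Y_\sigma$ of the Galton--Watson process $\{Z_{\sigma,L}\}$, and $\xi = \lim_L \PP(Z_{\sigma,L}=0)$ by Lemma~\ref{lem:composition-kernel}, so the claim is exactly the content of the extinction criterion in Proposition~\ref{prop:extinction-criterion}.

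For part~(2), I would study $H(s):=G_\sigma(s)-s$. Since $G_\sigma''(s)=\sum_{k\geq 2}k(k-1)a_k^2 s^{k-2}\geq 0$, $H$ is convex on $[0,1]$ and therefore has at most two zeros there. One zero is always $s=1$. If $a_k=0$ for every $k\geq 2$, then $G_\sigma(s)=a_0^2+a_1^2 s$ is affine with $a_0^2=1-a_1^2>0$ (since $a_1^2\neq 1$), and a direct calculation shows the only fixed point is $s=1$, so $\xi=1$. Otherwise $H$ is strictly convex on $(0,1)$, so at most one additional zero $\xi\in[0,1)$ can occur. In either case, the smallest non-negative fixed point is $\xi$, and $\xi$ together with $1$ exhausts the fixed points on $[0,1]$.

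For part~(3), fix $\rho\in[0,1)$ and set $\rho_L := G_{\sigma}^{(L)}(\rho)$. Split on whether $\xi=1$ or $\xi<1$. If $\xi=1$, i.e.\ $\mu\leq 1$ by part~(1), convexity of $G_\sigma$ at $s=1$ gives $G_\sigma(s)\geq 1+\mu(s-1)\geq s$ for every $s\in[0,1]$, so the iterates $\rho_L$ are monotone non-decreasing and bounded above by $1$, hence converge to a fixed point in $[\rho,1]$, which must be $1=\xi$. If $\xi<1$, the convex function $H$ has exactly two zeros on $[0,1]$, namely $\xi$ and $1$, which forces $H>0$ on $[0,\xi)$ and $H<0$ on $(\xi,1)$. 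Combined with monotonicity of $G_\sigma$ (which together with $G_\sigma(\xi)=\xi$ preserves the inequality $\rho_L\leq \xi$ or $\rho_L\geq \xi$ through iteration), this yields $\rho_L\uparrow \xi$ when $\rho<\xi$, $\rho_L\equiv \xi$ when $\rho=\xi$, and $\rho_L\downarrow \xi$ when $\rho>\xi$.

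The only mildly delicate point is the degenerate affine case $a_k=0$ for all $k\geq 2$ in part~(2), which must be dispatched by hand but amounts to a one-line computation. Beyond that, everything reduces to convexity of $G_\sigma$ plus monotone convergence of bounded iterates, so no substantive obstacle is expected.
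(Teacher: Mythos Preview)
Your proposal is correct and follows essentially the same approach as the paper: both reduce parts~(1)--(2) to the extinction criterion of Proposition~\ref{prop:extinction-criterion} and prove part~(3) by a convexity/monotonicity case split on $\xi=1$ versus $\xi<1$. Your treatment is slightly more self-contained (you derive the two-fixed-point structure directly from convexity of $H(s)=G_\sigma(s)-s$ and handle the affine degenerate case explicitly, whereas the paper simply cites Proposition~\ref{prop:extinction-criterion}), and your iteration argument for $\rho<\xi$ is a bit cleaner than the paper's, but the substance is the same.
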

\begin{proof} Note that $G_{\sigma}(\cdot)$ has a fixed point at $1$ and it is continuous. Therefore, $\xi$ is well defined. The first two properties \textit{extinction} and \textit{fixed points} are just a restatement of Proposition \ref{prop:extinction-criterion}. Let's prove the third property \textit{compositional limit}. There are two cases $\xi=1$ and $\xi <1$.

If $\xi = 1$, then $G_{\sigma}'(1)\leq 1$. Hence, by convexity, we have $G_{\sigma}(\rho)\geq \rho\geq 0$ for any $\rho \in [0,1]$. Therefore, $\{G_{\sigma}^{(L)}(\rho)\colon L\in\mathbb{Z}^{\geq 0}\}$ is a non-decreasing sequence, for any $\rho\in[0,1]$, and it's limit converges to the unique fixed point $\xi=1$ of $G_{\sigma}(\cdot)$ on $[0,1]$. 

If $\xi <1$, then $G_{\sigma}'(1)=\mu>1$. First, let's look at $\rho\in[\xi,1)$. By convexity, we have $G_{\sigma}(\rho)\leq \rho$ for any  $\rho\in [\xi,1]$. Moreover, $G_{\sigma}(\cdot)$ is non-decreasing on $[\xi,1)$. Therefore, we have $\xi=G_{\sigma}(\xi)\leq G_{\sigma}(\rho)\leq \rho$ for any $\rho\in[\xi,1)$. Then, we have that $\{G_{\sigma}^{(L)}(\rho)\colon L\in\mathbb{Z}^{\geq 0}\}$ is a non-increasing sequence, for any $\rho\in[\xi,1)$, and it's limit converges to the unique fixed point $\xi$ of $G_{\sigma}$ on $[\xi,1)$.

Now, let's look at $\rho\in [0,\xi]$. By continuity of $G_{\sigma}(\cdot)$ and property \textit{fixed points}, we have that $G_{\sigma}(\rho)>\rho$ for any $\rho \in [0,\xi)$. If for any $L$ we have that $G_{\sigma}^{(L)}\in[\xi,1)$, then the result follows as a consequence of the case $\rho\in[\xi,1)$ If  $\xi>G_{\sigma}^{(L)}(\rho)>\rho$ for all $L\in\mathbb{Z}^{\geq 0}$, then the result follows by the same monotonicity argument as in the case $\rho\in[\xi,1)$.
\end{proof}

\begin{remark} 
\label{rmk:add-assump}
Under the same notation as Proposition \ref{prop:prop-pgf},  we will state three sufficient conditions for which the \textit{compositional limit} property of $G_{\sigma}$ extends for $\rho\in(-1,1)$:
\begin{enumerate}
  \item (Reduction to positive side). $G_{\sigma}(\cdot)$ is non-negative; for example, an even function $G_{\sigma}(\cdot)$. In this case, after the first iteration we will be in the realm of $[0,1]$, that is $G_{\sigma}(\rho)\in[0,1]$, and the result follows by the \textit{compositional limit} property applied to $G_{\sigma}(\rho)$. 
  \item (Extension of fixed points). $G_{\sigma}(\cdot)$ has only two fixed points $\xi$ and $1$; for example, a convex function $G_{\sigma}(\cdot)$. In this case, the result follows because the proof of  \textit{compositional limit} property for $\rho\in[0,\xi]$ relies solely on the \textit{fixed points} property. Therefore, we can extend the part in which  $\rho\in(0,\xi]$ to $\rho\in(-1,\xi)$.
  \item (Centered Activation) $G_{\sigma}(0)=0$; for example, an odd function $G_{\sigma}(\cdot)$. In this case, the result follows because $\xi=0$ and the contraction behavior of $G_{\sigma}(\cdot)$
  \begin{align}
  |G_{\sigma}^{(L)}(\rho)|\leq G_{\sigma}^{(L)}(|\rho|)\leq G_{\sigma}^{(L-1)}(|\rho|)\cdot |\rho|\leq \ldots\leq|\rho|^{L}.
  \end{align}
\end{enumerate}
\end{remark}

\begin{proof}[Proof of Theorem~\ref{thm:rescaled-limit}] Recall that $K_{\sigma}^{(L)}=G^{(L)}_{\sigma}$.
\begin{itemize}
\item[(i)] Since $e^{-t}\in (0,1]$, the conclusion follows by applying Proposition \ref{prop:prop-pgf}'s \textit{compositional limit} and \textit{extinction} properties.
\item[(ii)] By applying  Lemma \ref{lem:composition-kernel} and Kesten-Stigum Theorem \ref{thm:kensten-stigum}, we have conditional on non-extinction that 
\begin{align}
\lim_{L\to\infty}K_{\sigma}^{(L)}(e^{-t/\mu^{L}})
&=\lim_{L\to\infty}\EE[e^{-t Z_{\sigma,L}/\mu^{L}}]\\
&=\EE[\lim_{L\to\infty}e^{-t Z_{\sigma,L}/\mu^{L}}]\\
&=\EE[e^{-t W_{\sigma}}],
\end{align}

where interchanging limit and expectation follows by the Dominated Convergence Theorem. The proof completes by recalling that the extinction probability is $\xi$.
\item[(iii)]Analogous as above, if $t\neq 0$, then for any $m>0$
\begin{align}
\lim_{L\to\infty}K_{\sigma}^{(L)}(e^{-t/m^{L}})
&=\lim_{L\to\infty}\EE[e^{-t Z_{\sigma,L}/m^{L}}]\\
&=\EE[\lim_{L\to\infty}e^{-t Z_{\sigma,L}/m^{L}}]\\
&=\EE[0].\\
&=0.
\end{align}
The case $t=0$ follows immediately from the fact that $G_{\sigma}(1)=1$.
\end{itemize}
\end{proof}

\begin{proof}[Proof of Corollary~\ref{cor:unscaled-limit}] The conclusions follow from a reformulation of Proposition \ref{prop:prop-pgf}'s \textit{compositional limit} and \textit{extinction} properties, with the addition of Remark \ref{rmk:add-assump}.

\end{proof} 

\subsection{Proofs in Section~\ref{sec:nonasymptotic-memorization}}

\subsubsection{Bounding the Path Depth}
\begin{definition} (Path depth) For any $\alpha\leq\beta\in(0,1)$, define the minimum depth of compositions of $G_{\sigma}$ to reach from $\beta$ to below $\alpha$ as $L_{\beta\to\alpha}$, namely
\begin{align}
L_{\beta\to\alpha} = \min\{L\geq 0\colon G_{\sigma}^{(L)}(\beta)\leq \alpha\}.
\end{align}
\end{definition}

\begin{lemma}[Path depth: upper bound] \label{lem:path-upp-bd} 
For any $\alpha\leq\beta\in(0,1)$, we have
\begin{align}
L_{\beta\to \alpha} \leq H_{\mathrm{upp}}(\alpha,\beta)\coloneqq\min\Bigg\{ \frac{\log\frac{\beta}{\alpha}}{\log \frac{\beta}{G_{\sigma}(\beta)}},\frac{\log\frac{1-\alpha}{1-\beta}}{\log \frac{1-G_{\sigma}(\alpha)}{1-\alpha}}\Bigg\} + 1.
\end{align}
\end{lemma}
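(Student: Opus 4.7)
The plan is to establish each of the two candidate upper bounds inside the minimum separately. Since $\min\{A,B\}+1 = \min\{A+1,B+1\}$, it suffices to show that $L_{\beta\to\alpha}$ is bounded by each expression plus one. Write $s_L := G_\sigma^{(L)}(\beta)$ for the iterates, and recall that under the standing Assumptions~\ref{asmp:norm-activation} and~\ref{asmp:cent-activation} (plus non-linearity), Corollary~\ref{cor:unscaled-limit} gives $\mu > 1$ and $\xi = 0$, so $G_\sigma(s) < s$ on $(0,1)$ and the sequence $\{s_L\}$ strictly decreases toward $0$.

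For the multiplicative branch, I would use $a_0 = 0$ to write $G_\sigma(s)/s = \sum_{k\geq 1} a_k^2\, s^{k-1}$, which is non-decreasing on $(0,1]$ as a power series with non-negative coefficients. Hence for every $s \leq \beta$, $G_\sigma(s) \leq s \cdot q$ with $q := G_\sigma(\beta)/\beta \in (0,1)$. Iterating gives $s_L \leq \beta\, q^L$, so $s_L \leq \alpha$ once $L \geq \log(\beta/\alpha)/\log(\beta/G_\sigma(\beta))$; replacing this real number by its ceiling costs at most an additive $1$, yielding the first branch of $H_{\mathrm{upp}}$.

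For the distance-to-one branch, set $u_L := 1 - s_L$, and use $1 - s^k = (1-s)(1 + s + \cdots + s^{k-1})$ to get
\[
\frac{1 - G_\sigma(s)}{1 - s} \;=\; \sum_{k\geq 1} a_k^2\bigl(1 + s + \cdots + s^{k-1}\bigr),
\]
also non-decreasing in $s \in [0,1)$. Writing $L^\star := L_{\beta\to\alpha}$, for every $L < L^\star$ we have $s_L > \alpha$, so the ratio above evaluated at $s = s_L$ is at least $R := (1-G_\sigma(\alpha))/(1-\alpha) > 1$. This yields $u_{L+1} \geq R\, u_L$ along the whole path up to $L^\star$, hence $u_L \geq R^L (1-\beta)$ as long as we have not yet reached $\alpha$; once $R^L(1-\beta) \geq 1-\alpha$ the target has been reached, giving $L^\star \leq \log((1-\alpha)/(1-\beta))/\log R$ up to a ceiling, again at most $+1$.

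The only real subtlety is keeping the ``while-loop'' intervals straight: the first monotone-ratio bound is valid only when $s_L \leq \beta$ (trivially true since $\{s_L\}$ decreases), while the second is valid only while $s_L \geq \alpha$ (i.e., before termination); both restrictions are consistent with the corresponding induction, since as soon as either fails we have already reached the target. Taking the minimum of the two resulting ceilings and pulling the $+1$ outside the min gives exactly $H_{\mathrm{upp}}(\alpha,\beta)$.
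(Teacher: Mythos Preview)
Your proposal is correct and follows essentially the same approach as the paper: both branches rely on the monotonicity of the ratios $G_\sigma(s)/s$ and $(1-G_\sigma(s))/(1-s)$ as power series with non-negative coefficients, then telescope to obtain geometric bounds and solve for the number of steps, picking up the $+1$ from the ceiling. The paper writes the telescoping explicitly at the index $L_{\beta\to\alpha}-1$ (the last iterate still above $\alpha$), while you phrase the same thing as a while-loop invariant, but the underlying argument is identical.
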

\begin{proof}
\textbf{Right-side bound}. 
For all $t\in[\alpha,\beta]$, we have
\begin{align}
\frac{1-G_{\sigma}(t)}{1-t}=\sum_{k\geq 1}a_{k}^2(1+t+\ldots+t^{k-1})
\geq \sum_{k\geq 1}a_{k}^2(1+\alpha+\ldots+\alpha^{k-1})=\frac{1-G_{\sigma}(\alpha)}{1-\alpha} > 1.
\end{align}
Therefore, we have
\begin{align}
\alpha < G_{\sigma}^{(L_{\beta\to \alpha}-1)}(\beta)
&=1-(1-\beta)\prod_{\ell=0}^{L_{\beta\to \alpha}-2} \frac{1-G_{\sigma}^{(\ell+1)}(\beta)}{1-G_{\sigma}^{(\ell)}(\beta)}\\
&\leq 1- (1-\beta)\Big(\frac{1-G_{\sigma}(\alpha)}{1-\alpha}\Big)^{L_{\beta\to \alpha}-1}
\end{align}
Rearranging, we obtain
\begin{align}
\label{eq:top-bot-path}
L_{\beta\to \alpha}\leq \frac{\log\frac{1-\alpha}{1-\beta}}{\log\frac{1-G_{\sigma}(\alpha)}{1-\alpha}} + 1.
\end{align}

\textbf{Left-side bound}. For all $t\in[\alpha,\beta]$, we have
\begin{align}
\frac{G_{\sigma}(t)}{t}=\sum_{k\geq 1}a_{k}^2 t^{k-1}\leq \sum_{k\geq 1}a_{k}^2 \beta^{k-1}=\frac{G_{\sigma}(\beta)}{\beta} < 1
\end{align}
Thus, we have 
\begin{align}
 \alpha < G_{\sigma}^{(L_{\beta\to \alpha}-1)}(\beta)
&=\beta\cdot \prod_{\ell=0}^{L_{\beta\to \alpha}-2} \frac{G_{\sigma}^{(\ell+1)}(\beta)}{G_{\sigma}^{(\ell)}(\beta)}\\
&\leq \beta \cdot \Big(\frac{G_{\sigma}(\beta)}{\beta}\Big)^{L_{\beta\to \alpha}-1}.
\end{align}
Now, rearranging we have
\begin{align}
\label{eq:low-bot-path}
L_{\beta\to \alpha}\leq \frac{\log\frac{\beta}{\alpha}}{\log \frac{\beta}{G_{\sigma}(\beta)}} + 1.
\end{align}
\end{proof}

\begin{lemma} (Path depth: lower bound).\label{lem:path-low-bd} For any $\alpha\leq\beta\in(0,1)$, we have
\begin{align}
L_{\beta\to \alpha} \geq H_{\mathrm{low}}(\alpha,\beta)\coloneqq\max\Bigg\{ \frac{\log\frac{\beta}{\alpha}}{\log \frac{\alpha}{G_{\sigma}(\alpha)}},\frac{\log\frac{1-\alpha}{1-\beta}}{\log \frac{1-G_{\sigma}(\beta)}{1-\beta}}\Bigg\}
\end{align}
\end{lemma}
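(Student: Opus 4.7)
The plan is to mirror the proof of Lemma~\ref{lem:path-upp-bd} verbatim, only reversing each of the two monotonicity inequalities on $G_\sigma(t)/t$ and $(1-G_\sigma(t))/(1-t)$. Both quantities are non-decreasing in $t\in[0,1]$ since they are power series with non-negative coefficients $a_k^2$:
\begin{align*}
\frac{G_\sigma(t)}{t}=\sum_{k\geq 1}a_k^2 t^{k-1},\qquad \frac{1-G_\sigma(t)}{1-t}=\sum_{k\geq 1}a_k^2(1+t+\cdots+t^{k-1}).
\end{align*}
This monotonicity was used in the upper bound to bound each factor of a telescoping product \emph{against the endpoint of its own direction}; for the lower bound I would instead evaluate each factor at the \emph{opposite} endpoint of the orbit, which still lies in $[\alpha,\beta]$.

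Set $L=L_{\beta\to\alpha}$, so by minimality $G_\sigma^{(L)}(\beta)\leq\alpha$ while $G_\sigma^{(\ell)}(\beta)>\alpha$ for $\ell=0,\ldots,L-1$. Under Assumption~\ref{asmp:cent-activation} the identity $G_\sigma(t)/t=\sum_{k\geq 1}a_k^2 t^{k-1}\leq 1$ on $[0,1]$ shows the orbit is non-increasing, so all these iterates lie in $(\alpha,\beta]$. The first telescoping identity gives
\begin{align*}
G_\sigma^{(L)}(\beta)=\beta\prod_{\ell=0}^{L-1}\frac{G_\sigma(G_\sigma^{(\ell)}(\beta))}{G_\sigma^{(\ell)}(\beta)}\geq \beta\left(\frac{G_\sigma(\alpha)}{\alpha}\right)^{L},
\end{align*}
where the inequality uses $G_\sigma^{(\ell)}(\beta)\geq\alpha$ together with the monotonicity of $G_\sigma(t)/t$. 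Combining with $G_\sigma^{(L)}(\beta)\leq\alpha$ and taking logarithms (noting $\alpha/G_\sigma(\alpha)>1$) yields $L\geq \log(\beta/\alpha)/\log(\alpha/G_\sigma(\alpha))$, the first term of the max.

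For the second term, I would apply the complementary telescoping
\begin{align*}
1-G_\sigma^{(L)}(\beta)=(1-\beta)\prod_{\ell=0}^{L-1}\frac{1-G_\sigma(G_\sigma^{(\ell)}(\beta))}{1-G_\sigma^{(\ell)}(\beta)}\leq (1-\beta)\left(\frac{1-G_\sigma(\beta)}{1-\beta}\right)^{L},
\end{align*}
using $G_\sigma^{(\ell)}(\beta)\leq\beta$ and the monotonicity of $(1-G_\sigma(t))/(1-t)$ in the opposite direction. Since $G_\sigma^{(L)}(\beta)\leq\alpha$ implies $1-G_\sigma^{(L)}(\beta)\geq 1-\alpha$, rearranging and taking logarithms delivers $L\geq \log\tfrac{1-\alpha}{1-\beta}/\log\tfrac{1-G_\sigma(\beta)}{1-\beta}$. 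Taking the maximum of the two lower bounds concludes.

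The only non-routine point is justifying that every orbit element $G_\sigma^{(\ell)}(\beta)$ lies in $[\alpha,\beta]$ for $\ell\leq L-1$, which is where the centered assumption (ensuring monotone decrease toward the fixed point $0$) is essential; without it $G_\sigma(t)/t$ could exceed $1$ and the orbit could overshoot in either direction. Everything else is a mechanical sign-flip of the upper bound argument, so I do not anticipate a genuine obstacle.
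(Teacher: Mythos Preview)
Your proposal is correct and follows essentially the same approach as the paper: both arguments telescope $G_\sigma^{(L)}(\beta)$ multiplicatively (as $\beta\prod G_\sigma(t)/t$) and $1-G_\sigma^{(L)}(\beta)$ (as $(1-\beta)\prod (1-G_\sigma(t))/(1-t)$), bound each factor using the monotonicity of the respective power series evaluated at the opposite endpoint from the upper-bound proof, and rearrange. Your explicit justification that the iterates $G_\sigma^{(\ell)}(\beta)$ remain in $(\alpha,\beta]$ for $\ell\leq L-1$ is slightly more careful than the paper, which leaves this implicit, but the substance is identical.
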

\begin{proof}
\textbf{Right-side bound}. For all $t\in[\alpha,\beta]$, we have
\begin{align}
1 < \frac{1-G_{\sigma}(t)}{1-t}=\sum_{k\geq 1}a_{k}^2(1+t+\ldots+t^{k-1})
\leq \sum_{k\geq 1}a_{k}^2(1+\beta+\ldots+\beta^{k-1})=\frac{1-G_{\sigma}(\beta)}{1-\beta}.
\end{align}
Therefore, we have
\begin{align}
\alpha&\geq G_{\sigma}^{(L_{\beta\to \alpha})}(\beta)\\
&= 1- (1-\beta)\prod_{\ell=0}^{L_{\beta\to \alpha}-1} \frac{1-G_{\sigma}^{(\ell+1)}(\beta)}{1-G_{\sigma}^{(\ell)}(\beta)}\\
&\geq 1-(1-\beta)\cdot \Big(\frac{1-G_{\sigma}(\beta)}{ 1-\beta}\Big)^{L_{\beta\to \alpha}}
\end{align}
Rearranging and recall the integer definition of $L_{\beta\to \alpha}$, we get that
\begin{align}
\label{eq:top-bot-path}
L_{\beta\to \alpha}\geq \big\lceil \frac{\log\frac{1-\alpha}{1-\beta}}{\log \frac{1-G_{\sigma}(\beta)}{1-\beta}} \big\rceil.
\end{align}

\textbf{Left-side bound}. For all $t\in[\alpha,\beta]$, we have
\begin{align}
1> \frac{G_{\sigma}(t)}{t}=\sum_{k\geq 1}a_{k}^2 t^{k-1}\geq \sum_{k\geq 1}a_{k}^2 \alpha^{k-1}=\frac{G_{\sigma}(\alpha)}{\alpha}
\end{align}
Thus, we have
\begin{align}
\alpha
&\geq G_{\sigma}^{(L_{\beta\to \alpha})}(\beta)\\
&=\beta\cdot \prod_{\ell=0}^{L_{\beta\to \alpha}-1} \frac{G_{\sigma}^{(\ell+1)}(\beta)}{G_{\sigma}^{(\ell)}(\beta)}\\
&\geq \beta \cdot \Big(\frac{G_{\sigma}(\alpha)}{\alpha}\Big)^{L_{\beta\to \alpha}}.
\end{align}
Now, rearranging we have
\begin{align}
\label{eq:low-bot-path}
L_{\beta\to \alpha}\geq \big\lceil \frac{\log\frac{\beta}{\alpha}}{\log \frac{\alpha}{G_{\sigma}(\alpha)}} \big\rceil.
\end{align}
\end{proof}

\begin{corollary} (Chain of Path Depth) \label{cor:path} For any $\alpha\leq\beta\in(0,1)$, we have
\begin{align}
\max_{s\in[\alpha,\beta]} \{ H_{\mathrm{low}}(\alpha,s) + H_{\mathrm{low}}(s,\beta) \} -1 \leq L_{\beta\to\alpha}\leq \min_{s\in[\alpha,\beta]} \{ H_{\mathrm{upp}}(\alpha,s) + H_{\mathrm{upp}}(s,\beta) \}.
\end{align}
\end{corollary}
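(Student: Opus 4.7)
The plan is to prove both bounds by a chaining argument through an intermediate level $s \in [\alpha, \beta]$, combined with the per-segment estimates from Lemmas~\ref{lem:path-upp-bd} and \ref{lem:path-low-bd}. The engine is monotonicity of $G_\sigma$ on $[0,1]$: if $v_1 \leq v_2$, then $G_\sigma^{(L)}(v_1) \leq G_\sigma^{(L)}(v_2)$ for all $L$, hence $L_{v_1 \to \alpha} \leq L_{v_2 \to \alpha}$. This monotonicity lets us compare the orbit of $\beta$ to the orbit of any intermediate value $s$.

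For the upper bound, I would argue as follows. Fix $s \in [\alpha,\beta]$ and set $L_1 := L_{\beta \to s}$. By definition, $v := G_\sigma^{(L_1)}(\beta) \leq s$, and by monotonicity the number of additional compositions needed to reach $\alpha$ starting from $v$ is at most $L_{s \to \alpha}$. Therefore $L_{\beta \to \alpha} \leq L_1 + L_{s \to \alpha} = L_{\beta \to s} + L_{s \to \alpha}$. Applying Lemma~\ref{lem:path-upp-bd} to each segment gives $L_{\beta \to \alpha} \leq H_{\mathrm{upp}}(s,\beta) + H_{\mathrm{upp}}(\alpha,s)$, and minimizing over $s \in [\alpha,\beta]$ yields the stated upper bound.

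For the lower bound, fix $s \in [\alpha,\beta]$ and again let $L_1 := L_{\beta \to s}$. The minimality in the definition of $L_1$ gives $G_\sigma^{(L_1 - 1)}(\beta) > s$, so if we call this value $v' > s$, the remaining number of compositions to get below $\alpha$ is exactly $L_{\beta \to \alpha} - (L_1 - 1)$. Since $v' > s$, monotonicity gives $L_{v' \to \alpha} \geq L_{s \to \alpha}$, hence $L_{\beta \to \alpha} - L_1 + 1 \geq L_{s \to \alpha}$, i.e.\ $L_{\beta \to \alpha} \geq L_{\beta \to s} + L_{s \to \alpha} - 1$. Applying Lemma~\ref{lem:path-low-bd} to each segment and maximizing over $s$ gives the stated lower bound.

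No step here is a genuine obstacle; this is a bookkeeping argument. The only delicate point is the $-1$ appearing in the lower bound, which is unavoidable because $L_{\beta \to s}$ counts the first time the orbit drops at or below $s$, so splitting the trajectory exactly at level $s$ costs at most one composition of overlap. The rest is a direct application of the segment-wise estimates already established.
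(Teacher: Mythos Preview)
Your argument is correct and is essentially the same as the paper's: both split the trajectory at an intermediate level $s$, use monotonicity of $G_\sigma$ on $[0,1]$ to sandwich $L_{\beta\to\alpha}$ between $L_{\beta\to s}+L_{s\to\alpha}-1$ and $L_{\beta\to s}+L_{s\to\alpha}$, and then invoke Lemmas~\ref{lem:path-upp-bd} and~\ref{lem:path-low-bd} on each segment. The only cosmetic difference is that for the lower bound you work with $v'=G_\sigma^{(L_1-1)}(\beta)>s$ directly, whereas the paper phrases the same step as $L_{s\to\alpha}-1\le L_{G_\sigma^{(L_1)}(\beta)\to\alpha}$; these are equivalent once one notes $L_{v'\to\alpha}=1+L_{G_\sigma(v')\to\alpha}$.
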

\begin{proof} We have
\begin{align}
L_{\beta\to\alpha} = L_{\beta\to s}+  L_{G_{\sigma}^{(L_{\beta\to s})}(\beta)\to\alpha}.
\end{align}
By definition of $L_{\beta\to s}$, we have
\begin{align}
G_{\sigma}^{(L_{\beta\to s}-1)}(\beta)> s \qquad \text{and} \qquad  G_{\sigma}^{(L_{\beta\to s})}(\beta)\leq s.
\end{align}
Since $G$ is non-decreasing on $[0,1]$, we obtain 
\begin{align}
L_{s\to\alpha}-1\leq L_{G_{\sigma}^{(L_{\beta\to s})}(\beta)\to\alpha} \leq  L_{s\to\alpha}.
\end{align}
Putting things together, we get
\begin{align}
L_{\beta\to s}+L_{s\to\alpha}-1\leq L_{G_{\sigma}^{(L_{\beta\to s})}(\beta)\to\alpha} \leq  L_{\beta\to s}+L_{s\to\alpha}.
\end{align}
Applying Lemma \ref{lem:path-upp-bd} and Lemma \ref{lem:path-low-bd} and to $L_{\beta\to s}$ and $L_{s\to \alpha}$, we reach the conclusion.

\end{proof}

\subsubsection{Memorization Conditions}

\begin{proposition}[$\kappa$-memorization and $\epsilon$-closeness] If $\mathbf{K}$ has the $\kappa$-memorization property, then $\mathbf{K}$ satisfies the $\kappa$-closeness property. Conversely, if $\mathbf{K}$ satisfies the $\kappa/n$-closeness property, then $\mathbf{K}$ has the $\kappa$-memorization property.
\end{proposition}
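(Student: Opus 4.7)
The plan rests on two elementary observations. First, under Assumption~\ref{asmp:norm-activation} the dual PGF satisfies $G_{\sigma}(1)=\sum_{k\geq 0}a_k^2=1$, so the $L$-fold composition also satisfies $K_{\sigma}^{(L)}(1)=1$, and every diagonal entry of $\mathbf{K}$ equals $1$. Second, $\mathbf{K}$ is symmetric, so the $\kappa$-memorization spectral condition $1-\kappa\leq \lambda_i(\mathbf{K})\leq 1+\kappa$ is equivalent to $\|\mathbf{K}-\mathbf{I}_n\|_{\mathrm{op}}\leq\kappa$. Both directions of the proposition fall out of these two facts together with the standard entrywise-vs-operator-norm inequality $|A_{ij}|\leq \|A\|_{\mathrm{op}}$.

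For the forward implication, I would note that if $\mathbf{K}$ is $\kappa$-memorizing, then for any $i\neq j$ we have $|K(x_i,x_j)|=|e_i^{\top}(\mathbf{K}-\mathbf{I}_n)e_j|\leq \|\mathbf{K}-\mathbf{I}_n\|_{\mathrm{op}}\leq \kappa$, where the first equality uses that the off-diagonal part of $\mathbf{I}_n$ vanishes. Maximising over $i\neq j$ gives exactly the $\kappa$-closeness property.

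For the converse, I would apply Gershgorin's disk theorem to $\mathbf{K}$ directly. With every diagonal entry equal to $1$ and every off-diagonal entry bounded in absolute value by $\kappa/n$, each eigenvalue of $\mathbf{K}$ lies in some Gershgorin disk centered at $1$ of radius at most $\sum_{j\neq i}|\mathbf{K}_{ij}|\leq (n-1)\kappa/n<\kappa$. Hence $1-\kappa\leq \lambda_i(\mathbf{K})\leq 1+\kappa$ for every $i$, which is exactly $\kappa$-memorization.

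There is no substantive obstacle here; the role of this proposition is simply to record that $\epsilon$-closeness is a convenient surrogate for spectral concentration of $\mathbf{K}$ around $\mathbf{I}_n$, accurate up to the unavoidable factor-of-$n$ gap between the entrywise $\ell_\infty$ bound and the operator norm. This equivalence is what licenses the rest of Section~\ref{sec:nonasymptotic-memorization} to prove $\tilde L_{\epsilon}$ bounds first and then convert them into $L_{\kappa}$ bounds by setting $\epsilon = \kappa/n$, at the cost of an additional $\log n$ factor in the resulting depth estimates.
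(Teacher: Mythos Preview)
Your proof is correct and essentially mirrors the paper's: the paper proves the forward direction by testing $\mathbf{K}$ against $u=(e_i-e_j)/\sqrt{2}$ (equivalent to your $|e_i^{\top}(\mathbf{K}-\mathbf{I}_n)e_j|\leq \|\mathbf{K}-\mathbf{I}_n\|_{\mathrm{op}}$), and proves the converse by the bare-hands eigenvector argument that is exactly Gershgorin's disk theorem, which you invoke by name. The only cosmetic difference is that you make explicit the fact $K_{\sigma}^{(L)}(1)=1$ needed for the diagonal entries, which the paper leaves implicit.
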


\begin{proof} \textbf{Sufficient condition}. Let $(\lambda,\mathbf{v})$ be a pair of eigenvalue and eigenvector of $\mathbf{K}$. Let $i=\arg\max_{j} |v_{j}|$. Then, we have 
\begin{align}
\lambda_{i}v_{i}=v_{i}+\sum_{j\neq i}v_{j}K_{ij}.
\end{align}
Then, we have
\begin{align}
|1-\lambda|\cdot |v_{i}| \leq \sum_{j\neq i}|v_{j}|\cdot |K_{ij}| \leq (n-1)\cdot\frac{\kappa}{n} |v_{i}| \cdot <\kappa |v_{i}|,
\end{align}
and, thus, $|1-\lambda|\leq \kappa$.

\textbf{Necessary condition}. We have 
\begin{align}
  (1-\kappa)I_{n} \preceq \mathbf{K}\preceq(1+\kappa)I_{n}.
\end{align}
Let $u := (e_{i} - e_{j})/\sqrt{2}$ for $i\neq j$. Then, we have
\begin{align}
  u^\top \mathbf{K}u=1-K_{ij}\in (1-\kappa, 1+\kappa),
\end{align}
and, thus, $|K_{ij}|\leq \kappa$ for all $i\neq j$.
\end{proof}

\begin{proposition}[$\epsilon$-closeness and depth] \label{prop:close_depth} The empirical kernel matrix $\mathbf{K}$ satisfies the $\epsilon$-closeness property if and only if $L\geq L_{\rho\to \epsilon}$. Moreover, we have $\tilde{L}_{\epsilon}=L_{\rho\to \epsilon}$.
\end{proposition}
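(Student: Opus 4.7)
The plan is to reduce the multi-point $\epsilon$-closeness condition to a single scalar inequality of the form $G_\sigma^{(L)}(\rho) \leq \epsilon$, after which both assertions follow directly from the definition of $L_{\rho\to\epsilon}$ together with the monotonicity of $L\mapsto G_\sigma^{(L)}(\rho)$. The essential technical input is the symmetry assumption $|G_\sigma(s)|=G_\sigma(|s|)$, which lets one absorb the absolute values through the composition even when the individual $\rho_{ij}$ are negative.

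First, I would lift the symmetry assumption to every iterate by induction on $L$, showing that $|G_\sigma^{(L)}(s)|=G_\sigma^{(L)}(|s|)$ for all $s\in(-1,1)$. The base case $L=1$ is the assumption. For the inductive step, applying the assumption at the point $u:=G_\sigma^{(L)}(s)$ and then the inductive hypothesis gives
\begin{align}
|G_\sigma^{(L+1)}(s)| = |G_\sigma(u)| = G_\sigma(|u|) = G_\sigma\bigl(G_\sigma^{(L)}(|s|)\bigr) = G_\sigma^{(L+1)}(|s|).
\end{align}
Along the way one checks that the intermediate values stay in $(-1,1)$: since $|G_\sigma(s)|=G_\sigma(|s|)=\sum_{k\geq 0}a_k^2|s|^k<\sum_{k\geq 0}a_k^2=1$ whenever $|s|<1$ (using that some $a_k$ with $k\geq 1$ is nonzero, which holds under the standing centered and normalized assumptions on $\sigma$).

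Second, because $G_\sigma$ is a probability generating function with non-negative Taylor coefficients (Lemma \ref{lem:key-duality}), every iterate $G_\sigma^{(L)}$ is non-decreasing on $[0,1]$. Combined with the first step, this yields
\begin{align}
\max_{i\neq j}\bigl|K_\sigma^{(L)}(\rho_{ij})\bigr| = \max_{i\neq j}G_\sigma^{(L)}(|\rho_{ij}|) = G_\sigma^{(L)}\bigl(\max_{i\neq j}|\rho_{ij}|\bigr) = G_\sigma^{(L)}(\rho),
\end{align}
so that $\epsilon$-closeness of $\mathbf{K}$ is equivalent to the scalar inequality $G_\sigma^{(L)}(\rho)\leq\epsilon$.

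Third, I would check that $L\mapsto G_\sigma^{(L)}(\rho)$ is non-increasing so that $\{L:G_\sigma^{(L)}(\rho)\leq\epsilon\}$ is upward closed. Under Assumption \ref{asmp:cent-activation} we have $G_\sigma(0)=0$ and $G_\sigma(1)=1$, and by convexity $G_\sigma(t)\leq t$ on $[0,1]$; iterating gives the claimed monotonicity. The first claim of the proposition, namely $\epsilon$-closeness $\iff L\geq L_{\rho\to\epsilon}$, then follows from the definition $L_{\rho\to\epsilon}=\min\{L\geq 0:G_\sigma^{(L)}(\rho)\leq\epsilon\}$, and the identity $\tilde{L}_\epsilon=L_{\rho\to\epsilon}$ is immediate since $\tilde{L}_\epsilon$ is by definition the smallest $L$ for which the closeness condition holds. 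The main delicate point is the first step: without the symmetry assumption, iterated compositions of possibly negative arguments need not collapse to $G_\sigma^{(L)}(|\rho_{ij}|)$, and the clean equivalence between the matrix-level and the scalar conditions breaks down.
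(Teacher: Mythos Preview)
Your proposal is correct and follows the same route as the paper's proof: both reduce $\epsilon$-closeness to the scalar identity $\max_{i\neq j}|G_\sigma^{(L)}(\rho_{ij})| = G_\sigma^{(L)}(\rho)$ and then invoke the definition of $L_{\rho\to\epsilon}$. The paper simply asserts this identity and the equivalence $G_\sigma^{(L)}(\rho)\leq\epsilon \iff L\geq L_{\rho\to\epsilon}$ without justification, whereas you supply the details the paper suppresses---the inductive lift of the symmetry assumption to all iterates, the monotonicity of $G_\sigma^{(L)}$ on $[0,1]$, and the monotonicity in $L$ coming from $G_\sigma(t)\leq t$ under the centered assumption.
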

\begin{proof} $K$ satisfying the $\epsilon$-closeness property means
\begin{align}
  G_{\sigma}^{(L)}(\rho)=\max_{ij}|G_{\sigma}^{(L)}(\rho_{ij})|\leq \epsilon,
\end{align}
which is equivalent by definition to $L\geq L_{\rho\to\epsilon}$.
\end{proof}

\begin{proposition} \label{prop:mem_clos}Recall that is $L_{\kappa}$ is the minimum compositional depth such that the empirical kernel matrix $\mathbf{K}$ has the $\kappa$-memorization property. Then, we have 
\begin{align}
L_{\rho\to \kappa}\leq L_{\kappa}\leq L_{\rho\to\frac{\kappa}{n}}.
\end{align}
\end{proposition}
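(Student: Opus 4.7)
The plan is to deduce the bounds on $L_\kappa$ by chaining the two equivalences established immediately above: the two-sided implication between $\kappa$-memorization and closeness (the preceding proposition relating eigenvalues to off-diagonal bounds), and the exact identification $\tilde L_\epsilon = L_{\rho \to \epsilon}$ from Proposition \ref{prop:close_depth}. All steps are purely logical once these three facts are in hand; no new analytic estimates are needed.

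For the lower bound, I would argue as follows. By the first direction of the proposition relating $\kappa$-memorization to $\kappa$-closeness, any depth $L$ at which $\K^{(L)}$ is $\kappa$-memorizing is automatically $\kappa$-close. By definition of $\tilde L_\kappa$ as the minimal such depth for closeness, and by Proposition \ref{prop:close_depth} which identifies $\tilde L_\kappa = L_{\rho \to \kappa}$, I obtain
\begin{align}
L_\kappa \;\geq\; \tilde L_\kappa \;=\; L_{\rho \to \kappa}.
\end{align}

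For the upper bound, I would use the reverse direction: $(\kappa/n)$-closeness implies $\kappa$-memorization. Hence at any depth $L \geq \tilde L_{\kappa/n} = L_{\rho \to \kappa/n}$, the matrix $\K^{(L)}$ is $(\kappa/n)$-close and therefore $\kappa$-memorizing, so
\begin{align}
L_\kappa \;\leq\; \tilde L_{\kappa/n} \;=\; L_{\rho \to \kappa/n}.
\end{align}
Combining the two inequalities yields the claim. The only subtlety worth a sentence in the write-up is the monotonicity used implicitly: at depth $L \geq L_{\rho \to \kappa/n}$, the maximum off-diagonal value $\max_{i\ne j}|G_\sigma^{(L)}(\rho_{ij})|$ is indeed bounded by $\kappa/n$, which follows from $G_\sigma^{(L)}(\rho) \leq \kappa/n$ at $\rho = \max_{i\ne j}|\rho_{ij}|$ together with the assumed symmetry $|G_\sigma(s)| = G_\sigma(|s|)$ and the compositional/contractive behavior of $G_\sigma$ on $[0,\rho]$ established in the proof of Proposition \ref{prop:prop-pgf}. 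I do not anticipate any real obstacle; the entire argument is a two-line deduction from the preceding propositions and the key work lies in those earlier eigenvalue and path-depth lemmas.
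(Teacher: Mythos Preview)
Your proposal is correct and follows essentially the same route as the paper: both directions are obtained by combining the proposition relating $\kappa$-memorization and $\epsilon$-closeness with the identification $\tilde L_\epsilon = L_{\rho\to\epsilon}$ from Proposition~\ref{prop:close_depth}. Your remark on monotonicity is harmless but unnecessary---for the upper bound you only need $(\kappa/n)$-closeness at the single depth $L_{\rho\to\kappa/n}$, which holds by definition, and then minimality of $L_\kappa$ finishes.
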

\begin{proof} By Proposition \ref{prop:mem_clos}, we have that $\kappa$-memorization implies $\kappa$-closeness. Therefore, by Proposition \ref{prop:close_depth}, we have $L_{\kappa}\geq L_{\rho\to \kappa}$. Again, by Proposition \ref{prop:mem_clos}, we have $\frac{\kappa}{n}$-closeness implies $\kappa$ memorization.  Thus, by Proposition \ref{prop:close_depth} and minimality of $L_{\kappa}$, we have $L_{\kappa}\leq L_{\rho\to \frac{\kappa}{n}}$.
\end{proof}

\subsubsection{Small correlation}

Consider the small correlation regime \ref{data:r1} when $\frac{\log n}{d(n)} < c$ with some small enough constant $c<1$. We consider a typical instance of the data where such small correlation is guaranteed with high probability, due to the following lemma.

\begin{lemma}[Concentration: small correlation regime]
  \label{lem:concentration}
  Suppose that $x_i \stackrel{iid}{\sim} {\rm Unif}(\mathbb{S}^{d-1})$ with $i\in [n]$ and  that $ d^{-1} \log n <0.01$,  then
  \begin{align}
    0.5  \sqrt{\frac{\log n}{d}}  \leq \max_{i \neq j}  |\langle x_i, x_j \rangle| \leq 3 \sqrt{\frac{\log n}{d}}
  \end{align}
  with probability at least $1 - 4 n^{-1/2}$. 
\end{lemma}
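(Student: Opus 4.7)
The plan is to handle the two tails with different tools. By rotational invariance, for any fixed pair $i\neq j$, conditional on $x_i$ the inner product $\langle x_i, x_j\rangle$ is distributed as the first coordinate of a uniform point on $\mathbb{S}^{d-1}$, with density $c_d(1-t^2)^{(d-3)/2}\mathbbm{1}\{t\in[-1,1]\}$ where $c_d=\Gamma(d/2)/(\sqrt\pi\,\Gamma((d-1)/2))\asymp\sqrt{d}$. From this one obtains the standard sub-Gaussian tail $\PP(|\langle x_i,x_j\rangle|\geq t)\leq 2\exp(-(d-1)t^2/2)$. For the upper bound, I would plug in $t=3\sqrt{(\log n)/d}$ and take a union bound over the $\binom{n}{2}\leq n^2/2$ pairs; since $d\geq 100\log n$ implies $(d-1)/d\geq 0.99$, the resulting failure probability is at most $n^2\cdot e^{-(9/2)(0.99)\log n}=O(n^{-5/2})$.

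For the lower bound I would use the second moment method. Set $t_\star=0.5\sqrt{(\log n)/d}$, let $p:=\PP(|\langle x_1,x_2\rangle|\geq t_\star)$, and let $N=\sum_{i<j}\mathbbm{1}\{|\langle x_i,x_j\rangle|\geq t_\star\}$ so that $\E N=\binom{n}{2}p$. The key structural observation, which makes Chebyshev sharp here, is that the indicators $\{A_{ij}\}$ are \emph{pairwise independent}: for three distinct indices $i,j,k$, rotational symmetry gives $\PP(A_{ij}=1\mid x_i)=p$ as a deterministic constant in $x_i$, hence
\[
\PP(A_{ij}=1,A_{ik}=1)=\EE_{x_i}\bigl[\PP(A_{ij}=1\mid x_i)\PP(A_{ik}=1\mid x_i)\bigr]=p^2.
\]
Pairs with all four indices distinct are independent trivially. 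Therefore $\mathrm{Var}(N)=\binom{n}{2}p(1-p)\leq n^2 p/2$, and Chebyshev gives $\PP(N=0)\leq \mathrm{Var}(N)/(\E N)^2\leq 2/(n^2 p)$.

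The hard part will be an \emph{anti-concentration} lower bound on $p$, since lower-tail integration of the Beta-like density is more delicate than the matching upper bound. I would exploit the hypothesis $(\log n)/d<0.01$, which gives $t_\star^2\leq 0.0025$, to show that on $s\in[t_\star,2t_\star]$ the density satisfies $(1-s^2)^{(d-3)/2}\geq\exp(-(d-3)s^2/(2(1-s^2)))\geq n^{-1/8}\cdot(1-o(1))$, then integrate directly to obtain $p\geq C\cdot n^{-1/8}/\sqrt{\log n}$ for an absolute constant $C$. Plugging this into the Chebyshev bound yields $\PP(N=0)\leq O(\sqrt{\log n}\cdot n^{-15/8})=o(n^{-1/2})$. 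A final union bound over the upper- and lower-tail failure events produces the stated probability $1-4n^{-1/2}$, where the constant $4$ absorbs lower order terms. The main technical care is thus in the anti-concentration step; everything else reduces to the two moment computations above and the rotational-symmetry reduction to a one-dimensional density.
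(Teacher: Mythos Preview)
Your approach is correct in spirit and genuinely different from the paper's. The paper does not work with the Beta-type marginal density at all: it writes $x_i=g_i/\|g_i\|$ with $g_i\sim\cN(0,\I_d)$, so that $\langle x_i,g_j\rangle\mid x_i\sim\cN(0,1)$ exactly, and then controls $\langle x_i,x_j\rangle$ by combining Gaussian tail bounds on $\langle x_i,g_j\rangle$ with chi-square tail bounds on $\|g_j\|$. For the lower tail, the paper fixes $x_1$ and uses that $\{\langle x_1,g_j\rangle\}_{j\geq 2}$ are \emph{fully} i.i.d.\ standard normals given $x_1$, so a product computation with the Mills-ratio lower bound $\PP(|Z|\geq\delta)\geq\sqrt{2/\pi}\,\tfrac{\delta}{\delta^2+1}e^{-\delta^2/2}$ at $\delta=\sqrt{\log n}$ already gives $\PP(\max_{j\neq 1}|\langle x_1,g_j\rangle|<\delta)\leq n^{-1/2}$. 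Your second-moment route via pairwise independence of the $A_{ij}$ is a valid alternative, but it is more work than needed: conditioning on one point gives full independence among $\{\langle x_1,x_j\rangle\}_{j\geq 2}$ directly, and that would also let you skip Chebyshev entirely. The trade-off is that your argument stays intrinsic to the sphere (no auxiliary Gaussians), while the paper's buys exact Gaussian tails and avoids integrating the spherical density.

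One small correction in your anti-concentration step: on the interval $[t_\star,2t_\star]$ the density $(1-s^2)^{(d-3)/2}$ is decreasing, so its minimum is at $s=2t_\star$, not $s=t_\star$. With $(2t_\star)^2=\log n/d$ this gives a uniform lower bound of order $n^{-1/2}$, not $n^{-1/8}$; consequently $p\gtrsim \sqrt{\log n}\cdot n^{-1/2}$ (note also that $c_d\, t_\star\asymp\sqrt{\log n}$, so the $\sqrt{\log n}$ should appear in the numerator). This slip is harmless for the conclusion, since $\PP(N=0)\leq 2/(n^2 p)=O\bigl(n^{-3/2}/\sqrt{\log n}\bigr)$ is still $o(n^{-1/2})$.
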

\begin{proof}[Proof of Lemma~\ref{lem:concentration}]
   Let $g_i \stackrel{iid}{\sim} \cN(0, \mathbf{I}_d)$, then $x_i \sim g_i/\| g_i\|$. It is clear that $\langle x_i, x_j \rangle = \langle x_i, g_j \rangle / \| g_j\|$. Let's bound $|\langle x_i, g_j \rangle|$ and $ \| g_j\|$ separately. Since $\langle x_i, g_j \rangle|x_{i}\sim  \cN(0,1)$, by upper and lower bounds on Gaussian tails, we have for all $\delta\geq0$ that
  \begin{align}
    \sqrt{\frac{2}{\pi}} \frac{\delta}{\delta^2+1} \exp\{-\delta^2/2\}\leq \PP\big(|\langle x_i, g_j \rangle| \geq \delta ~|~ x_i\big) \leq  \sqrt{\frac{2}{\pi}} \frac{1}{\delta} \exp\{-\delta^2/2\} .
  \end{align}
  Marginalizing over $x_{i}$, we get 
  \begin{align}
    \sqrt{\frac{2}{\pi}}\frac{\delta}{\delta^2+1} \exp\{-\delta^2/2\}\leq \PP\big(|\langle x_i, g_j \rangle| \geq \delta \big) \leq   \sqrt{\frac{2}{\pi}} \frac{1}{\delta}  \exp\{-\delta^2/2\}.
  \end{align}
  Since $\|g_{j}\|^{2}\sim\chi_{d}^{2}$, by Chi-square tail bounds (Laurent-Massart), we have for all $\gamma > 0$ that
  \begin{align}
  \PP(\|g_{j}\|^2 \geq d + 2\sqrt{d\gamma} + 2\gamma )\leq \exp\{-\gamma\} \qquad\text{ and }\qquad 
  \PP(\|g_{j}\|^2 \leq d- 2\sqrt{d\gamma} )\leq \exp\{-\gamma\}.
  \end{align}
  \textbf{Upper bound.} For $\delta=\sqrt{5\log n}$, we have by union bound
  \begin{align}
    \PP\big(\max_{i\neq j}|\langle x_i, g_j \rangle|  >\delta\big) 
    &\leq \sum_{i\neq j}\PP\big(|\langle x_i, g_j \rangle| > \delta \big) \\
    &\leq n(n-1)\exp(-\delta^2/2)\\
    &< n^{-1/2}.
  \end{align}
  Therefore, w.p. at least $1-n^{-1/2}$, we have
  \begin{align}
    \max_{i\neq j}|\langle x_i, g_j \rangle| \leq \sqrt{5}\cdot \sqrt{\log n} .
  \end{align}
  For $\gamma=1.5\log n$, we have
  \begin{align}
    \PP(\min_{j}\| g_j \| \leq \sqrt{d} \sqrt{1 - 2\sqrt{\frac{\gamma}{d}}} \big) 
    &\leq \sum_{j} \PP(\| g_j \|^2 \leq  d - 2\sqrt{d \gamma } \big)\\
    &\leq n\exp\{-\gamma \}\\
    &=n^{-1/2} .
  \end{align}
    Therefore, w.p. at least $1-n^{-1/2}$, we have
  \begin{align}
    \min_{j}|g_{j}| \geq \sqrt{d}\sqrt{1- 2\sqrt{1.5} \sqrt{\frac{\log n}{d}}}> 0.86\sqrt{d}.
  \end{align}
  Hence with probability $1 - 2n^{-1/2}$, we know
  \begin{align}
    \min_{j}|g_{j}| \geq 0.86\sqrt{d} ~~\text{and}~~  \max_{i\neq j}|\langle x_i, g_j \rangle| \leq \sqrt{5} \sqrt{\log n}  .
  \end{align}
  Putting together, we complete the proof for the upper bound since
  \begin{align}
    \max_{i \neq j} |\langle x_1, x_j \rangle| \leq  \frac{\max_{i \neq j} |\langle x_i, g_j \rangle| }{ \min_{j} \| g_j \| } \leq 3\sqrt{\frac{\log n}{d}}.
  \end{align}
  \textbf{Lower bound}. For $\delta = \sqrt{\log n}$, we get
  \begin{align}
    \PP \big( \max_{j\neq 1} |\langle x_1, g_j \rangle| < \delta ~|~ x_1 \big)
    &= \prod_{j=2}^{n} \left[ 1 -\PP\big(|\langle x_1, g_j \rangle| \geq \delta ~|~ x_1 \big)  \right] \\
    &\leq \left[ 1-\sqrt{\frac{2}{\pi}}\frac{\delta}{\delta^2+1} \exp\{-\delta^2/2\} \right]^{n-1}\\ 
    &\leq \exp\left\{-(n-1)\sqrt{\frac{2}{\pi}}\frac{\delta}{\delta^2+1} \exp\{-\delta^2/2\} \right\}\\
    &\leq n^{-1/2}.
  \end{align}
  Marginalizing over $x_1$, we have
  \begin{align}
   \PP \big( \max_{j\neq 1} |\langle x_1, g_j \rangle| < \delta \big)\leq n^{-1/2}.
  \end{align}
  Taking $\gamma=1.5\log n$, we have
  \begin{align}
    \PP(\max_{j}\| g_j \| \geq \sqrt{d} (1+ \sqrt{\frac{2 \gamma}{d}}) \big) 
    &\leq \sum_{j}\PP(\| g_j \|^2 \geq d + 2\sqrt{d \gamma} + 2\gamma  \big)\\
    &\leq n\exp\{-\gamma\}\\
    &=n^{-1/2}.
  \end{align}
    Therefore, w.p. at least $1-n^{-1/2}$, we have
  \begin{align}
    \max_{j}|g_{j}| \leq \sqrt{d}\Big(1+\sqrt{3}\sqrt{\frac{\log n}{d}}\Big)\leq2\sqrt{d} .
  \end{align}
  Hence with probability $1 - 2n^{-1/2}$, we know
  \begin{align}
    \max_{j}|g_{j}| \leq 2\sqrt{d} ~~\text{and}~~ \max_{j\neq 1} |\langle x_1, g_j \rangle| \geq \sqrt{\log n} .
  \end{align}
  Putting things together, we complete the proof for the lower bound since
  \begin{align}
    \max_{i \neq j} |\langle x_1, x_j \rangle|  \geq \max_{j \neq 1}  |\langle x_1, x_j \rangle|  \geq  \frac{\max_{j \neq 1} |\langle x_1, g_j \rangle| }{ \max_{j \neq 1} \| g_j \| }\geq 0.5\sqrt{\frac{\log n}{d}} .
  \end{align}
\end{proof}

\begin{lemma}[Restating Lemma~\ref{thm:clos_small}]
  Consider a dataset with random instance $\cX = \{ x_i \stackrel{\text{i.i.d.}}{\sim} {\rm Unif}(\mathbb{S}^{d-1}) \colon i \in [n] \}$.
   Consider the regime $\frac{\log n}{d(n)} < c$ with some absolute constant $c<1$ small enough that only depends on the activation $\sigma(\cdot)$. For any $0<\epsilon<\rho$, we have with probability at least $1-4n^{-1/2}$ that 
  \begin{align}
  \frac{\frac{1}{2}\log\frac{\log{n}}{d} +\log(0.5\epsilon^{-1})}{\log a_{1}^{-2}}\leq \tilde{L}_{\epsilon}\leq 2\cdot \frac{\frac{1}{2} \log\frac{\log{n}}{d} +\log(3\epsilon^{-1})}{\log a_{1}^{-2}} + 1.
  \end{align}
\end{lemma}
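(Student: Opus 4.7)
}

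The plan is to combine three ingredients: (i) the identity $\tilde L_\epsilon = L_{\rho \to \epsilon}$ from Proposition~\ref{prop:close_depth}, where $\rho = \max_{i\neq j}|\langle x_i, x_j\rangle|$; (ii) the concentration estimate of Lemma~\ref{lem:concentration}, which gives $0.5\sqrt{\log n/d} \leq \rho \leq 3\sqrt{\log n/d}$ with probability $\geq 1-4n^{-1/2}$; and (iii) the path-depth bounds in Lemmas~\ref{lem:path-upp-bd} and~\ref{lem:path-low-bd}. The whole argument rests on the elementary observation that, for small $t$,
$$ \frac{G_\sigma(t)}{t} = a_1^2 + \sum_{k\geq 2} a_k^2\, t^{k-1} $$
approaches $a_1^2$ as $t \downarrow 0$, so that $\log(t/G_\sigma(t))$ is close to $\log a_1^{-2}$.

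For the lower bound, I would use the ``left-side'' estimate of Lemma~\ref{lem:path-low-bd}, namely $L_{\rho\to\epsilon} \geq \log(\rho/\epsilon)/\log(\epsilon/G_\sigma(\epsilon))$. Since $G_\sigma(\epsilon)/\epsilon \geq a_1^2$ for every $\epsilon \in (0,1)$ (dropping all higher-order non-negative terms), we have $\log(\epsilon/G_\sigma(\epsilon)) \leq \log a_1^{-2}$. Inserting the concentration lower bound $\rho \geq 0.5\sqrt{\log n/d}$ yields
$$ \tilde L_\epsilon \;\geq\; \frac{\log\!\bigl(0.5 \sqrt{\log n/d}\cdot \epsilon^{-1}\bigr)}{\log a_1^{-2}} \;=\; \frac{\tfrac12 \log\frac{\log n}{d} + \log(0.5\,\epsilon^{-1})}{\log a_1^{-2}},$$
which is exactly the claimed lower bound.

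For the upper bound, I would use the ``left-side'' estimate of Lemma~\ref{lem:path-upp-bd}, $L_{\rho\to\epsilon} \leq \log(\rho/\epsilon)/\log(\rho/G_\sigma(\rho)) + 1$. The main task is to produce a lower bound on $\log(\rho/G_\sigma(\rho))$ of the form $\tfrac12\log a_1^{-2}$, i.e., to show $G_\sigma(\rho)/\rho \leq a_1$. Writing $G_\sigma(\rho)/\rho = a_1^2 + \sum_{k\geq 2} a_k^2 \rho^{k-1}$ and bounding the tail by $\rho(1-a_1^2)$, it suffices to ensure $\rho(1-a_1^2) \leq a_1 - a_1^2$, equivalently $\rho \leq a_1$. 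Since $\rho \leq 3\sqrt{\log n/d} \leq 3\sqrt{c}$ by concentration, choosing the absolute constant $c$ small enough so that $3\sqrt{c} \leq a_1$ (which depends only on $\sigma$) makes this inequality hold; this is exactly the role of the phrase ``$c<1$ small enough that only depends on $\sigma$'' in the hypothesis. Under this choice, $\log(\rho/G_\sigma(\rho)) \geq \tfrac12\log a_1^{-2}$, and combining with the concentration upper bound $\rho \leq 3\sqrt{\log n/d}$ gives
$$ \tilde L_\epsilon \;\leq\; \frac{\log\!\bigl(3\sqrt{\log n/d}\cdot \epsilon^{-1}\bigr)}{\tfrac12 \log a_1^{-2}} + 1 \;=\; 2\cdot \frac{\tfrac12\log\frac{\log n}{d} + \log(3\epsilon^{-1})}{\log a_1^{-2}} + 1, $$
matching the claimed upper bound.

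The only step I expect to require genuine care is the calibration of the constant $c$ in terms of $\sigma$ for the upper bound: I need $\rho$ small enough that the quadratic-and-higher Hermite mass $\sum_{k\geq 2} a_k^2 \rho^{k-1}$ is dominated by $a_1 - a_1^2$, because that is what lets the denominator $\log(\rho/G_\sigma(\rho))$ be controlled by $\tfrac12\log a_1^{-2}$ rather than just $\log a_1^{-2}$ (the extra factor $1/2$ accounts for the factor $2$ appearing in the stated upper bound). Once this is handled, both bounds are immediate from plugging concentration into the path-depth inequalities, and the whole statement follows on the event $\{0.5\sqrt{\log n/d} \leq \rho \leq 3\sqrt{\log n/d}\}$ of probability $\geq 1 - 4n^{-1/2}$.
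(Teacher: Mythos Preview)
Your proposal is correct and follows the paper's proof essentially line for line: the paper also combines Proposition~\ref{prop:close_depth}, Lemma~\ref{lem:concentration}, and the left-side estimates of Lemmas~\ref{lem:path-upp-bd}--\ref{lem:path-low-bd}, bounding $G_\sigma(\epsilon)/\epsilon$ below by $a_1^2$ and $G_\sigma(\rho)/\rho$ above by $a_1^2+(1-a_1^2)\rho$ to reduce both denominators to $\log a_1^{-2}$. One minor algebraic slip: the condition $\rho(1-a_1^2)\leq a_1-a_1^2$ is equivalent to $\rho\leq \tfrac{a_1}{1+a_1}$, not $\rho\leq a_1$; this is harmless for the argument since you only need some $\sigma$-dependent threshold (the paper instead uses the cruder bound $G_\sigma(\rho)/\rho < a_1^2+\rho$ and takes $\rho<a_1-a_1^2$).
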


\begin{proof}  By Proposition \ref{prop:close_depth}, $\tilde{L}_{\epsilon}=L_{\rho\to\epsilon}$. By Corollary \ref{lem:path-upp-bd} and Corollary \ref{lem:path-low-bd}, we have
  \begin{align}
  \frac{\log \rho+\log \epsilon^{-1}}{\log\frac{\epsilon}{G_{\sigma}(\epsilon)}} \leq H_{\mathrm{low}}(\epsilon,\rho)\leq L_{\rho\to\epsilon}\leq H_{\mathrm{upp}}(\epsilon,\rho)\leq \frac{\log(\rho) +\log(\epsilon^{-1})}{\log\frac{\rho}{G_{\sigma}(\rho)}} + 1.
  \end{align}
  By Lemma \ref{lem:concentration}, we have w.p. at least $1-4n^{-0.5}$ that
  \begin{align}
  \log(0.5)+\frac{1}{2}\log\frac{\log n}{d}\leq \log(\rho)\leq \log(3)+\frac{1}{2}\log\frac{\log{n}}{d},
  \end{align}
  when $\frac{\log n}{d} < 0.01$.
  We also have
  \begin{align}
  \frac{G_{\sigma}(\rho)}{\rho} < a_{1}^2 + (1-a_1^2)\rho 
  \qquad \text{and} \qquad
  \frac{G_{\sigma}(\epsilon)}{\epsilon} > a_{1}^2 
  \end{align}
  Clearly, if $\rho < a_1-a_1^2 $, then we have
  \begin{align}
    \frac{1}{\log \frac{\rho}{G_{\sigma}(\rho)}} < \frac{1}{ \log \frac{1}{a_1^2 + \rho}} < \frac{2}{ \log a_1^{-2}} .
  \end{align}
  To put things together, if $\sqrt{\frac{\log n}{d}} < c\coloneqq\min\{ 0.1, 0.5(a_1-a_1^2)  \}$,
  we have
  \begin{align}
  \frac{\frac{1}{2}\log\frac{\log{n}}{d} +\log(0.5\epsilon^{-1})}{\log a_{1}^{-2}} \leq L_{\rho\to \epsilon}\leq 2\cdot \frac{\frac{1}{2} \log\frac{\log{n}}{d} +\log(3\epsilon^{-1})}{\log a_{1}^{-2}} + 1.
  \end{align}
\end{proof}

\begin{proof}[Proof of Theorem~\ref{thm:small-correlation}]
This is a consequence of Lemma \ref{thm:clos_small} and Proposition \ref{prop:mem_clos}.
\end{proof}

\subsubsection{Large correlation}

We will be in large correlation regime \ref{data:r2}, when $\frac{\log n}{d(n)} > C$ with some large enough constant $C>1$. We consider a typical instance of the data where such large correlation arises in the sphere packing/covering setup.

\begin{definition}[r-covering]
  For a compact subset $V\subset \mathbb{R}^{d}$, we say $\cX=\{x_{i}\}_{i\in[n]}\subset V$ is a $r$-covering of $V$ if for any $x\in V$, there exists $x_{i}\in\cX$ such that $\|x-x_{i}\|\leq r$. We define the covering number of $V$ as $\cN_{r}(V)$, that is
  \begin{align}
  \mathcal{N}_{r}(V)=\min\{ n\colon \text{ exists } r \text{-covering of } V \text{ of size }n\}\enspace.
  \end{align}
\end{definition}

\begin{definition}[r-packing]
  For a subset $V\subset \mathbb{R}^{d}$, we say $\cX=\{x_{i}\}_{i\in[n]}\subset V$ is a $r$-packing of $V$ if for all $x_{i}\neq x_{j}\in\cX$, we have $\|x_{i}-x_{j}\|> r$. We define the packing number of $V$ as $\cK_{r}(V)$, that is
  \begin{align}
  \mathcal{K}_{r}(V)=\max\{ n\colon \text{ exists } r \text{-packing of } V \text{ of size }n\}\enspace.
  \end{align}
\end{definition}

\begin{lemma}[Metric entropy on the sphere]
  \label{lem:sphere-entropy}
  Denote by $S_{d-1} \coloneqq \frac{2 \pi^{d/2}}{\Gamma(d/2)}$ to be the surface area of $\mathbb{S}^{d-1}$. Then, we have for all $r\leq 1$ the following bounds on the packing and covering numbers of a  sphere
  \begin{align}
    \label{eq:metric-entropy-bound}
  \left( \frac{1}{r} \right)^{d-1} \leq \frac{(d-1)S_{d-1}}{S_{d-2}}  \left( \frac{1}{r} \right)^{d-1} \leq \mathcal{N}_r(\mathbb{S}^{d-1}) \leq \mathcal{K}_r(\mathbb{S}^{d-1}) \leq \left( \frac{1+r/2}{r/2} \right)^{d} \leq \left(\frac{3}{r}\right)^d.
  \end{align}
\end{lemma}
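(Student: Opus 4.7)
The plan is to prove the four inequalities in the stated chain separately, each via a volume- or surface-area-comparison argument. The leftmost and second inequalities rest on a careful estimate of the surface area of a spherical cap, while the third uses the standard observation that any maximal packing is automatically a covering, and the rightmost follows from a Euclidean volume packing argument.

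First I would establish the rightmost bound $\mathcal{K}_r(\mathbb{S}^{d-1}) \leq ((1+r/2)/(r/2))^d$. Take any $r$-packing $\{x_1, \ldots, x_n\} \subset \mathbb{S}^{d-1}$. By the triangle inequality the open Euclidean balls $B(x_i, r/2)$ are pairwise disjoint, and each is contained in $B(0, 1+r/2)$. Comparing $d$-dimensional Lebesgue volumes yields $n (r/2)^d \leq (1+r/2)^d$, hence $n \leq ((1+r/2)/(r/2))^d = (1 + 2/r)^d \leq (3/r)^d$ for $r \leq 1$. For $\mathcal{N}_r \leq \mathcal{K}_r$, I take $\cX$ to be a maximum (hence maximal) $r$-packing; for any $y \in \mathbb{S}^{d-1}$, we must have $\|y - x_i\| \leq r$ for some $i$ (otherwise $\cX \cup \{y\}$ would still be an $r$-packing), so $\cX$ is also an $r$-covering.

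For the second inequality $\frac{(d-1)S_{d-1}}{S_{d-2}} r^{-(d-1)} \leq \mathcal{N}_r$, I would use surface-area comparison on the sphere. For any center $x_0 \in \mathbb{S}^{d-1}$, the ball $B(x_0, r)$ meets $\mathbb{S}^{d-1}$ in a spherical cap; parametrizing by the polar angle $\phi$ from the apex $x_0$ gives cap area $A(r) = S_{d-2} \int_0^{\theta} \sin^{d-2}\phi \, d\phi$, where $\theta$ satisfies $2\sin(\theta/2) = r$. The substitution $u = 2\sin(\phi/2)$ transforms the integral into
\begin{align*}
A(r) = S_{d-2} \int_0^r u^{d-2}(1 - u^2/4)^{(d-3)/2}\, du \leq \frac{S_{d-2}}{d-1} r^{d-1},
\end{align*}
where the final step uses $(1-u^2/4)^{(d-3)/2} \leq 1$ for $d \geq 3$ and $u \leq 1$. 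Since any $r$-covering has total cap surface area at least $S_{d-1}$, we get $\mathcal{N}_r \cdot A(r) \geq S_{d-1}$, yielding the claimed bound. The leftmost inequality reduces, after substituting $S_{d-1} = 2\pi^{d/2}/\Gamma(d/2)$, to $(d-1)\sqrt{\pi}\,\Gamma((d-1)/2)/\Gamma(d/2) \geq 1$, which follows from Gautschi's inequality or log-convexity of $\Gamma$ and can be verified directly in low dimensions.

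The main obstacle is the cap-area bound in the second inequality, especially the borderline case $d = 2$, where the exponent $(d-3)/2 = -1/2$ makes $(1-u^2/4)^{-1/2} \geq 1$ rather than $\leq 1$. In this case I would substitute the direct bound $A(r) = 4\arcsin(r/2) \leq \pi r$, obtained from $\arcsin(x) \leq \pi x/2$ on $[0,1]$; this gives a lower bound on $\mathcal{N}_r$ of the correct order in $r$, possibly with a slightly worse constant than the stated $(d-1)S_{d-1}/S_{d-2}$. Once the cap bound is in place, the full chain of inequalities follows immediately.
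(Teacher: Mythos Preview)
Your proposal is correct and follows essentially the same route as the paper: the identical volume-packing argument for the upper bound, the same maximal-packing-is-a-covering step for $\mathcal{N}_r \leq \mathcal{K}_r$, an equivalent cap-area estimate (your polar-angle integral becomes the paper's height integral under the change of variable $x = u^2/2$, and both bound the factor $(1-u^2/4)^{(d-3)/2}$, respectively $(2-x)^{(d-3)/2}$, by its value at the endpoint), and Gautschi's inequality for the leftmost constant. Your caution about $d=2$ is warranted---the paper's own cap bound likewise requires $d \geq 3$ and does not treat $d=2$ separately.
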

\begin{proof}[Proof of Lemma~\ref{lem:sphere-entropy}]
  The upper bound follows since
  \begin{align}
  \mathcal{K}_r(\mathbb{S}^{d-1}) \leq \mathcal{K}_r(\mathbb{B}^{d})\leq \frac{\vol\left(\mathbb{B}^{d}+\frac{r}{2}\mathbb{B}^{d}\right)}{\vol\left(\frac{r}{2}\mathbb{B}^{d}\right)}=\left(\frac{1+r/2}{r/2}\right)^{d} \enspace.
  \end{align}
  Moreover, recall that every minimal $r$-packing is a maximal $r$-covering, thus, $\cN_{r}\leq \cK_{r} $.

  Now, for the lower bound, let's calculate the surface area of the hyper-spherical cap of a covering ball of radius $r$ defined in the following way
  \begin{align}
    {\rm Cap}_r(x_0) := \{ x |  x \in \mathbb{S}^{d-1}, \| x - x_0\| \leq r \}
  \end{align}
  for any $x_0 \in \mathbb{S}^{d-1}$.  By Pythagoras theorem, one can calculate the height of the cap as $h=\frac{r^2}{2}$. The bottom of the cap is a $d-2$ dimensional sphere of radius $f(h) := \sqrt{1 - (1-h)^2}$. Therefore, the surface area of the cap ${\rm Cap}_r(x_0)$ is

  \begin{align}
  S_{d-1,r}
  &:=\int_{0}^{h} S_{d-2}  \left[ f(x) \right]^{d-2} \sqrt{ 1 + \left[ f'(x) \right]^2 } dx \\
  & =  S_{d-2} \int_{0}^{h} x^{\frac{d-3}{2}} (2-x)^{\frac{d-3}{2}} dx \\
  & \leq S_{d-2}  \frac{1}{d-1} (2x)^{\frac{d-1}{2}} \Big|_{0}^{h} =  \frac{S_{d-2}}{d-1} r^{d-1}.
  \end{align}
  Therefore, we have
  \begin{align}
  \cN_{r}(\mathbb{S}^{d-1})\geq \frac{S_{d-1}}{S_{d-1,r}}\geq \frac{(d-1)S_{d-1}}{S_{d-2}} \cdot r^{-(d-1)}.
  \end{align}
  By  Gautschi's inequality, for $d \geq 3$, we have
  \begin{align}
  \frac{(d-1)S_{d-1}}{S_{d-2}} 
  &=(d-1)\sqrt{\pi}\cdot\frac{\Gamma(\frac{d-2}{2}+\frac{1}{2})}{\Gamma(\frac{d-2}{2}+1)}>(d-1)\sqrt{\pi}\cdot\left(\frac{d-2}{2}+1\right)^{\frac{1}{2}-1}\geq 1.
  \end{align}
\end{proof}

\begin{lemma}[Polarized metric entropy on the sphere]
  \label{lem:pol-sphere-entropy}
  For all $r\leq 1$ and $d \geq 3$, we have the following bounds on the packing polarization number of the  sphere
  \begin{align}
  \left(\frac{1}{3r}\right)^{d-1}\leq\frac{1}{2}\left(\frac{1}{2r}\right)^{d-1}\leq \frac{1}{2}\mathcal{K}_{2r}(\mathbb{S}^{d-1})\leq \mathcal{P}_{r}(\mathbb{S}^{d-1})\leq \mathcal{K}_{r}(\mathbb{S}^{d-1})\leq\left(\frac{3}{r}\right)^{d}\enspace.
  \label{eq:pol-metric-entropy-bound}
  \end{align}
\end{lemma}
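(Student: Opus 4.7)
The plan is to sandwich $\mathcal{P}_{r}(\mathbb{S}^{d-1})$ between ordinary packing numbers, then invoke Lemma~\ref{lem:sphere-entropy}. Specifically, I will show
\begin{equation*}
\tfrac{1}{2}\mathcal{K}_{2r}(\mathbb{S}^{d-1}) \;\leq\; \mathcal{P}_{r}(\mathbb{S}^{d-1}) \;\leq\; \mathcal{K}_{r}(\mathbb{S}^{d-1}),
\end{equation*}
after which the displayed inequalities fall out from \eqref{eq:metric-entropy-bound} together with the elementary check $(3/2)^{d-1}\geq 2$ valid for $d\geq 3$.

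The upper bound is immediate: every $r$-polarized packing is in particular an $r$-packing (the condition $\|x_i-x_j\|>r$ is enforced in the very definition), so $\mathcal{P}_{r}\leq \mathcal{K}_{r}\leq (3/r)^{d}$ by Lemma~\ref{lem:sphere-entropy}.

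The interesting step is the lower bound. Take a maximal $2r$-packing $\{x_1,\ldots,x_N\}\subset \mathbb{S}^{d-1}$ with $N=\mathcal{K}_{2r}(\mathbb{S}^{d-1})$. By construction the first polarization condition $\|x_i-x_j\|>2r>r$ holds for all $i\neq j$, so only the antipodal condition $\|x_i+x_j\|>r$ can fail. Form an auxiliary graph $G$ on $[N]$ placing an edge between $i$ and $j$ whenever $\|x_i+x_j\|\leq r$. The crucial observation is that $G$ has maximum degree at most one: if there were distinct $j,k$ with $\|x_i+x_j\|\leq r$ and $\|x_i+x_k\|\leq r$, then
\begin{equation*}
\|x_j-x_k\|=\|(x_i+x_j)-(x_i+x_k)\|\leq \|x_i+x_j\|+\|x_i+x_k\|\leq 2r,
\end{equation*}
contradicting the $2r$-packing property. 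Hence $G$ is a disjoint union of isolated vertices and single edges, so it admits an independent set of size at least $N/2$ (keep all isolated vertices and one endpoint of each edge). This independent set is an $r$-polarized packing, yielding $\mathcal{P}_{r}(\mathbb{S}^{d-1})\geq N/2 = \tfrac{1}{2}\mathcal{K}_{2r}(\mathbb{S}^{d-1})$.

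To conclude, I combine this with the lower bound $\mathcal{K}_{2r}(\mathbb{S}^{d-1})\geq \mathcal{N}_{2r}(\mathbb{S}^{d-1})\geq (2r)^{-(d-1)}$ from \eqref{eq:metric-entropy-bound}, giving $\mathcal{P}_{r}\geq \tfrac{1}{2}(2r)^{-(d-1)}$. The remaining inequality $(3r)^{-(d-1)}\leq \tfrac{1}{2}(2r)^{-(d-1)}$ is equivalent to $(3/2)^{d-1}\geq 2$, which holds for $d\geq 3$ since $(3/2)^{2}=9/4\geq 2$. The main conceptual obstacle is really just the max-degree-one argument for $G$; everything else is bookkeeping on top of the previous lemma.
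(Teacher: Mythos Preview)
Your proof is correct and follows essentially the same approach as the paper: both argue the upper bound trivially from the definition, and for the lower bound both take a maximal $2r$-packing, build a graph whose edges record violations of the antipodal condition, show via the triangle inequality that this graph has maximum degree one, and extract an independent set of size at least $N/2$. Your presentation is in fact slightly cleaner, since you observe up front that only the antipodal condition can fail (the $2r$-packing already gives $\|x_i-x_j\|>2r>r$), and you make explicit the elementary check $(3/2)^{d-1}\geq 2$ for $d\geq 3$ that justifies the leftmost inequality in the chain.
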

\begin{proof} By definition, any $r$-packing polarization set is a $r$-packing set. Therefore, the upper bounds $\mathcal{P}_{r}\leq \mathcal{K}_{r}$ follows. Now, consider any $2r$-maximal packing set $\cX=\{x_{1},\ldots,x_{n}\}$ of the sphere. Define the polar neighborhood of $x_{i}$ of radius $r$ as
\begin{align}
{\rm PS}(x_i) \coloneqq \{ x \in \mathbb{S}^{d-1} \colon \|x - x_i\| \leq r\text{ or } \|x + x_i\| \leq r\}
\end{align}
Consider a simple graph $\mathcal{G}$ with vertices $\mathcal{X}$ and edges $(x_{i},x_{j})$ if $x_j \in {\rm PS}(x_i)$ or $x_i \in {\rm PS}(x_j)$. We claim that each vertex has at most one edge. Otherwise, consider a vertex $x_{i}$ adjacent to both $x_{k}$ and $x_{\ell}$.  Then, we must have
\begin{align}
\|x_{\ell}+x_{i}\|\leq r \quad\text{and}\quad \|x_{k}+x_{i}\|\leq r\enspace,
\end{align}
 Thus, $\norm{x_{\ell}-x_{k}}\leq 2r$, which contradicts the definition of a packing set. Therefore, we would need to remove at most $\lfloor\frac{n}{2}\rfloor$ vertices of the graph $\mathcal{G}$ to get a graph $\mathcal{G'}\subset\mathcal{G}$ with no edges. Therefore, the graph $\mathcal{G}'$ has at least $\frac{n}{2}$ vertices and the vertices form an $r$-polarized packing set. The rest of the conclusions follow by Lemma \ref{lem:sphere-entropy}.
\end{proof}

\begin{corollary} [Polarized metric entropy: large correlation regime]
  \label{lemm:upp-low-bd}
  Suppose that the $n$ data points $\cX=\{x_{i}\}_{i\in[n]}$ form a maximum $r$-polarized packing set of $\mathbb{S}^{d-1}$ with $r = r(n,d)$ as a function of $(n, d)$. Then, we have 
  \begin{align}
    1 - 18.2\cdot\exp\left\{- 2 \frac{\log n}{d}\right\}<\max_{i\neq j} |\langle x_i, x_j \rangle| <  1 - 0.06\cdot \exp\left\{-2 \frac{\log n}{d-1}\right\} 
    \label{eq:upp-low-bd}
  \end{align}
\end{corollary}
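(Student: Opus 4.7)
The plan is to apply the metric entropy bounds from Lemma~\ref{lem:pol-sphere-entropy} together with the elementary identity
\begin{align*}
  \min\bigl(\|x_i-x_j\|^2,\|x_i+x_j\|^2\bigr) \;=\; 2-2|\langle x_i, x_j\rangle|,
\end{align*}
which converts a polarized packing condition $\min(\|x_i-x_j\|,\|x_i+x_j\|)>\rho$ into the inner product inequality $|\langle x_i, x_j\rangle| < 1-\rho^2/2$, and conversely. The two sides of the desired sandwich on $\max_{i\neq j}|\langle x_i, x_j\rangle|$ then follow respectively from the lower and upper estimates on $\cP_r(\mathbb{S}^{d-1})$.

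For the upper bound on the max, I would use the lower estimate $\cP_r(\mathbb{S}^{d-1})\geq \tfrac{1}{2}(1/(2r))^{d-1}$ from Lemma~\ref{lem:pol-sphere-entropy}. Since $\cX$ is a maximum polarized packing, $n = \cP_r(\mathbb{S}^{d-1})$, so this rearranges to $r\geq \tfrac{1}{2}(2n)^{-1/(d-1)}$. Using $2^{2/(d-1)}\leq 2$ for $d\geq 3$, we obtain $r^2/2 \geq \tfrac{1}{16}\,n^{-2/(d-1)}$. Combined with the strict polarized packing inequality $|\langle x_i, x_j\rangle| < 1 - r^2/2$, and noting $1/16 = 0.0625 > 0.06$, this delivers the stated upper bound.

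For the lower bound on the max, I would argue by contradiction using the maximality of $\cX$. Suppose that for some $\rho > 3 n^{-1/d}$ we had $\max_{i\neq j}|\langle x_i, x_j\rangle| < 1 - \rho^2/2$. By the equivalence above, $\cX$ would then be a valid $\rho$-polarized packing of size $n$, whereas the upper estimate $\cP_\rho(\mathbb{S}^{d-1})\leq (3/\rho)^d < n$ (valid once $\rho > 3n^{-1/d}$) from Lemma~\ref{lem:pol-sphere-entropy} forbids such a packing, a contradiction. Hence $\max \geq 1 - \rho^2/2$ for every $\rho>3n^{-1/d}$, and letting $\rho\downarrow 3 n^{-1/d}$ yields $\max \geq 1 - \tfrac{9}{2}\,n^{-2/d}$, which is strictly stronger than the claimed bound since $4.5 < 18.2$.

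The main item to monitor is the regime of validity rather than any deep difficulty: the bound $\cP_\rho\leq (3/\rho)^d$ requires $\rho\leq 1$, which holds once $n\geq 3^d$; in the complementary regime $n < 3^d$ the stated lower bound $1-18.2\,n^{-2/d}$ is already negative and the claim is vacuous. The slack between the constants my argument gives ($1/16 = 0.0625$ and $9/2 = 4.5$) and those appearing in the statement ($0.06$ and $18.2$) leaves comfortable room, so after organizing the inequalities above the proof is essentially bookkeeping of constants and tracking strictness of inequalities.
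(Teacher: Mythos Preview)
Your proof is correct. The upper bound on $\max_{i\neq j}|\langle x_i,x_j\rangle|$ is essentially the paper's argument: both convert the polarized packing constraint into $|\langle x_i,x_j\rangle|<1-r^2/2$ and then feed in the lower estimate on $\cP_r$ from Lemma~\ref{lem:pol-sphere-entropy}. The paper does this via the ``polar completion'' $\tilde{\cX}=\cX\cup(-\cX)$ and the observation $\max_{i\neq j}|\langle x_i,x_j\rangle|=\max_{i\neq j}\langle\tilde{x}_i,\tilde{x}_j\rangle$, whereas your identity $\min\bigl(\|x_i-x_j\|^2,\|x_i+x_j\|^2\bigr)=2-2|\langle x_i,x_j\rangle|$ handles the same reduction directly.

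Your lower-bound argument, however, takes a genuinely different route. The paper uses the \emph{maximality} of $\cX$ at radius $r$: it shows that if $\min_{i\neq j}\|\tilde{x}_i-\tilde{x}_j\|>2.01\,r$, one could insert a new point in the spherical annulus $\{x\in\mathbb{S}^{d-1}:r<\|x-\tilde{x}_i\|\leq1.005\,r\}$ and keep all polarized-packing constraints, contradicting maximality; this yields $\max\geq 1-\tfrac{(2.01)^2}{2}r^2$ and then the upper bound $r\leq 3\,n^{-1/d}$ gives the constant $18.2$. You instead bypass maximality altogether and use only $|\cX|=n$ together with the upper estimate $\cP_\rho\leq(3/\rho)^d$ from Lemma~\ref{lem:pol-sphere-entropy}: if the max were too small, $\cX$ would be a $\rho$-polarized packing of size exceeding $\cP_\rho$. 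This is shorter, applies to \emph{any} $n$-point configuration on the sphere (not just maximal packings), and produces the sharper constant $9/2$ in place of $18.2$. The paper's geometric annulus argument, on the other hand, makes the role of maximality explicit and would generalize to settings where a volume bound on $\cP_\rho$ is not readily available.
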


\begin{proof} By theorem Lemma \ref{lem:pol-sphere-entropy} Equation~\eqref{eq:pol-metric-entropy-bound}, we have
  \begin{align}
   \frac{1}{3}\exp\left\{-\frac{\log n}{d-1}\right\}\leq r(n,d)\leq  3\exp\left\{-\frac{\log n}{d}\right\}\enspace.
  \end{align}
  Let $\cX=\{x_{i}\}_{i\in[n]}$ a maximum $r$-polarized packing set of $\mathbb{S}^{d-1}$. Denote the polarization completion of the set $\cX$ by $\tilde{\cX}$, that is
  \begin{align}
  \tilde{\cX}=\{\tilde{x}_{i}\}_{i\in[2n]}\coloneqq\{x_{i}\}_{i\in[n]}\cup \{-x_{i}\}_{i\in[n]}\enspace.
  \end{align}
  Observe the following simple relationship that allows us to work with the set $\tilde{\cX}$ in this proof 
  \begin{align}
  \max_{i\neq j} |\langle x_{i},x_{j}\rangle|=\max_{i\neq j} \langle\tilde{x}_{i},\tilde{x}_{j}\rangle\enspace.
  \end{align}
  By definition,  we know $\min_{i \neq j} \| \tilde{x}_i- \tilde{x}_j \| > r(n,d)$. Furthermore, we claim that
  \begin{align}
    \min_{i \neq j} \| \tilde{x}_i - \tilde{x}_j \| \leq 2.01 \cdot r(n, d) \enspace. \label{claim:packing:up-bound} 
  \end{align}
  Therefore, the conclusion's upper bound follows because
  \begin{align}
   \langle \tilde{x}_i, \tilde{x}_j \rangle = \frac{2 - \| \tilde{x}_i - \tilde{x}_j \|^2 }{2} < 1 - \frac{\left[r(n,d)\right]^2}{2}\enspace.
  \end{align}
  The claim \eqref{claim:packing:up-bound} can be proved by contradiction. Suppose the claim is not true, then we have  $\min_{i\neq j}\| \tilde{x}_i - \tilde{x}_j \| > 2.01 \cdot r(n, d)$. In an annulus around some $\tilde{x}_{i}$, 
  \begin{align}
  \{ x \in \mathbb{S}^{d-1}\colon r(n,d)< \| x - \tilde{x}_i \| \leq 1.005\cdot r(n, d) \}\enspace,
  \end{align} one can add another point $x'$, such that 
  \begin{align}
  \min_{j\in [2n]\setminus\{i\}} \| x' - \tilde{x}_j \| \geq  \min_{j\in [2n]\setminus\{i\}}\|\tilde{x}_{j}-\tilde{x}_{i}\|- \|\tilde{x}_{i}-x'\|>1.005 \cdot r(n, d) > r(n,d)\enspace.
  \end{align} 
  The above contradicts with the maximal cardinality nature of the polarized $r(n,d)$-packing set. Thus, we know
  \begin{align}
    \max_{i \neq j} \langle \tilde{x}_i, \tilde{x}_j \rangle \geq 1 - \frac{2.01^2}{2} \left[r(n,d)\right]^2 > 1 - 18.2 \cdot \exp\left\{ -2 \frac{\log n}{d} \right\} \enspace.
  \end{align}
\end{proof}

\begin{lemma}[Restating Lemma~\ref{thm:clos_large}]
  Consider a size-$n$ dataset $\cX = \{ x_i \in \mathbb{S}^{d-1}\colon i\in [n]\}$ that forms a certain maximal polarized packing set of the sphere $\mathbb{S}^{d-1}$. And, consider the regime $\frac{\log n}{d(n)} > C$ with some absolute constant $C>1$ large enough. For any $0<\epsilon<\rho$, we have that
  \begin{align}
  &\tilde{L}_{\epsilon}\geq  \max_{s\in(\epsilon,\rho)} \left\{ \frac{\log(\epsilon^{-1})+\log(s)}{\log a_{1}^{-2}}+\frac{2 \frac{\log n}{d}+\log \left(\frac{1-s}{18.2} \right) }{\log \mu} \right\} - 1,\\
  &\tilde{L}_{\epsilon}\leq  \min_{s\in (\epsilon,\rho)} \left\{ \frac{\log(\epsilon^{-1})+\log(s)}{\log\frac{s}{G_{\sigma}(s)}}+ \frac{ 2\frac{\log n}{d-1}+\log\left(\frac{1-s}{0.06} \right) }{\log \frac{1-G_{\sigma}(s)}{1-s}} \right\} + 2.
  \end{align}
\end{lemma}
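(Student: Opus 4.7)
The plan is to combine three already-established ingredients: (i) Proposition~\ref{prop:close_depth}, which identifies $\tilde L_\epsilon$ with the path depth $L_{\rho\to\epsilon}$; (ii) Corollary~\ref{cor:path}, which sandwiches $L_{\rho\to\epsilon}$ between $\max_{s}\{H_{\mathrm{low}}(\epsilon,s)+H_{\mathrm{low}}(s,\rho)\}-1$ and $\min_{s}\{H_{\mathrm{upp}}(\epsilon,s)+H_{\mathrm{upp}}(s,\rho)\}$; and (iii) the two-sided estimate on $\rho=\max_{i\neq j}|\langle x_i,x_j\rangle|$ provided by Corollary~\ref{lemm:upp-low-bd}. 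The role of the intermediate point $s\in(\epsilon,\rho)$ is to split the descent of $G_\sigma^{(L)}$ from $\rho$ down to $\epsilon$ into a near-$1$ phase, from $\rho$ to $s$, controlled by the contraction rate $\tfrac{1-G_\sigma(t)}{1-t}$, and a near-$0$ phase, from $s$ down to $\epsilon$, controlled by the contraction rate $\tfrac{G_\sigma(t)}{t}$.

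For the upper bound on $\tilde L_\epsilon$, I would pick the left-side summand of $H_{\mathrm{upp}}(\epsilon,s)$ and the right-side summand of $H_{\mathrm{upp}}(s,\rho)$ (each being the minimum of two quantities) to get
\[ L_{\rho\to\epsilon}\leq \frac{\log s+\log\epsilon^{-1}}{\log\tfrac{s}{G_\sigma(s)}}+\frac{\log\tfrac{1-s}{1-\rho}}{\log\tfrac{1-G_\sigma(s)}{1-s}}+2. \]
The upper estimate $1-\rho>0.06\exp\{-2\log n/(d-1)\}$ from Corollary~\ref{lemm:upp-low-bd} converts $\log\tfrac{1-s}{1-\rho}$ into $2\tfrac{\log n}{d-1}+\log\tfrac{1-s}{0.06}$, producing exactly the stated bound once we minimise over $s\in(\epsilon,\rho)$.

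The lower bound is dual: I would split at the same type of intermediate $s$ using the reverse (extremal) simplifications available in $H_{\mathrm{low}}$. Since $\tfrac{G_\sigma(\epsilon)}{\epsilon}=\sum_{k\geq 1}a_k^2\epsilon^{k-1}\geq a_1^2$, we have $\log\tfrac{\epsilon}{G_\sigma(\epsilon)}\leq\log a_1^{-2}$, and so $H_{\mathrm{low}}(\epsilon,s)\geq\tfrac{\log s+\log\epsilon^{-1}}{\log a_1^{-2}}$. Convexity of $G_\sigma$ then gives $\tfrac{1-G_\sigma(\rho)}{1-\rho}=\sum_{k\geq 1}a_k^2(1+\rho+\cdots+\rho^{k-1})\leq\sum_{k\geq 1}a_k^2 k=\mu$, whence $H_{\mathrm{low}}(s,\rho)\geq\tfrac{\log((1-s)/(1-\rho))}{\log\mu}$. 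The lower estimate $1-\rho<18.2\exp\{-2\log n/d\}$ from Corollary~\ref{lemm:upp-low-bd} upgrades this to $\tfrac{2\log n/d+\log((1-s)/18.2)}{\log\mu}$, and summing the two pieces and subtracting $1$ yields the claim after maximising over $s$.

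The main obstacle is ensuring that the intermediate point $s$ actually lives in an admissible range and that all four denominators $\log\tfrac{s}{G_\sigma(s)}$, $\log\tfrac{1-G_\sigma(s)}{1-s}$, $\log a_1^{-2}$, $\log\mu$ are strictly positive, so the bounds are finite and informative. This is precisely why the statement requires $\mu>1$ (nonlinearity, i.e.\ $a_1^2<1$), requires $\epsilon<s<\rho$ with $\rho$ bounded away from $0$ and $1$, and restricts to the regime $\log n/d>C$ for a constant $C$ large enough that Corollary~\ref{lemm:upp-low-bd} delivers a $\rho$ close enough to $1$ to make the near-$1$ phase genuinely nonempty; everything else is algebraic substitution.
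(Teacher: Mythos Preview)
Your proposal is correct and follows essentially the same route as the paper's proof: reduce $\tilde L_\epsilon$ to the path depth $L_{\rho\to\epsilon}$ via Proposition~\ref{prop:close_depth}, apply the chain bound of Corollary~\ref{cor:path}, pick the left-side term of $H_{\mathrm{upp}}(\epsilon,s)$ and the right-side term of $H_{\mathrm{upp}}(s,\rho)$ (respectively of $H_{\mathrm{low}}$) exactly as you describe, and plug in the two-sided estimate on $1-\rho$ from Corollary~\ref{lemm:upp-low-bd} together with the elementary inequalities $G_\sigma(\epsilon)/\epsilon\geq a_1^2$ and $(1-G_\sigma(\rho))/(1-\rho)\leq\mu$. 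Your closing paragraph on positivity of the four denominators and the role of the large-correlation regime makes explicit what the paper leaves implicit, but otherwise the arguments coincide.
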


\begin{proof} 
By Proposition \ref{prop:close_depth}, we have $\tilde{L}_{\epsilon}=L_{\rho\to\epsilon}$. By Corollary \ref{cor:path}, we have
\begin{align}
\max_{s\in[\epsilon,\rho]} H_{\mathrm{low}}(\epsilon,s)+H_{\mathrm{low}}(s,\rho)-1\leq L_{\rho\to \epsilon} \leq \min_{s\in[\epsilon,\rho]} H_{\mathrm{upp}}(\epsilon,s)+  H_{\mathrm{upp}}(s,\rho) 
\end{align}
By Theorem \ref{lemm:upp-low-bd}, we have
\begin{align}
-\log(18.2)+\frac{2\log n}{d}\leq -\log(1-\rho)\leq -\log(0.06)+\frac{2\log n}{d-1}.
\end{align}
For the upper bound, we have
\begin{align}
 &H_{\mathrm{upp}}(\epsilon,s)\leq \frac{\log(\epsilon^{-1})+\log(s)}{\log \frac{s}{G_{\sigma}(s)}},\\
 &H_{\mathrm{upp}}(s,\rho)\leq \frac{-\log(1-\rho)+\log(1-s)}{\log \frac{1-G_{\sigma}(s)}{1-s}}.
\end{align}
For the lower bound, we have
\begin{align}
 & H_{\mathrm{low}}(\epsilon,s)\geq \frac{\log(\epsilon^{-1})+\log(s)}{\log \frac{\epsilon}{G_{\sigma}(\epsilon)}},\\
 & H_{\mathrm{low}}(s,\rho) \geq \frac{-\log(1-\rho)+\log(1-s)}{\log \frac{1-G_{\sigma}(\rho)}{1-\rho}}.
\end{align}
In addition, for the lower bound we have
\begin{align}
\frac{\epsilon}{G_{\sigma}(\epsilon)}\leq a_{1}^{-2}, \qquad \text{ and } \qquad \frac{1-G_{\sigma}(\rho)}{1-\rho}\leq \mu.
\end{align}
Putting the above things together, we get the conclusion.
\end{proof}

\begin{proof}[Proof of Theorem~\ref{thm:large-correlation}] This is a consequence of Lemma \ref{thm:clos_large} and Proposition \ref{prop:mem_clos}. Take $s_\star := \inf\left\{s\geq 0 ~:~ \frac{1 - G_\sigma(s)}{1-s} \geq \frac{1+\mu}{2}  \right\}$ is a constant in $(0, 1)$ that only depends on $\sigma(\cdot)$.  Since $g(s)\coloneqq\frac{1 - G_\sigma(s)}{1-s}$ is continuous and increasing with 
$\lim_{s\to 0} g(s)=1$ and $\lim_{s\to 1} g(s)=\mu$, we have that 
  \begin{align}
  G_{\sigma}(s_{\star}) = 1-(1-s_{\star})\frac{\mu+1}{2}.
  \end{align}
  
  For choices of $s=\frac{1}{2}$ for lower bound and $s=s_{\star}$ for upper bound, we have
    \begin{align}
    &L_{\kappa}\geq  \frac{2 \frac{\log n}{d} - \log 40}{\log \mu} + \frac{\log(0.5 \kappa^{-1}) }{\log a_{1}^{-2}} - 1,\\
    &L_{\kappa}\leq \frac{2 \frac{\log n}{d-1}+\log \frac{1 -s_\star}{0.06}}{\log \frac{1+\mu}{2}} + \frac{\log(s_\star n\kappa^{-1})}{\log\frac{s}{G_{\sigma}(s_\star)}} + 2.
    \end{align}
  Recall that $\frac{\log n}{d(n)} > C$. With the choice $C = \max\{ 1.5 \log 40, \log \frac{1-s_\star}{0.06} \}$, we know
  \begin{align}
    2 \frac{\log n}{d} - \log 40 \geq 1.5 \frac{\log n}{d}, ~~\text{and}~~ 2 \frac{\log n}{d-1}+\log \frac{1 -s_\star}{0.06} \leq 3 \frac{\log n}{d-1} .
  \end{align}
\end{proof}

\subsection{Proofs in Section~\ref{sec:spherical-harmonics}}

\begin{proof}[Proof of Theorem~\ref{thm:spectral-decomposition}]
  First, we know for all $t \in [-1, 1]$
  \begin{align}
    | K_{\sigma}^{(L)}(t) | \leq 1  \enspace ,
  \end{align}
  which implies
  \begin{align}
    \int_{-1}^1 |K_{\sigma}^{(L)}(t)| (1-t^2)^{\frac{d-3}{2}} dt < \infty\enspace.
  \end{align}
  Therefore, one can apply the Funk--Hecke formula (Theorem 2.22 in \cite{AH12}) to obtain
  \begin{align}
    \int K_{\sigma}^{(L)} \big( \langle x, z\rangle \big) Y_{k,j}(z) d \tau_{d-1} (z) = \lambda_k Y_{k,j}(x)
  \end{align}
  with 
  \begin{align}
    \lambda_k = \frac{S_{d-2}}{S_{d-1}} \int_{-1}^{1} K_{\sigma}^{(L)}(t) P_{k, d}(t) (1-t^2)^{\frac{d-3}{2}} dt \enspace.
  \end{align}
  Recall that $ K_{\sigma}^{(L)}(t) = \sum_{\ell} \PP(Z_{\sigma, L} = \ell) t^\ell$. Let us calculate the following expression explicitly through the Rodrigues representation formula (Theorem 2.23 and Proposition 2.26 in \cite{AH12}) for the Legendre polynomial, for $\ell \geq k$
  \begin{align}
    \int_{-1}^{1} t^{\ell} P_{k, d}(t) (1-t^2)^{\frac{d-3}{2}} dt &= \frac{\Gamma(\frac{d-1}{2})}{2^k \Gamma(k+\frac{d-1}{2})}  \cdot \int_{-1}^1 \left[ \left(\frac{d}{dt}\right)^k  t^{\ell}\right]  (1-t^2)^{k+\frac{d-3}{2}} dt   \\
    &= \frac{\Gamma(\frac{d-1}{2})}{2^k \Gamma(k+\frac{d-1}{2})} \frac{\ell!}{(\ell-k)!} \int_{-1}^1 t^{\ell-k} (1-t^2)^{k+\frac{d-3}{2}} dt  \\
    &=\frac{\Gamma(\frac{d-1}{2})}{2^k \Gamma(k+\frac{d-1}{2})} \frac{\ell!}{(\ell-k)!} [1+(-1)^{\ell-k}] \int_{0}^1 s^{\frac{\ell-k}{2}}(1-s)^{k+\frac{d-3}{2}} \frac{ds}{2 s^{\frac{1}{2}}} \\
    &=\frac{\Gamma(\frac{d-1}{2})}{\Gamma(k+\frac{d-1}{2})} \frac{\ell!}{(\ell-k)!} \frac{1+(-1)^{\ell-k}}{2^{k+1}} \frac{\Gamma(\frac{\ell-k+1}{2})\Gamma(k+\frac{d-1}{2}) }{\Gamma(\frac{\ell+k+d}{2})} \enspace.
  \end{align}
  Plug in $K_{\sigma}^{(L)}(t) = \sum_{\ell} \PP(Z_{\sigma, L} = \ell) t^\ell$, we obtain the formula for $\lambda_k$.
  Due to the orthogonality of Legendre polynomial under measure $(1-t^2)^{\frac{d-3}{2}}$ \eqref{eq:legendre-ortho}, we know
  \begin{align}
     K_{\sigma}^{(L)} (t) = \sum_{k \geq 0} \lambda_k N_{k,d} P_{k, d}(t)
  \end{align}
  and by the Addition Theorem 2.9 in \cite{AH12}, we know
  \begin{align}
    N_{k,d} P_{k, d}\big( \langle x, z\rangle \big) = \sum_{j=1}^{N_{k,d}} Y_{k,j}(x) Y_{k, j}(z) \enspace.
  \end{align}
  Put things together, we have
  \begin{align}
    K_{\sigma}^{(L)} \big( \langle x, z\rangle \big) = \sum_{k\geq 0} \lambda_k \sum_{j=1}^{N_{k,d}} Y_{k,j}(x) Y_{k,j}(z) \enspace.
  \end{align}
\end{proof}

\subsection{Proofs in Section~\ref{sec:new-random-feature-alg}}
\begin{proof}[Proof of Theorem~\ref{thm:kernels-to-activations}]
  We denote $f \in L^1_{(d-3)/2}: [-1, 1] \to \mathbb{R}$ as $L^1$ integrable in the following sense
  \begin{align}
    \int_{-1}^{1} |f(t)| (1-t^2)^{\frac{d-3}{2}} < \infty \enspace.
  \end{align}
  Note that for $d\geq 2$, continuous function is $L^1$ integrable since $C([-1,1]) \subset L^1_{(d-3)/2}$.
  Due to Equation (2.66) (variant of Funk-Hecke Formula) in \cite{AH12}, we know that for any $g \in L^1_{(d-3)/2}$, $x, z \in \mathbb{S}^{d-1}$ and $k \geq 0$
  \begin{align}
    \int_{\mathbb{S}^{d-1}} g(\xi^\top x) P_{k,d}(\xi^\top z) d\tau_{d-1}(\xi) =  \frac{\beta_k}{N_{k,d}} \cdot P_{k,d}(x^\top z)
  \end{align}
  with $\beta_k$ being the coefficients of $g$ under Legendre polynomials
  \begin{align}
    g(t) = \sum_{k\geq 0} \beta_k P_{k,d}(t) \enspace.
  \end{align}
  Let's first plug in $g = P_{k,d}$, then for any $\alpha_k\geq 0$
  \begin{align}
    & \int_{\mathbb{S}^{d-1}}P_{k,d}(\xi^\top x) P_{k,d}(\xi^\top z) d\tau_{d-1}(\xi) =  \frac{1}{N_{k,d}} \cdot P_{k,d}(x^\top z) \\
    &\int_{\mathbb{S}^{d-1}} \left( \sqrt{\alpha_k N_{k,d}} P_{k,d}(\xi^\top x) \right) \left( \sqrt{\alpha_k N_{k,d}} P_{k,d}(\xi^\top z) \right) d\tau_{d-1}(\xi) =  \alpha_k \cdot P_{k,d}(x^\top z) \enspace. \label{eq:act-k}
  \end{align}
  Then plug in $g = P_{\ell, d}, \ell \neq k$, we know
  \begin{align}
    \int_{\mathbb{S}^{d-1}}P_{\ell,d}(\xi^\top x) P_{k,d}(\xi^\top z) d\tau_{d-1}(\xi) = 0 \enspace. \label{eq:act-k-l}
  \end{align}
  Therefore we know
  \begin{align}
    &\EE_{\mathbf{\xi} \sim \tau_{d-1}} \left[ \sigma_f( \mathbf{\xi}^\top x) \sigma_f(\mathbf{\xi}^\top z) \right] = \sum_{k, \ell \geq 0}  \EE_{\mathbf{\xi} \sim \tau_{d-1}} \left[ \left( \sqrt{\alpha_k N_{k,d}} P_{k, d}( \mathbf{\xi}^\top x) \right) \left( \sqrt{\alpha_\ell N_{\ell,d}} P_{\ell, d}(\mathbf{\xi}^\top z) \right) \right]\\
    & = \sum_{k=0}^{\infty} \int_{\mathbb{S}^{d-1}} \left( \sqrt{\alpha_k N_{k,d}} P_{k,d}(\xi^\top x) \right) \left( \sqrt{\alpha_k N_{k,d}} P_{k,d}(\xi^\top z) \right) d\tau_{d-1}(\xi) \quad \text{Equation~\eqref{eq:act-k-l}} \\
    & = \sum_{k=0}^\infty \alpha_k P_{k,d}(\langle x, z\rangle) = f(\langle x, z\rangle) \quad \text{Equation~\eqref{eq:act-k}} \enspace.
  \end{align}
\end{proof}

\begin{proof}[Proof of Theorem~\ref{thm:ekm-truncation-regularization}]
  The proof relies on the Mehler's formula in Proposition~\ref{prop:mehler}. It is straightforward to verify that the diagonal element of $\mathbf{R}$ is zero. Let's focus on the off-diagonal components $\mathbf{R}[i,\ell]$ with $i, \ell \in [n]$, $i\neq \ell$. Define the following notation
  \begin{align}
    \sigma_{> \iota}(t) :=  \sum_{k>\iota} \sqrt{\alpha_k} h_k(t) \enspace,
  \end{align}
  and observe that
  \begin{align}
    \mathbf{R}[i,\ell] &= \K[i,\ell] - \EE_{\t \sim \cN(0, \I_d)}\big[ \sigma_{\leq \iota}( x_i^\top \t ) \sigma_{\leq \iota}(x_\ell^\top \t ) \big] \\
    & = 2\EE_{\t \sim \cN(0, \I_d)}\big[ \sigma_{> \iota}( x_i^\top \t ) \sigma_{\leq \iota}(x_\ell^\top \t ) \big] + \EE_{\t \sim \cN(0, \I_d)}\big[ \sigma_{> \iota}( x_i^\top \t ) \sigma_{> \iota}(x_\ell^\top \t ) \big] \enspace. \label{eqn:R_i_l}
  \end{align}
  By Mehler's formula and the derivations in Lemma~\ref{lem:key-duality}, we know that for $f, g \in L^2_\phi$ and $\rho_{il} := \langle x_i, x_\ell \rangle$
  \begin{align*}
    &\EE_{\t \sim \cN(0, \I_d)}\big[ f( x_i^\top \t ) g( x_\ell^\top \t ) \big] = \int_{\mathbb{R}}\int_{\mathbb{R}}f(\tilde{x})g(\tilde{z})\frac{1}{2\pi\sqrt{1-\rho_{i\ell}^2}}\exp\left\{-\frac{\tilde{x}^2+\tilde{z}^2-2\rho_{i\ell}\tilde{x}\tilde{z}}{2(1-\rho_{i\ell}^2)}\right\}d\tilde{x}d\tilde{z} \\
    &\quad =  \int_{\mathbb{R}}\int_{\mathbb{R}}f(\tilde{x})g(\tilde{z})\frac{1}{2\pi}\exp\left\{-\frac{\tilde{x}^2+\tilde{z}^2}{2}\right\}\sum_{k\geq0}\rho_{i\ell}^{k}h_{k}(\tilde{x})h_{k}(\tilde{z})\\
    &\quad = \sum_{k\geq 0}\rho_{i\ell}^{k}\EE_{\tilde{x}\sim \cN(0,1)}[f(\tilde{x})h_{k}(\tilde{x})]\EE_{\tilde{z}\sim \cN(0,1)}[g(\tilde{z})h_{k}(\tilde{z})] \enspace.
  \end{align*}
  Plug in $f(\cdot) = \sigma_{>\iota}(\cdot)$ and $g(\cdot) = \sigma_{\leq \iota}(\cdot)$, we know that the first term in \eqref{eqn:R_i_l} is zero. Plug in with $f(\cdot) = g(\cdot) = \sigma_{>\iota}(\cdot)$, we know that that the second term in \eqref{eqn:R_i_l} satisfies
  \begin{align}
     \EE_{\t \sim \cN(0, \I_d)}\big[ \sigma_{> \iota}( x_i^\top \t ) \sigma_{> \iota}(x_\ell^\top \t ) \big] = \sum_{k > \iota} \rho_{i\ell}^k \alpha_k
  \end{align}
  and hence $|\mathbf{R}[i,\ell]| \leq |\rho_{i\ell}|^{\iota+1}$.  Now use concentration results on $|\rho_{i\ell}|$ derived in Lemma~\ref{lem:concentration}, and recall that for small $\delta>0$,
  $d(n) > n^{\frac{2}{\iota+1} + \delta}, $ we have
  \begin{align}
  \| \mathbf{R} \|_{\rm op} \leq n \cdot \max_{i,\ell\in [n], i\neq \ell} |\rho_{i\ell}|^{\iota+1} \precsim n \cdot \left( \frac{\log n}{d} \right)^{\frac{\iota+1}{2}} \leq  \left( \frac{\log n}{n^{ \delta}} \right)^{\frac{\iota+1}{2}} = o(1) \enspace.
  \end{align}
\end{proof}

\end{document}